\title{Near-Optimal Collaborative Learning in Bandits}
\author{%
  Cl\'{e}mence R\'{e}da \\
  Universit\'{e} Paris Cité,\\
  Inserm, NeuroDiderot,\\
  F-$75019$ Paris, France \\
  \texttt{clemence.reda@inria.fr} \\
  \And
  Sattar Vakili \\
  MediaTek Research, \\
  %Building 2010, \\
  Cambourne Business Park, \\
  %Cambourne, Cambridge, \\
  CB23 6DW, United Kingdom \\
   \texttt{sattar.vakili@mtkresearch.com} \\
   \And
 {E}milie Kaufmann\\
 Universit\'{e} de Lille, \\
 CNRS, Inria, Centrale Lille, \\
 UMR $9189$ CRIStAL, \\
 F-$59000$ Lille, France \\
  \texttt{emilie.kaufmann@univ-lille.fr} 
}
\newtheorem{lemma}{Lemma}
\newtheorem{theorem}{Theorem}
\newtheorem{definition}{Definition}
\newtheorem{corollary}{Corollary}
\newtheorem{remark}{Remark}
\def\Rr{\mathbb{R}}
\def\Nn{\mathbb{N}}
\def\E{\mathbb{E}}
\def\bE{\mathbb{E}}
\def\bP{\mathbb{P}}
\newcommand{\ind}{\mathds{1}}
\def\kl{\mathrm{kl}}
\def\Alt{\mathrm{Alt}}
\def\Ec{\mathcal{E}}
\def\Hc{\mathcal{H}}
\def\Ic{\mathcal{I}}
\def\Sc{\mathcal{S}}
\def\cA{\mathcal{A}}
\def\cE{\mathcal{E}}
\def\cH{\mathcal{H}}
\def\ie{\textit{i}.\textit{e}.,~ }
\def\eg{\textit{e}.\textit{g}.,~ }
\def\oracle{\widetilde{\mathcal{P}}^\star}
\def\Deltat{\widetilde{\Delta}}
\begin{document}

\maketitle

\begin{abstract}
	This paper introduces a general multi-agent bandit model in which each agent is facing a finite set of arms and may communicate with other agents through a central controller in order to identify --in pure exploration-- or play --in regret minimization-- its optimal arm. The twist is that the optimal arm for each agent is the arm with largest expected \emph{mixed} reward, where the mixed reward of an arm is a weighted sum of the rewards of this arm for \emph{all agents}. This makes communication between agents often necessary. This general setting allows to recover and extend several recent models 
	%recently proposed 
	for %(centralized) federated 
collaborative bandit learning, including the recently proposed federated learning with personalization \citep{shi2021federated}. In this paper, we provide new lower bounds on the sample complexity of pure exploration and on the
	regret. %We then exhibit an algorithm for pure exploration, which attains near-optimal sample complexity up to multiplicative constants.
	We then propose a near-optimal algorithm for pure exploration.
	%and regret. Our algorithms are
	%This algorithm relies on phased elimination but, unlike prior work, 
	This algorithm is based on phased elimination with two novel ingredients: a data-dependent sampling scheme within each phase, aimed at matching a relaxation of the lower bound. %uses a optimism-free %and computationally cheap 
	%approach to track lower bound-related quantities.} %data-dependent sampling scheme that is inspired by the lower bound.% A novel feature of this algorithm is that sampling is completely data-dependent. %We also discuss potential algorithms for regret. %TODO Empirical study, Top-m, tradeoff communication-collaboration
\end{abstract}

\section{Introduction}\label{section:introduction}

Collaborative %Federated 
learning is a general machine learning paradigm in which a group of agents collectively train a learning algorithm. %Federated learning has been mostly studied in the context of supervised learning~\citep{bonawitz2019towards}, and s
Some recent works have investigated how agents can efficiently perform sequential decision making in a collaborative context~\cite{zhu2021federated}. In particular, Shi et al. propose an interesting setting to tackle collaborative %federated 
bandit learning when some level of personalization is required  \cite{shi2021federated}. Personalization leads to the twist that each agent should play the best arm in a \emph{mixed model} which is obtained as a combination of her \emph{local model} with the local model of other agents. In this work, we introduce a more general model retaining this idea, that we call the \textcolor{black}{weighted collaborative bandit model}.

In this model, there are $M$ agents and a finite number of $K$ arms. When agent $m$ samples arm $k$ at time $t$, she gets to observe a \emph{local reward} $X_{k,m}(t)\;,$ which is drawn from a $1$-sub-Gaussian~\footnote{\textcolor{black}{A random variable $X$ is said to be $\sigma^2$-sub-Gaussian if, \textcolor{black}{for any} $%\forall
\lambda\in\Rr$, $\ln (\E[ e^{\lambda(X-\E [X])}])\le \sigma^2\lambda^2/2$}.} distribution of mean $\mu_{k,m}$, independently from past observations and from other agents' observations. However, this agent does not necessarily seek to maximize her local reward, but rather some notion of \emph{mixed reward}, related to the utility of that arm for other agents. 
More specifically, we assume that agents share a \emph{known} weight matrix  $W := (w_{n,m})_{n,m} \in [0,1]^{M\times M}\;,$ such that $\sum_{n \in [M]} w_{n,m}= 1$ for all $m\in [M]\;,$ where $[n] := \{1,2,\dots,n\}\;.$ The mixed reward at time $t$ for agent $m$ and arm $k$ is defined as a weighted average of the local rewards across all agents $X'_{k,m}(t) := \sum_{n=1}^{M} w_{n,m}X_{k,n}(t)\;,$ and its expectation, called the expected mixed reward, is 
\begin{equation*}
\mu'_{k,m} :=  \sum_{n=1}^{M} w_{n,m}\mu_{k,n}\;.
\end{equation*}
We denote by $k_{m}^{\star} := \arg\max_{k \in [K]} \mu'_{k,m}$ the arm with largest expected mixed reward for agent $m$, assumed unique. Besides the degenerated case in which $w_{n,m} = \ind{(n = m)}$, in which each agent is solving their own bandit problem in isolation, the agents need to \emph{communicate}~; \ie to share information about their local rewards to other agents for everyone to be able to estimate their expected mixed rewards. A strategy for an agent is defined as follows: at each time $t$, each agent $m$ samples an arm $\pi_m(t)$, based on the available information, and then observes a noisy local reward from this arm. She has also the option to communicate information (\eg empirical means of past local observations) to a central controller (or server), which will broadcast this information to all other agents. Just like in any multi-armed bandit model, several objectives may be considered, either related to maximizing (mixed) rewards --or, equivalently,  minimizing regret-- or identifying the best arms, while maintaining a reasonably small communication cost.

\textcolor{black}{The weighted collaborative bandit model encompasses different frameworks previously studied in the literature. Notably, the paper
\cite{shi2021federated} studies a special case in which, given a level of personalization $\alpha \in [0,1]$, the mixed reward is an interpolation between $\mu_{k,m}$ and the average of local rewards $\tfrac{1}{M}\sum_{n=1}^{M}\mu_{k,n}$
which amounts to choosing $w_{m,m} = \alpha + \tfrac{1-\alpha}{M}$ and $w_{n,m} = \tfrac{1-\alpha}{M}$ for $n\neq m\;.$ The authors consider the objective of minimizing the regret %, defined (also in our more general model) as 
%\begin{eqnarray*}\mathcal{R}(T) &:=& \bE \left[\sum_{m=1}^{M}\sum_{t=1}^{T} \left(\mu'_{k_m^\star,m} - \mu_{\pi_m(t),m}\right)\right]\;, 
%\end{eqnarray*}
while minimizing \textcolor{black}{for} the number of communication rounds, under the name federated multi-armed bandit with personalization\footnote{\textcolor{black}{Their setting, as well as ours, is neglecting some challenges typically addressed in (centralized) federated learning, such as privacy issues or dealing with communication interruption, this is why we prefer naming our framework "collaborative learning".}}.
%~\footnote{They actually give an alternative notion of regret to which the communication cost is added, but we prefer to distinguish the two measures in this work.}
In this paper, we mainly focus on the counterpart pure exploration problem in which agents should collaboratively identify their \textit{own} optimal arm in terms of expected mixed reward, with high confidence, and using as few exploration rounds as possible. This extends the well-studied fixed-confidence best arm identification problem~\citep{EvenDaral06,gabillon2012best,Jamiesonal14LILUCB} to the weighted collaborative
bandit setting.}

%For instance,~\citet{shi2021federated} introduce the setting of Personalized Federated Multi-Armed Bandits (PF-MAB), where they consider regret minimization in which the mixed reward come from an interpolation between the local reward and a global reward. This mixed reward corresponds to selecting \[w_{n,n} = \alpha + \frac{1-\alpha}{M} \ \ \ \text{and} \ \ \ w_{n,m} = \frac{1-\alpha}{M} \ \ \text{for } m \neq n\;,\] in our framework, where $\alpha \in [0,1]$. They proposed an algorithm based on phased elimination to minimize regret, which mostly relies on uniform sampling. They also present a heuristic improvement of their algorithm based on a different type of exploration, as well as a conjecture on a lower bound on regret for this problem. 

Another related setting is collaborative pure exploration~\citep{hillel2013distributed,tao2019collaborative,chen2021cooperative,zhu2021decentralized}, which considers $M$ agents solving the \textit{same} best arm %--or Top-$N$~\citep{karpov2020collaborative}-- i
identification problem. Most of these papers propose algorithms to solve this problem, while~\cite{tao2019collaborative} and~\cite{karpov2020collaborative} also prove lower bounds on sample complexity for the fixed-confidence and the fixed-budget best arm identification. %~\footnote{Fixed-budget best arm identification is the problem of finding the optimal arm within a fixed number of samples (called budget) with a minimal error probability.} 
Unlike our framework, ~\cite{chen2021cooperative} consider asynchronous agents, which can only sample at some times, whereas ~\cite{zhu2021decentralized,zhu2021federated} consider agents which can only communicate to some of the other agents. The goal of collaborative pure exploration is to reduce sample complexity at the cost of some communication rounds. Our model recovers the synchronous setting when considering $\mu_{k,m}=\mu_{k,m'}$ for all arm $k$ and agents $m,m'$ and $W=Id_{M}\;.$ 
%~\citet{mitra2021exploiting} studies a refinement of the latter problem, where each agent faces a disjoint subset of the set of $K$ arms, and the objective is to find the globally optimal arm. %They define a notion of heterogenity for an instance of arms, which they leverage to show that their algorithm only needs a single round of communication between agents in some cases. 
%This setting allows for agents to face a \textit{different} number of arms. However, if we further assume that each agent faces an equal number of arms, then our framework can be reduced to this setting by considering $w_{n,m}=1/M$ for any pair of agents $(m,n)\;.$
%\todoe{rewrite when we know what they are}

\textcolor{black}{Besides collaborative learning, the work of \citep{wu2016contextual} considers a similar weighted model but in a different, contextual bandit setting in which a central controller chooses in each round an
arm for the unique agent (corresponding to a sub-population) that arrives, and aim at minimizing regret. 
%\textcolor{black}{in pure exploration, but was also proposed for regret minimization in contextual bandits, where in each round one
%arm is chosen for the unique agent that arrives~\citep{wu2016contextual}, where there is no constraint on the amount of communication}
Finally, the paper \cite{russac2021b} considers a pure exploration task in which the value of an arm is the weighted average of its utility for $M$ distinct populations (agents). In this setting $w_{n,m} = \alpha_n$, so that all agents have a common best arm, but the proposed algorithms do not aim at a low communication cost.}

% We believe that collaborative 
% learning
\textcolor{black}{Collaborative learning}
in our general \textcolor{black}{weighted model} is also interesting beyond these examples. In particular, the work of~\cite{shi2021federated} mention\textcolor{black}{ed} possible applications to recommendation systems, for which one may want to go beyond uniformly personalized %federated
learning. The ``personalization'' part means that we favour the local rewards of agent $m$ over other agents' observations in the identification of optimal arm $k^\star_m$. But the introduction of a general weight matrix $W$ in our framework allows \textit{any} agent $m$ to consider \textit{any} linear combination of the other agents' observations, and then, different degrees of personalization across agents.

Such a setting could also be appropriate for adaptive clinical trials on $K$ therapies, run by $M$ teams who have access to different sub-populations of patients. \textcolor{black}{
In this context, each sub-population is typically aiming at finding their (local) best treatment. However,  solving their best arm identification in isolation may have a large sample complexity. If one is willing to assume that we have a weight matrix $W$ for which the best mixed arm of each agent coincides with its local best arm, \textit{i}.\textit{e}. $k^\star_m = \arg\max_{k \in [K]} \mu_{k,m}$ for any agent $m$, then solving best arm identification in the weighted collaborative bandit could have a much smaller sample complexity due to the sharing of information, while allowing the different clinical centers to communicate only once in a while. A first possibility to build such a weight matrix is to rely on a clustering of the different sub-populations so that the $\mu_{k,m}$ is supposed to be close to $\mu_{k,m’}$ (but not necessarily equal) for all $k$ and all agents $m,m’$ in the same cluster. Denoting by $C_m$ the cluster to which agent $m$ belongs, by setting $w_{m,n} = \mathds{1}(C_m = C_n)/|C_m|$ we would have the mixed mean of each agent be very close to their local means. Another possibility is to rely on a similarity function $S$ between sub-populations and define for all $n,m$ $w_{m,n}$ to be proportional to $S(m,n)$. This similarity function could be obtained prior to the learning phase by computing the similarities between subpopulation biomarkers, for instance.
}

\paragraph{Related work} Collaborative bandit learning has recently sparked wide interest in the multi-armed bandit literature. While some works do not deal with personalization~\citep{dubey2020differentially,shi2021almost,mitra2021exploiting}, others have for instance studied the integration of arm features in a modified %federated 
setting with personalization~\citep{huang2021federated}.
\textcolor{black}{An interesting kernelized collaborative pure exploration problem was recently studied by \cite{du2021collaborative}. In their model, both agents and arms are described by feature vectors, and there is a known kernel encoding the similarity between the mean reward of each (agent,arm) pair. This independent work follows a similar approach as ours and also propose a near-optimal phased elimination algorithm inspired by a lower bound, but the models and related lower bounds are significantly different.} 

%is recent topic in the literature 
%works in which different models have been proposed, most of them not dealing with personalization%, see, e.g.,
%~\citep{shi2021almost,mitra2021exploiting}. Regret minimization in federated learning with personalization has recently been studied by~\citet{huang2021federated}.

%Variations %A variation 
%on the personalized federated framework in~\citet{shi2021federated} have been studied~; for instance, %which aim at identifying 
%the identification of the optimal \textit{global} arm -- \ie across agents -- without personalization, which was studied in~\citet{shi2021almost}, or the application to linear bandits~\citep{huang2021federated}. %~\citep{huang2021federated,shi2021almost}.
%Extensions of the personalized federated framework in~\citet{shi2021federated}, which aim at identifying the optimal \textit{global} arm -- \ie across agents -- while minimizing for local agent regret, were studied in later works~\citep{huang2021federated,shi2021almost}.

In %federated 
bandits working in collaboration, the need for a small communication cost \textcolor{black}{indeed} makes algorithms based on \emph{phased eliminations} appealing. In such algorithms, agent(s) maintain a set of a
\textit{active} arms
that are candidate for being optimal, and potentially eliminate arms from this set at the end of each sampling phase. Adaptivity 
to the observed rewards (and, in our case, communication) is only needed between sampling phases, which are typically long. This type of structure has been used in various bandit settings, both for regret minimization or pure exploration objectives~\citep{auer2010ucb,hillel2013distributed,chen2017nearly,fiez2019sequential,shi2021federated,BKMP20}.
In some of these algorithms, including the one in~\cite{shi2021federated} which motivates this paper, the number of samples gathered from an arm which is active in some phase $r$ is fixed in advance. We believe that going beyond such a deterministic sampling scheme is crucial to achieve optimal performance with phased algorithms.
In order to achieve (near)-optimality, other phased algorithms rely on computing an oracle allocation from the optimization problem associated with a lower bound on the sample complexity~\citep{fiez2019sequential,du2021collaborative}. \textcolor{black}{In these works, based on this allocation, a total number of samples to collect in the current phase is computed, which depends on the identity of the surviving arms. The distribution of samples across arms for the current round is proportional to the oracle allocation, and is obtained through a rounding procedure. 
Compared to these works, our algorithm will show three distinctive features: an allocation inspired by a \emph{relaxation} of the lower bound, which does not \textcolor{black}{only} depend on the identity of surviving arms, % only, 
and an alternative to the rounding procedure.} 

%For instance, \cite{du2021collaborative} study a setting related to ours, where a set of agents communicate through a central server to solve their own best arm identification problems in a structured bandit model, where the expected reward for a given arm depends on its known feature vector. However, instead of considering a weight matrix $W$, they consider a class of smooth (kernel) functions which encodes reward similarity due to the proximity of both feature vectors and agents. The model function corresponding to the structure of the true model is unknown, whereas in our setting the weight matrix is known. 
%By this rounding procedure and solving the exact lower bound-related problem, they propose a near-optimal algorithm --up to logarithmic factors. 
%One might draw a parallel between their theoretical results and ours, which will be further discussed in Section~\ref{sec:federated_bai_algo}.
%~\citet{hong2022hierarchical}

\paragraph{Contributions} \textcolor{black}{The authors of \cite{shi2021federated} exhibit an algorithm using phased elimination for federated bandit learning with personalization}, and prove a logarithmic regret bound, whose dependency in the parameters of the problem is conjectured to be sub-optimal. They also propose a heuristic improvement, based on a more adaptive exploration within each phase. %, as well as a conjecture on the regret lower bound. % on regret.  
 In this work, we take a step further in identifying the problem-dependent complexity of bandit learning in our novel \textcolor{black}{weighted collaborative bandit model  --which includes the setting  of ~\cite{shi2021federated} as a special case--} both from the pure exploration and the regret perspective. We propose new information theoretic lower bounds, and a recipe to design algorithms (nearly) matching those for pure exploration. Our main algorithmic contribution is %an algorithm 
 for \textcolor{black}{weighted} collaborative %federated 
 best arm identification. Our phased elimination-based algorithm %which 
 achieves minimal exploration cost up to some logarithmic multiplicative factors, while using a constant amount of communication rounds. It relies on a \textcolor{black}{novel} data-dependent sampling scheme, \textcolor{black}{which renders its analysis trickier}. %and thus its analysis requires novel techniques. 
 The structure of the algorithm can easily be extended to Top-$N$ identification, that is, where each agent has to identify her own $N$ best arms (instead of her best for $N=1$) with respect to mixed rewards. We further compare our novel regret lower bound to what was conjectured in~\cite{shi2021federated} for the particular case of \textcolor{black}{federated learning with personalization.}%personalized %federated

\paragraph{Notation} For both best arm identification and regret minimization, our complexity terms feature the (mixed) gaps of each agent $m$ and arm $k$, defined by 
\[\Delta'_{k,m} := \left\{\begin{array}{cl}
       \mu'_{k^\star_m,m}-\mu'_{k,m} & \text{if } k\neq k_{m}^\star\;,\\
       \min_{k \neq k_m^\star} \Delta_{k,m}' & \text{otherwise}\;.
                          \end{array}
\right.\]
%In the remainer of the paper, we assume that the means are bounded in $[0,1]$, so that the gaps satisfy the same property.

%% TODO Add experiments

\section{\textcolor{black}{Collaborative} %Federated 
Best-Arm Identification}\label{sec:federated_bai}

%\todoe{do we need the definition $k^{\star} := (k_{m}^{\star})_{m \in [M]}$?  I removed it.}
%Using an elimination-based algorithm for (fixed-confidence) best arm identification requires each agent to be able to identify its best arm $k_m^\star$. We are particularly interested in the number of exploration steps it takes to \emph{all} agents in order to identify the vector $k^\star$. For the sake of clarity, in the remainder of the paper, any $k$ will refer to an arm $k \in [K]$, and any $n,m$ will refer to an agent in $[M]$, unless otherwise specified.

The goal of \textcolor{black}{collaborative} %federated 
best-arm identification is that each agent $m$ identifies its optimal arm $k_m^\star$ by sampling the arms as little as possible and with few communication rounds.  Formally, a \textcolor{black}{collaborative} %federated 
Best Arm Identification (BAI) algorithm consists of a sampling rule $\pi_m$ for each agent $m$, such that, at time $t$, either arm $\pi_m(t) \in [K]$ is sampled by $m$, or $\pi_m(t)=0$~; in that case, instead of picking an arm at time $t$, we allow the agent to remain idle, and not to select an arm.~\footnote{Note that, in a regret setting, an idle agent may exploit its empirical best arm.} Similarly to the communication model studied in~\cite{tao2019collaborative,shi2021federated}, communication only happens at the end of local sampling rounds for all agents, when all agents are idle at the same time. Besides the sampling rule,  
%$\pi_{m}(t)$ for each agent, which now takes value in $\{0,1,\dots,K\}$ 
the BAI algorithm uses a stopping rule $\tau$ which determines when exploration is over \emph{for all agents}. The end of exploration is decided by the central server. Then, at time $\tau$, 
%the end of exploration, 
each player outputs a guess for its optimal arm with respect to mixed rewards, denoted by $\hat{k}_m$.

Our goal is to construct a $\delta$-correct strategy $\cA=(\pi,\tau,\hat{k})$, which satisfies, for any model $\mu \in \Rr^{K \times M}$, 
\[\mathbb{P}^{\cA}_{\mu}\left( \forall m \in [M],  \hat{k}_m  = k^{\star}_{m} \right) \geq 1-\delta\;,\]
while achieving a small  \emph{exploration cost} (e.g. in high probability or in expectation)
\begin{eqnarray*}
\text{Exp}_{\mu}(\cA) := \sum_{m=1}^{M}\sum_{k=1}^{K} N_{k,m}(\tau)\;,
\label{eq:exploration_cost}\end{eqnarray*}
where $N_{k,m}(t) := \sum_{s=1}^{t} \ind_{(\pi_m(s) = k)}$ is the number of selections of arm $k$ by agent $m$ up to time $t$, and a small \emph{communication cost}, defined as 
\[\text{Com}_{\mu}(\cA) := \sum_{t=1}^{\tau} \ind{\left(\Ic_{t}\right)}\;,\]
where $\Ic_t$ is the event that some information is shared between agents at round $t$.  

%\todoe{I introduced a notation for the communication cost, we may use it later}
%\todoc{Not used anywhere for now}

In our setting, we do not put constraints on the type of information that is exchanged in each communication round --which can be interesting when we consider privacy issues~\citep{dubey2020differentially,zhu2021federated}-- nor on the lengths of the messages. Each communication round has a unit cost. In a communication round, all agents send messages to the central server (\eg estimates of their local means) and the server can send back arbitrary quantities or instructions (\eg how many times each arm should be sampled in the next exploration phase, and when to communicate next). 

Moreover, contrary to the works of~\cite{hillel2013distributed,tao2019collaborative} on collaborative learning, we do not look at strategies \textcolor{black}{explicitly} minimizing for the number of communication rounds. Instead, our approach consists in proving a lower bound on the smallest possible exploration cost of a $\delta$-correct algorithm which would communicate at every round~; and then, finding an algorithm \textcolor{black}{for} which exploration cost matches this lower bound, while suffering a reasonable communication cost. %\textcolor{black}{Our setting neglects some difficulties of the related \textit{federated} setting, such as %which are indeed important for applications in 
%privacy and distributed learning-related issues~\citep{dubey2020differentially}, in order to focus on the intrinsic difficulty of relying on other agents' rewards to fulfill their own objective.} %identify their own best arm.}

%\todoe{I tried to summarize above two remarks of Clémence. Some missing stuff are still commented}
%\begin{remark} 
%	The question of the actual size (in number of bits) of the messages is crucial in practice, especially in parallel settings. However, here, we consider the case where the messages can be of arbitrary size, and each message has a constant cost $C=1$. Note that the messages we deal with only contain local empirical means for a given agent. If each empirical mean has an accuracy of $b$ bits, then each message has a size of $\mathcal{O}(Kb)$ bits, and is transmitted $M$ times -- once to the central server, $M-1$ times from the central server to all other agents.
%\end{remark}
%\begin{remark}
%	We have defined the exploration time $\tau$ to be the minimal time such as \textit{all} agents have identified their best arm. One might wonder why we considered the exploration cost in Equation~\ref{eq:exploration_cost} to be the sum of numbers of samples needed for each agent, instead of considering for instance $\max_{m \in [M]} \sum_{k \in [K]} \bE_{\mu}^{\cA}[N_{k,m}(\tau)]$.
%\end{remark}

\section{\textcolor{black}{Lower Bound}}\label{sec:lower_bound_bai}

%\paragraph{Lower Bound} 
We prove the following lower bound on the exploration cost of an algorithm in which all agents communicate to the central server their latest observation as soon as they received it. %at every round. 
It holds for Gaussian rewards with variance $\sigma^2=1\;,$ meaning that the reward from an arm $k$ observed at time $t$ by agent $m$ will be $X_{k,m}(t) = \mu_{k} + \varepsilon_t\;,$ where $\varepsilon_t \sim \mathcal{N}(0,1)\;.$ We further assume that the weight matrix $W$ satisfies $w_{m,m} \neq 0$ for any agent $m \in [M]\;.$ 

%\todoe{I didn't figure out how to get rid of this assumption}

\begin{theorem}\label{thm:LBBAI} Let $\mu$ be a fixed matrix of means in $\Rr^{K \times M}$. For any $\delta \in (0,1/2]$, let $\cA$ be a $\delta$-correct algorithm under which each agent communicates each reward to the central server after it is observed, and let us denote for any $k \in [K]\;,$ $m \in [M]\;,$ $\tau_{k,m} := \bE_{\mu}^{\cA}\left[ N_{k,m}(\tau) \right]\;,$ where $\tau$ is the stopping time. For any $m \in [M]$ and $k\neq k_m^\star$,  it holds that  
	\[\sum_{n}w_{n,m}^2\left(\frac{1}{\tau_{k,n}} + \frac{1}{\tau_{k^\star_m,n}}\right) \leq \frac{\left(\Delta'_{k,m}\right)^2/2}{\log(1/(2.4\delta))} \;,\]
and therefore $\bE_{\mu}\left[\text{Exp}_{\mu}(\cA)\right] \geq T^\star_{W}(\mu\textcolor{black}{'}) \log\left(\frac{1}{2.4\delta}\right)$, where 

	\begin{equation*}
		\begin{split}
		T_{W}^\star(\mu) := & \min_{t \in (\Rr^+)^{K \times M}} \left\{\sum_{(k,m) \in [K] \times [M]} t_{k,m}  : \forall m, k\neq k^\star_m, \sum_{n \in [M]}w_{n,m}^2\left(\frac{1}{t_{k,n}} + \frac{1}{t_{k^\star_m,n}}\right) \leq \frac{\left(\Delta'_{k,m}\right)^2}{2}\right\} \;.\\ 
		\end{split}
	\end{equation*}
\end{theorem}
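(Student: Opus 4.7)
The plan is to apply the classical change-of-measure technique (Kaufmann, Cappé, Garivier), specialized to pairs (agent, suboptimal arm) and exploiting the linear structure of the mixed means. Fix $m \in [M]$ and $k \neq k_m^\star$. I would build an alternative Gaussian model $\lambda \in \Rr^{K\times M}$ by perturbing only the rows indexed by $k$ and $k_m^\star$: set $\lambda_{k,n} = \mu_{k,n} + a_n$, $\lambda_{k_m^\star,n} = \mu_{k_m^\star,n} - b_n$, and $\lambda_{k',n} = \mu_{k',n}$ otherwise, where $(a_n)_n$ and $(b_n)_n$ are free parameters. For $k$ to become the mixed best arm of agent $m$ under $\lambda$, a sufficient condition is $\sum_n w_{n,m}(a_n + b_n) \geq \Delta'_{k,m}$, since $\mu'_{k,m}(\lambda) - \mu'_{k_m^\star,m}(\lambda) = -\Delta'_{k,m} + \sum_n w_{n,m}(a_n+b_n)$.

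Since every reward is communicated to the central server, the filtration is the same as in a centralized bandit on $KM$ arms indexed by $(k',n)$. The fundamental inequality of Kaufmann--Cappé--Garivier applied to the event $E = \{\hat{k}_m = k_m^\star\}$ (which has $\bP_\mu(E) \geq 1-\delta$ and $\bP_\lambda(E) \leq \delta$ by $\delta$-correctness, since $k_m^\star$ is not optimal for $m$ under $\lambda$) yields
\[
\sum_{n=1}^M \Bigl(\tau_{k,n}\,\tfrac{a_n^2}{2} + \tau_{k_m^\star,n}\,\tfrac{b_n^2}{2}\Bigr) \;\geq\; \kl(1-\delta,\delta) \;\geq\; \log\!\Bigl(\tfrac{1}{2.4\delta}\Bigr),
\]
using that the per-observation KL between unit-variance Gaussians with means differing by $\Delta$ is $\Delta^2/2$, and only the rows $k$ and $k_m^\star$ contribute. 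Now minimize the left-hand side over all $(a,b)$ satisfying the alternative constraint. By Cauchy--Schwarz,
\[
(\Delta'_{k,m})^2 \;\leq\; \Bigl(\sum_n w_{n,m}(a_n+b_n)\Bigr)^2 \;\leq\; \Bigl(\sum_n \tau_{k,n} a_n^2 + \tau_{k_m^\star,n} b_n^2\Bigr)\Bigl(\sum_n \tfrac{w_{n,m}^2}{\tau_{k,n}} + \tfrac{w_{n,m}^2}{\tau_{k_m^\star,n}}\Bigr).
\]
Combining the two displays gives exactly the per-pair constraint claimed.

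Finally, rescale $t_{k,n} := \tau_{k,n}/\log(1/(2.4\delta))$; the inequality just proved shows $(t_{k,n})$ is feasible for the optimization defining $T^\star_W(\mu')$, hence $\sum_{k,n}t_{k,n} \geq T^\star_W(\mu')$, which after multiplication by $\log(1/(2.4\delta))$ yields the claimed lower bound on $\bE_\mu[\mathrm{Exp}_\mu(\cA)] = \sum_{k,n}\tau_{k,n}$. The main subtlety is the choice of alternative: a naive perturbation of only the $k$-th row (or only of $\mu_{k,m}$) would give a weaker bound missing the symmetric $1/\tau_{k,n}+1/\tau_{k_m^\star,n}$ structure; allowing simultaneous perturbations of both rows and all agents' entries, then optimizing via Cauchy--Schwarz, is what produces the tight, W-dependent bound. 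A secondary technical point is verifying $w_{m,m}\neq 0$ is enough to ensure the alternative can be realized with a bounded perturbation (otherwise agent $m$'s mixed mean would not depend on its own observations, and the argument would still go through but via other coordinates).
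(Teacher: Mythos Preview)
Your proposal is correct and follows essentially the same route as the paper: the same change-of-measure inequality, the same family of alternatives perturbing both rows $k$ and $k_m^\star$ across all agents. The only difference is that you compute the inner minimization via Cauchy--Schwarz whereas the paper writes down the Lagrangian/KKT solution explicitly; these are equivalent, and your version is arguably cleaner. Two small points to sharpen: (i) as written, ``combining the two displays'' does not literally yield the claim, since both displays are \emph{lower} bounds on $\sum_n(\tau_{k,n}a_n^2+\tau_{k_m^\star,n}b_n^2)$; what you actually need is that Cauchy--Schwarz is \emph{tight}, i.e., there exists a feasible $(a,b)$ (namely $a_n\propto w_{n,m}/\tau_{k,n}$, $b_n\propto w_{n,m}/\tau_{k_m^\star,n}$) attaining equality, and then the first display applied to that specific choice gives the bound; (ii) the role of the hypothesis $w_{m,m}\neq 0$ in the paper is not to make the perturbation bounded but to guarantee $\tau_{k,n}>0$ for every $(k,n)$ (via a separate single-coordinate change of measure), which is precisely what makes the factor $\sum_n w_{n,m}^2(1/\tau_{k,n}+1/\tau_{k_m^\star,n})$ finite and the minimization well-posed.
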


%\todoe{The argmin should be on positive t's right?}

The proof, given in Appendix~\ref{subapp:bai_lower_bound}, uses standard change-of-distribution arguments, together with classical results from constrained optimization. Note that, for $M=1$, we recover the complexity of best arm identification in a Gaussian bandit model~\citep{garivier2016optimal}. 

\textcolor{black}{
\paragraph{Computing the complexity term}
The optimization problem which defines $T^\star_{W}(\mu)$ belongs to the family of disciplined convex optimization problems, and can  be numerically solved using available solvers, such as CVXPY~\citep{agrawal2018rewriting,diamond2016cvxpy}. We now illustrate, on a 
%simple
\textcolor{black}{small} example, the possible reduction in exploration cost that can be obtained by solving weighted collaborative best arm identification instead of 
%seperate best arm identification
\textcolor{black}{$M$ parallel best arm identification problems}. We consider $K=M=2$ and a similarity $S(1,2) = 0.9$ between the two agents, which %by normalization 
yields the \textcolor{black}{following normalized} weight matrix
\[W = \frac{1}{1.9}\begin{bmatrix} 1 & 0.9\\ 0.9 & 1  \end{bmatrix}\:. \]
%With a
\textcolor{black}{Considering the following} matrix of 
%means
\textcolor{black}{expected rewards}
\[ \mu = \begin{bmatrix} 0.9 & 0.8\\ 0.1 & 0.5 \end{bmatrix} \;,\]
for which arm 1 is the local best arm for both agents and is also their best mixed arm, we obtain $T^\star_{W}(\mu) \approx 28$. However, if each agent solves its own best arm identification problem in isolation (which amounts to using $W = \text{Id}_2$), the resulting exploration cost scales with   $T^\star_{\text{Id}_2}(\mu) \approx 101 > 3T^\star_{W}(\mu)$.
}

\paragraph{From lower bounds to algorithms} In single-agent pure exploration tasks, lower bounds are usually guidelines to design optimal algorithms, as they allow to recover an oracle allocation (\ie the $\arg\min$ for $t\in (\Rr^+)^{K,M}$ in the definition of $T_{W}^\star(\mu)$) which algorithms can try to achieve by using some tracking~\citep{garivier2016optimal,du2021collaborative,russac2021b}\textcolor{black}{. Yet, these approaches may be computationally expensive, as they solve the optimization problem featured in the lower bound in every round.}

%Suggested approaches to tackle this issue are online optimization algorithms~\citep{Degenne19GameBAI,reda2021dealing}}\textcolor{black}{, which optimistically update a no-regret online learner using the current estimate of the oracle allocation. %However, using optimism might lead to both theoretically~\citep{lattimore2017end} and empirically~\citep{reda2021dealing} suboptimal algorithms

In the next section, we will propose an alternative approach for our \textcolor{black}{collaborative} %federated 
setting, which exploits the knowledge of the lower bound within a phased elimination algorithm. This is crucial to maintain a small communication cost \textcolor{black}{and also permit to reduce the computational complexity compared to a pure tracking approach}. Our algorithm will rely on a \textit{relaxed} complexity term $\widetilde{T}^\star_W(\mu)$, which is within constant factors of $T^\star_W(\mu)$, as proved in Appendix~\ref{subapp:proofs_federated_bai}.

\begin{lemma} \label{lem:ttilde_optimal} Introducing the quantity
	\begin{equation*}
		\begin{split}
			\widetilde{T}_W^\star(\mu) := & \min_{t \in (\Rr^+)^{K \times M}} \left\{\sum_{(k,m) \in [K] \times [M]} t_{k,m}  \ : \ \forall m, \forall k \in [K], \sum_{n \in [M]}\frac{w_{n,m}^2}{t_{k,n}} \leq \frac{\left(\Delta'_{k,m}\right)^2}{2}\right\} \;,\\
		\end{split}
	\end{equation*}
it holds that $\widetilde{T}_W^\star(\mu) \leq T_W^\star(\mu) \leq 2 \widetilde{T}_W^\star(\mu)$.
\end{lemma}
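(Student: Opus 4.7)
}
The plan is to prove the two inequalities by direct comparison of feasible sets, exploiting the fact that both problems minimize the same linear objective $\sum_{k,m} t_{k,m}$, so any inclusion (possibly up to rescaling) between the feasible sets immediately yields the corresponding inequality on the optima.

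For the lower bound $\widetilde{T}_W^\star(\mu) \leq T_W^\star(\mu)$, I would take any $t$ feasible for $T_W^\star(\mu)$ and show that the same $t$ is feasible for $\widetilde{T}_W^\star(\mu)$. The key observation is that the $T_W^\star$ constraint is a sum of two nonnegative terms, namely $\sum_n w_{n,m}^2 / t_{k,n}$ and $\sum_n w_{n,m}^2 / t_{k_m^\star,n}$, whose sum is bounded by $(\Delta'_{k,m})^2/2$. Each individual term is therefore bounded by $(\Delta'_{k,m})^2/2$. This immediately gives the relaxed constraint of $\widetilde{T}_W^\star$ for every $k \neq k_m^\star$. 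For the remaining relaxed constraint at $k = k_m^\star$, I would note that the bound $\sum_n w_{n,m}^2 / t_{k_m^\star,n} \leq (\Delta'_{k,m})^2/2$ holds for all $k \neq k_m^\star$, so minimizing the right-hand side over such $k$ yields $\sum_n w_{n,m}^2 / t_{k_m^\star,n} \leq (\min_{k \neq k_m^\star} \Delta'_{k,m})^2/2 = (\Delta'_{k_m^\star,m})^2/2$, which is exactly the relaxed constraint by the convention of the paper.

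For the upper bound $T_W^\star(\mu) \leq 2\widetilde{T}_W^\star(\mu)$, I would take an optimal solution $\tilde t^\star$ of the relaxed problem and consider the scaled allocation $t := 2\tilde t^\star$. For any $m$ and $k \neq k_m^\star$ one then computes
\[
\sum_n w_{n,m}^2\left(\frac{1}{t_{k,n}} + \frac{1}{t_{k_m^\star,n}}\right) = \frac{1}{2}\sum_n \frac{w_{n,m}^2}{\tilde t^\star_{k,n}} + \frac{1}{2}\sum_n \frac{w_{n,m}^2}{\tilde t^\star_{k_m^\star,n}} \leq \frac{1}{2}\cdot\frac{(\Delta'_{k,m})^2}{2} + \frac{1}{2}\cdot\frac{(\Delta'_{k_m^\star,m})^2}{2} \leq \frac{(\Delta'_{k,m})^2}{2},
\]
using the relaxed constraints and the fact that $\Delta'_{k_m^\star,m} = \min_{j\neq k_m^\star}\Delta'_{j,m} \leq \Delta'_{k,m}$ for every $k \neq k_m^\star$. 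Hence $2\tilde t^\star$ is feasible for $T_W^\star(\mu)$, and summing coordinates yields the claim.

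I do not expect any serious obstacle: both directions are structural, relying only on nonnegativity of the terms, the convention for $\Delta'_{k_m^\star,m}$, and the linearity of the objective. The only subtle point is remembering that the $k = k_m^\star$ constraint in the relaxed problem is meaningful (it is not vacuous) and is exactly what allows the factor $2$ to suffice in the reverse inequality; this is what forces the inclusion of all $k \in [K]$ in the definition of $\widetilde{T}_W^\star$.
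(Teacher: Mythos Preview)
Your proof is correct and follows essentially the same approach as the paper: the paper establishes $\widetilde{T}_W^\star(\mu) \leq T_W^\star(\mu)$ by the feasible-set inclusion $\mathcal{C} \subseteq \widetilde{\mathcal{C}}$ (which you spell out in full, including the $k=k_m^\star$ case), and proves $T_W^\star(\mu) \leq 2\widetilde{T}_W^\star(\mu)$ by showing that $2\widetilde{\tau}$ lies in $\mathcal{C}$ via exactly the same computation and use of $\Delta'_{k_m^\star,m}=\min_{k'\neq k_m^\star}\Delta'_{k',m}$.
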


\textcolor{black}{
Compared to $T_W^\star(\mu)$, a nice feature of $\widetilde{T}_W^\star(\mu)$ is that its constraint set does not depend on the knowledge of $(k_m^\star)_{m \in [M]}$, which will allow us to design algorithms that do not suffer too much from bad empirical guesses for $k_m^\star$ in early phases. We further remark that the computation of $\widetilde{T}_W^\star(\mu)$ and that of its associated oracle allocation (see Definition~\ref{def:oracle}) are slightly easier than for $T_W^\star(\mu)$. Indeed, the optimization problem which defines $\widetilde{T}_{W}^\star(\mu)$ 
can be decoupled across arms. Computing for every arm $k \in [K]\;$ the vector
	\[ \widetilde{\tau}^{k} = \underset{{\tau^k \in (\Rr^+)^{M}}}{\text{argmin}} \ \sum_{(k,m) \in [K] \times [M]} \tau^k_{m} \text{ s.t. } \forall m, \sum_{n \in [M]}\frac{w_{n,m}^2}{\tau^k_{n}} \leq \frac{\left(\Delta'_{k,m}\right)^2}{2} \;,\]
we obtain the argmin in $\widetilde{T}_W^\star(\mu)$ by setting $(t_{k,m})_{k,m} = (\widetilde\tau^k_m)_{k,m}$.
The computation of $\widetilde{T}_{W}^\star(\mu)$ can therefore be done by solving $K$ disciplined optimization problems (\textit{e}.\textit{g}., with CVXPY~\citep{agrawal2018rewriting,diamond2016cvxpy}) involving $M$ variables, instead of one optimization problem with $K \times M$ variables. 
}

%\todoc{As far as I know, CVXPy is not a solver, it's a wrapper for other solvers such as ECOS or MOSEK~\citep{mosek} (which is the one I use in practice).}

\begin{definition}\label{def:oracle}
For any $\Delta \in (\Rr^+)^{K \times M}$, the oracle $\oracle(\Delta)$ is% the solution to 
	\[\underset{(\tau_{k,m})_{k,m} \in (\Rr^+)^{K \times M}}{\arg\min} \ \sum_{k,m} \tau_{k,m} \text{ s.t.}  \forall m \in [M], \forall k \in [K], \ \sum_{n \in [M]}\frac{w_{n,m}^2}{\tau_{k,n}} \leq \frac{\left(\Delta_{k,m}\right)^2}{2}\;.\]
\end{definition}

	With this notation, observe that $\widetilde{T}^\star(\mu\textcolor{black}{'}) = \sum_{k,m} \tau_{k,m}\;,$ where $(\tau_{k,m})_{k,m} \in \oracle\left(\Delta'\right)$. %$(\tau_{k,m})_{k,m} \in \oracle\left(\left(\left(\Delta_{k,m}'\right)_{k,m}\right)\right)$. 
	The following lemma will be useful to compare values from different oracle problems. The full lemma along with its proof is available in Appendix~\ref{subapp:proofs_federated_bai}. % (and not only for the specific case where $c_{k,m} := (\Delta'_{k,m})^2/2$):

%\todoe{Put this in proof section?}
%\todoc{Agree}

\begin{lemma}\label{lem:12}
	%If $\alpha c_{k,m} \leq c_{k,m}' \leq \beta c_{k,m}$ then the value of $\mathcal{\widetilde{P}}^\star((c_{k,m}'))$ is between that of $\mathcal{\widetilde{P}}^\star((c_{k,m}))$ divided by $\beta$ and the same divided by $\alpha$. 
	Consider $\Delta\;,$ $\Delta' \in (\Rr^+)^{K \times M}$, such that $\tau \in \oracle(\Delta)$ and $\tau' \in \oracle(\Delta')$. Moreover, assume that there is a positive constant $\beta$ such that: $\forall k,m, \Delta'_{k,m} \leq \beta \Delta_{k,m}$. Then
	\[ \frac{1}{\beta^2} \sum_{k,m} \tau_{k,m} \leq \sum_{k,m} \tau'_{k,m}\;.\]
\end{lemma}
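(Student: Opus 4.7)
The proof is a direct scaling argument, exploiting the fact that the constraint in $\widetilde{\mathcal{P}}^\star(\Delta)$ is homogeneous of degree $-1$ in $\tau$ and degree $2$ in $\Delta$. The idea is to construct, out of the optimal allocation $\tau'$ for $\widetilde{\mathcal{P}}^\star(\Delta')$, a feasible (but not necessarily optimal) allocation for $\widetilde{\mathcal{P}}^\star(\Delta)$, and then invoke the optimality of $\tau$ to compare the two sums.

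\textbf{Step 1: construct a candidate allocation.} I would define $\widetilde{\tau}_{k,m} := \beta^2 \tau'_{k,m}$ for every $k \in [K]$ and $m \in [M]$. Since $\tau'_{k,m} > 0$ and $\beta > 0$, this is a valid element of $(\mathbb{R}^+)^{K \times M}$.

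\textbf{Step 2: verify feasibility for $\widetilde{\mathcal{P}}^\star(\Delta)$.} For each $m \in [M]$ and each $k \in [K]$, I compute
\[
\sum_{n \in [M]} \frac{w_{n,m}^2}{\widetilde{\tau}_{k,n}} \;=\; \frac{1}{\beta^2}\sum_{n \in [M]} \frac{w_{n,m}^2}{\tau'_{k,n}} \;\leq\; \frac{1}{\beta^2}\cdot\frac{(\Delta'_{k,m})^2}{2} \;\leq\; \frac{\Delta_{k,m}^2}{2},
\]
where the first inequality uses the feasibility of $\tau'$ in $\widetilde{\mathcal{P}}^\star(\Delta')$ and the second uses the hypothesis $\Delta'_{k,m} \leq \beta\, \Delta_{k,m}$. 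Hence $\widetilde{\tau}$ lies in the feasible set of the optimization problem defining $\widetilde{\mathcal{P}}^\star(\Delta)$.

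\textbf{Step 3: conclude by optimality of $\tau$.} Since $\tau$ minimizes the total allocation over this feasible set,
\[
\sum_{k,m} \tau_{k,m} \;\leq\; \sum_{k,m}\widetilde{\tau}_{k,m} \;=\; \beta^2 \sum_{k,m} \tau'_{k,m},
\]
and dividing by $\beta^2$ yields the claim. There is no real obstacle here; the only point requiring a moment's care is the direction of the scaling (the factor $\beta^2$ arises because the right-hand side of the constraint is quadratic in $\Delta$ while the left-hand side is inversely linear in $\tau$), and the fact that the argument does not need uniqueness of the minimizer: any minimizer of the problem with gaps $\Delta$ has the same total cost, which is bounded by that of the feasible $\widetilde{\tau}$.
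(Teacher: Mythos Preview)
Your proof is correct and essentially identical to the paper's own argument: the paper also sets $\tau''_{k,m} = \beta^2 \tau'_{k,m}$, checks that this is feasible for $\oracle(\Delta)$ via the same chain of inequalities, and concludes by minimality of $\tau$.
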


\section{A Near-Optimal Algorithm For Best Arm Identification}\label{sec:federated_bai_algo}

We now introduce an algorithm for \textcolor{black}{collaborative} %federated 
best-arm identification, called \textcolor{black}{W-CPE}%FPE
-BAI for Weighted \textcolor{black}{Collaborative} %Federated 
Phased Elimination, stated as Algorithm~\ref{algo:fpe_bai}. To present an analysis of this algorithm, we assume that, for any $k,m \in [K] \times [M]\;,$ $\mu_{k,m} \in [0,1]\;,$ %Recall the definition of a $\sigma^2$-sub-Gaussian random variable: a random variable for which the moment generating function is bounded by that of a centered Gaussian with variance $\sigma^2$. 
\textcolor{black}{and that} %We assume that 
local rewards $(X_{k,m}(t))_{k,m,t}$ are $1$-sub-Gaussian.

\textcolor{black}{W-CPE}%FPE
-BAI proceeds in phases, indexed by $r\;.$ In phase $r\;,$ we let $B_m(r)$ be the set of active arms for agent $m\;,$ and $B(r) = \cup_{m} B_m(r)$ be the set of arms that are active for at least one agent. The algorithm maintains proxies for the gaps  $(\Deltat_{k,m}(r))_{k \in [K],m \in [K]}$ that are halved %divided by 2 in each phase 
at the end of each phase for arms that remain active. At the beginning of each round, the oracle allocation $t(r)$, with respect to the proxy gaps, is computed, as well as the number of new samples $d_{k,m}(r)$ that player $m$ should get from arm $k$ in phase $r\;.$ $d_{k,m}(r)$ is defined such that the total number of selections of arm $k$ by agent $m$ becomes close to (a quantity slightly larger than) $t_{k,m}(r)\log(1/\delta)\;.$ We observe that any arm $k\notin B(r)$ will not get any new samples in phase $r\;,$ as the proxy gaps $(\Deltat_{k,n}(r))_n$ are identical to those in the previous phase~; therefore $t_{k,n}(r) = t_{k,n}(r-1)\;.$ \textcolor{black}{In contrast to prior works, where the allocation in each round only depends on the identity of the surviving arms and the round index~\citep{fiez2019sequential,du2021collaborative}, in \textcolor{black}{W-CPE}%FPE
-BAI \textcolor{black}{it also depends on when the arms have been eliminated (which condition the value of their frozen \textcolor{black}{proxy} gaps)}.}

After each agent $m$ samples arm $k$ $d_{k,m}(r)$ times, they send their local means $\hat{\mu}_{k,m}(r)$ to the central server, which computes the mixed mean estimates $\hat{\mu}'_{k,n}(r) := \sum_{m=1}^{M}w_{n,m} \hat{\mu}_{k,m}(r)\;.$ The active sets $(B_m(r))_{m \in [M]}$ of all agents are then updated by removing arms whose mixed means are too small. As in several prior works~\citep{kaufmann2013information,shi2021federated}, we rely on confidence intervals to perform these eliminations. However, constructing confidence intervals on the mixed means --which are linear combinations of the local means-- under our adaptive sampling rule is more challenging than when the number of samples from an active arm in phase $r$ is fixed in advance --which is the case for instance in the algorithm in~\cite{shi2021federated}. The width of our confidence intervals scales with the following quantity

\begin{definition}\label{def:omega}
%For any arm $k \in [K]$, agent $m \in [M]$, and round $r \geq 0$, we define
	For any $k,m$, and round $r \geq 0$, we define
%\[\Omega_{k,m}(r) := \sqrt{\beta_{\delta}(r)\sum_{n=1}^M \frac{w_{n,m}^2}{n_{k,m}(r)}} \;,\]
	\[\Omega_{k,m}(r) := \sqrt{\beta_{\delta}(n_{k,\cdot}(r))\sum_{n=1}^M \frac{w_{n,m}^2}{n_{k,n}(r)}} \;,\]
	where $n_{k,m}(r)$ is the number of times arm $k$ was selected by agent $m$ by the end of phase $r$ (included), and 
	%$\beta_{\delta}$
	$N \mapsto \beta^{\delta}(N)$ is a threshold function defined for any $N \in (\Rr^+)^{M}\;.$
\end{definition}

\begin{algorithm*}[tb]
\caption{\textcolor{black}{Weighted Collaborative} %Federated 
Phased Elimination for Best Arm Identification (\textcolor{black}{W-CPE}%FPE
-BAI)}
   \label{algo:fpe_bai}
\begin{algorithmic}
   \STATE {\bfseries Input:} $\delta \in (0,1)$, $M$ agents, $K$ arms, weights matrix $W$
   \STATE Initialize $r \gets 0$, $\forall k,m, \widetilde{\Delta}_{k,m}(0) \gets 1, n_{k,m}(0) \gets 1$, $\forall m, B_m(0) \gets [K]$
   \STATE Draw each arm $k$ by each agent $m$ once
   \REPEAT
	\STATE \textcolor{gray}{\textcolor{black}{\# Central server}}
   \STATE $B(r) \gets \bigcup_{m \in [M]} B_m(r)$
	\STATE Compute $t(r) \gets \oracle\left(\left(\sqrt{2} \Deltat_{k,m}(r)\right)_{k,m}\right)$ %and define $n_{k,m}(r) \gets \lceil t_{k,m}(r) \beta_{\delta}(r)\rceil$ for all $k \in B(r), m \in [M]$
	\STATE For any $k \in [K]$, compute \[(d_{k,m}(r))_{m\in [M]} \gets \arg\min_{d \in \Nn^{M}} \sum_{m} d_{m} \text{ s.t. } \forall m \in [M], \frac{n_{k,m}(r-1)+d_{m}}{\beta_{\delta}(n_{k,\cdot}(r-1)+d)} \geq t_{k,m}(r)\]
	%\STATE For all $k$, Compute $(\widetilde{n}_{k,m}(r))_{m\in[M]}$ as the argmin of on $\mathbb{N}^M$ of $\sum_{m} N_{m}$ under the constraints that $ \forall m, N_{m} \geq t_{k,m}(r)\beta_{\delta}(N)$ 
	%\STATE Define $n_{k,m}(r) \gets n_{k,m}(r-1)+D_{k,m}(r)$
	%\STATE Define for all $k,m$, $d_{k,m}(r) = \widetilde{d}_{k,m}(r)\mathds{1}_{k \in B(r)}$
	\STATE \textcolor{black}{Send to each agent $m$ $(d_{k,m}(r))_{k,m}$ and $d_{\max} := \max_{n \in [M]} \sum_{k \in [K]} d_{k,n}(r)$}
     \STATE
	\STATE \textcolor{gray}{\textcolor{black}{\# Agent $m$}}
	\STATE Sample arm $k \in B(r)$ $d_{k,m}(r)$ times, so that $n_{k,m}(r) = n_{k,m}(r-1)+d_{k,m}(r)$%\footnotemark ~
	%\footnotetext{``greater or equal to'' because the case $n_{k,m}(r)>n_{k,m}(r')$ for $r<r'$ is theoretically possible.}
	%$n_{k,m}(r)$
	\STATE \textcolor{black}{Remain idle for $d_{\max}-\sum_{k \in [K]} d_{k,m}(r)$ rounds}
	\STATE \textcolor{black}{Send to the server empirical mean $\hat{\mu}_{k,m}(r):=\sum_{s \leq n_{k,m}(r)} X_{k,m}(s)/n_{k,m}(r)$ for any $k \in [K]$}
    \STATE
	\STATE \textcolor{gray}{\textcolor{black}{\# Central server}}
   \STATE Compute the empirical mixed means $(\hat{\mu}'_{k,m}(r))_{k,m}$ based on $(\hat{\mu}_{k,m}(r))_{k,m}$ and $W$%\footnotemark ~
	%\footnotetext{first, or randomly chosen among the available samples.}
	%samples
   \STATE {\textcolor{gray}{// \emph{Update set of candidate best arms for each user}}}
   \FOR{$m=1$ {\bfseries to} $M$} 
   \STATE \[B_{m}(r+1) \gets \left\{ k \in B_{m}(r) \mid \hat{\mu}'_{k,m}(r) + \Omega_{k,m}(r) \geq \max_{j \in B_m(r)} \left( \hat{\mu}'_{j,m}(r) - \Omega_{j,m}(r) \right) \right\}\]
   \ENDFOR
   \STATE {\textcolor{gray}{// \emph{Update the gap estimates}}}
   \STATE For all $k,m$, $\Deltat_{k,m}(r+1) \gets \Deltat_{k,m}(r) \times (1/2)^{\ind \left(k \in B_m(r+1) \land |B_m(r+1)|>1 \right)}$
   \STATE $r \gets r+1$
   \UNTIL{$\forall m \in [M], |B_m(r)|\leq1$}
	\STATE {\bfseries Output:} $\left\{ k \in B_m(r) : m \in [M] \right\}$
\end{algorithmic}
\end{algorithm*}
%\footnotetext{``greater or equal to'', because the case $n_{k,m}(r)>n_{k,m}(r')$ for $r<r'$ is theoretically possible.}
%\footnotetext{first, or randomly chosen among the available samples.}

Leveraging some recent time-uniform concentration inequalities \cite{kaufmann2018mixture}, we exhibit below a choice of threshold that yields valid confidence intervals on the mixed means (by "projecting" confidence intervals that can be obtained on local means, see Proposition $24$ in \cite{kaufmann2018mixture}). \textcolor{black}{The fact that the confidence interval depends on the random number of past draws (and not just the index of the round) leads to some non-trivial complication in the analysis, 
with the
introduction of quantities $(d_{k,m})_{k,m}$.} The proof is given in Appendix~\ref{subapp:proofs_federated_bai_algo}, where we also provide an explicit expression of the function $g_M\;.$ 

%\todoe{Is the proof written somewhere? add the ref}

\begin{lemma}\label{lem:validCI}
Let us define 
	\[\beta_{\delta}(N) := 2\left( g_M\left(\frac{\delta}{KM}\right) + 2 \sum_{m=1}^{M} \ln(4+\ln(N_{m})) \right)\;,\]
	for any $N \in (\Nn^*)^{M}$, where $g_{M}$ is some non-explicit function, defined in~\cite{kaufmann2018mixture}, that satisfies $g_{M}(\delta) \simeq \log\left(\frac{1}{\delta}\right) + M\log\log\left(\frac{1}{\delta}\right)$. Then the good event \[\mathcal{E} := \left\{\forall r \in \mathbb{N}, \forall m, \forall k, \left|\hat{\mu}_{k,m}'(r) - \mu_{k,m}'\right| \leq \Omega_{k,m}(r)\right\}\;\]
holds with probability larger than $1-\delta$.
\end{lemma}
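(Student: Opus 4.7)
The plan is to derive $\Omega_{k,m}(r)$ by \emph{projecting} a time-uniform confidence region for the vector of local means $(\mu_{k,1},\dots,\mu_{k,M})$ onto the direction $(w_{1,m},\dots,w_{M,m})$, in the spirit of Proposition~24 of \cite{kaufmann2018mixture}. Since the event $\mathcal{E}$ asks for a bound simultaneously over all $k\in[K]$, all $m\in[M]$ and all $r\in\mathbb{N}$, I would first apply a union bound over the $KM$ pairs $(k,m)$, so that it suffices to show that, for each fixed pair, the deviation exceeds $\Omega_{k,m}(r)$ for some round $r$ with probability at most $\delta/(KM)$. This is exactly what motivates the choice $g_M(\delta/(KM))$ in the definition of $\beta_\delta$.

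Next, I would fix $k$ and $m$ and decompose
\[
\hat{\mu}'_{k,m}(r)-\mu'_{k,m} \;=\; \sum_{n=1}^{M} w_{n,m}\bigl(\hat{\mu}_{k,n}(r)-\mu_{k,n}\bigr),
\]
where $\hat{\mu}_{k,n}(r)$ is the empirical mean of $n_{k,n}(r)$ independent, $1$-sub-Gaussian samples collected by agent $n$ from arm $k$ up to the end of phase $r$. Because the sampling rule of W-CPE-BAI chooses $d_{k,n}(r)$ adaptively (through the proxy gaps $\widetilde{\Delta}_{k,m}(r)$ and hence through past observations), the counts $n_{k,n}(r)$ are random and predictable w.r.t.\ the natural filtration, and a standard Hoeffding bound is inadequate; a genuinely time-uniform inequality valid under adaptive sample sizes is needed.

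The key step is to invoke the mixture-martingale concentration of \cite{kaufmann2018mixture} applied to the $M$-dimensional sub-Gaussian family $(X_{k,1}(t),\dots,X_{k,M}(t))_t$ with identity covariance. Specifically, Proposition~24 of that reference, instantiated at confidence level $\delta/(KM)$, gives that with probability at least $1-\delta/(KM)$, for all $r$ and all $\lambda\in\mathbb{R}^M$,
\[
\Bigl|\sum_{n=1}^{M}\lambda_n\bigl(\hat{\mu}_{k,n}(r)-\mu_{k,n}\bigr)\Bigr|
\;\le\;
\sqrt{\Bigl(\sum_{n=1}^{M}\tfrac{\lambda_n^2}{n_{k,n}(r)}\Bigr)\,\beta_\delta(n_{k,\cdot}(r))},
\]
with a threshold of the form $\beta_\delta(N)=2\bigl(g_M(\delta/(KM))+2\sum_{m}\ln(4+\ln N_m)\bigr)$, where the factor $2$ and the $\ln\ln$ correction arise from the mixture weighting used to turn the fixed-count bound into an any-time bound tolerant of the random $n_{k,n}(r)$. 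Specialising $\lambda_n=w_{n,m}$ yields the right-hand side $\Omega_{k,m}(r)$.

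The main obstacle I anticipate is a careful bookkeeping step: verifying that the threshold produced by Proposition~24 of \cite{kaufmann2018mixture}, after substituting in the explicit expression for $g_M$ and the mixture of priors used there, coincides with $\beta_\delta$ as defined in the lemma, including the constants in front of $g_M$ and of the $\ln(4+\ln N_m)$ terms. Once this identification is done, a union bound over $(k,m)\in[K]\times[M]$ yields $\mathbb{P}(\mathcal{E})\ge 1-\delta$.
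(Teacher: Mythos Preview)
Your proposal is correct and follows essentially the same route as the paper: apply Proposition~24 of \cite{kaufmann2018mixture} to each fixed pair $(k,m)$ at confidence level $\delta/(KM)$ with the weight vector $c=W_{\cdot,m}$, then take a union bound over $[K]\times[M]$. One small caveat: Proposition~24 is stated for a \emph{fixed} projection direction $c$, not uniformly over all $\lambda\in\mathbb{R}^M$ as you wrote; since you only need the single direction $\lambda_n=w_{n,m}$, this does not affect your argument, but you should invoke the proposition directly with that fixed vector rather than specialising afterwards.
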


%Then, let us introduce the good event associated with the symmetrical confidence interval which width is $\Omega_{k,m}(r)$ for arm $k$, agent $m$ and round $r$.

%\begin{definition}\label{def:good_event} 
%The good event associated with $(\Omega_{k,m})_{k,m}$ is defined as follows
%\[\mathcal{E} := \left\{\forall r \in \mathbb{N}, \forall m, \forall k, \left|\hat{\mu}_{k,m}'(r) - \mu_{k,m}'\right| \leq \Omega_{k,m}(r)\right\}\;.\]
%\end{definition}

%There exists a threshold function $\beta_{\delta}$ such that the good event $\cE$ holds with probability at least $1-\delta$. Indeed, using Proposition $24$ from~\citet{kaufmann2018mixture} and then an union bound over $[K] \times [M]$, one can prove that the following threshold satisfies the required property

%\begin{definition}\label{def:good_threshold}
%	Let us define 
%	\[\beta_{\delta}(N) := 2\left( g_M\left(\frac{\delta}{KM}\right) + 2 \sum_{m=1}^{M} \ln(4+\ln(N_{m})) \right)\;,\]
%	for any $N \in (\Nn^*)^{M}$, where $g_{M}$ is some non-explicit function, defined in~\citet{kaufmann2018mixture}, that satisfies $g_{M}(\delta) \simeq \log\left(\frac{1}{\delta}\right) + M\log\log\left(\frac{1}{\delta}\right)$. Then, $\bP\left(\mathcal{E}\right)\geq 1 - \delta$.
%\end{definition}

From this lemma, it easily follows that \textcolor{black}{W-CPE}%FPE
-BAI is $\delta$-correct for the above choice of threshold function, as, for any agent $m\;,$ no good arm $k_m^\star$ can ever be eliminated from $B_m(r)$ at round $r\;.$ The fact that the sample complexity of \textcolor{black}{W-CPE}%FPE
-BAI scales with $\widetilde{T}^\star(\mu\textcolor{black}{'})$ on the good event $\cE$ comes from the interplay between the expression of $\Omega_{k,m}(r)$ (which, up to the threshold function, is exactly one of the constraints featured in the lower bound) together with the definition of the allocation $t(r)\;,$ which leads to the following crucial result in our analysis

\begin{lemma}\label{lem:upperbound_omega}
	On %event 
	$\cE$, $\forall k,m,r\geq0$, %$k \in [K],m \in [K],r\geq0$,
	$\Omega_{k,m}(r) \leq \Deltat_{k,m}(r)\;.$
\end{lemma}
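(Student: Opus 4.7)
The plan is to combine two deterministic inequalities baked into the algorithm, plus a short freezing argument for arms that have been eliminated. First, since $t(r) = \oracle\bigl((\sqrt{2}\Deltat_{k,m}(r))_{k,m}\bigr)$, the allocation $t(r)$ lies in the feasible set of that oracle problem, which by Definition~\ref{def:oracle} yields, for every $k$ and $m$,
\[\sum_{n \in [M]} \frac{w_{n,m}^2}{t_{k,n}(r)} \;\leq\; \frac{(\sqrt{2}\,\Deltat_{k,m}(r))^2}{2} \;=\; \Deltat_{k,m}(r)^2.\]
Second, whenever $k \in B(r)$, the pseudo-code defines $(d_{k,m}(r))_m$ so as to enforce $n_{k,n}(r) \geq t_{k,n}(r)\,\beta_{\delta}(n_{k,\cdot}(r))$ for all $n \in [M]$.

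I would first treat the case $k \in B(r)$ by chaining these two bounds. Dividing the second inequality, weighting by $w_{n,m}^2$, summing over $n$, and multiplying through by $\beta_{\delta}(n_{k,\cdot}(r))$ gives
\[\Omega_{k,m}(r)^2 \;=\; \beta_{\delta}(n_{k,\cdot}(r)) \sum_{n} \frac{w_{n,m}^2}{n_{k,n}(r)} \;\leq\; \sum_{n} \frac{w_{n,m}^2}{t_{k,n}(r)} \;\leq\; \Deltat_{k,m}(r)^2,\]
and taking square roots closes the argument in this case. The scaling factor $\sqrt{2}$ passed to the oracle is precisely calibrated so that the constant in front of $\Deltat_{k,m}(r)^2$ is exactly $1$, while $\beta_{\delta}(n_{k,\cdot}(r))$ cancels between the width of the confidence interval and the allocation constraint.

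For the case $k \notin B(r)$, I would reduce to the previous case at the last round where $k$ was still active. Set $r_k := \max\{r' \leq r : k \in B(r')\}$, well-defined because $B(0) = [K]$. During each phase $r' \in (r_k, r]$, no agent samples $k$, hence $n_{k,n}(r) = n_{k,n}(r_k)$ for every $n$ and consequently $\Omega_{k,m}(r) = \Omega_{k,m}(r_k)$. For the proxy gap, let $r_{k,m} \leq r_k$ denote the last phase $\leq r$ with $k \in B_m$; the update rule only halves $\Deltat_{k,m}$ when $k \in B_m(r'+1)$, so the gap is constant for $r' \geq r_{k,m}$, giving $\Deltat_{k,m}(r) = \Deltat_{k,m}(r_k)$. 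Applying the first case at phase $r_k$ then yields $\Omega_{k,m}(r) = \Omega_{k,m}(r_k) \leq \Deltat_{k,m}(r_k) = \Deltat_{k,m}(r)$.

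I do not anticipate a genuine obstacle: the inequality is essentially a deterministic identity between the constraint defining the oracle allocation and the width of the confidence interval, and the good event $\cE$ is not even used (it will only be invoked later, to argue that the true means lie inside those intervals and hence that $k_m^\star$ is never eliminated). The one bookkeeping subtlety is to check that both the counts and the proxy gap are genuinely frozen on $(r_k, r]$, which follows from $B_m(r'+1) \subseteq B_m(r')$ and the fact that the halving indicator $\ind(k \in B_m(r'+1) \land |B_m(r'+1)|>1)$ vanishes as soon as $r' \geq r_{k,m}$.
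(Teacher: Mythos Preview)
Your proposal is correct and follows essentially the same approach as the paper: combine the oracle feasibility constraint $\sum_n w_{n,m}^2/t_{k,n}(r) \leq \Deltat_{k,m}(r)^2$ with the allocation constraint $n_{k,n}(r)/\beta_{\delta}(n_{k,\cdot}(r)) \geq t_{k,n}(r)$ for active arms, and reduce eliminated arms to their last active round via the freezing argument (the paper calls that round $r'_k$, you call it $r_k$). Your remark that $\cE$ is never actually invoked is correct---the inequality is indeed deterministic; the paper's proof does not use it either.
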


\begin{proof}
	For any round $r$ and arm $k$, by Algorithm~\ref{algo:fpe_bai} and the definition of oracle $t_{k,\cdot}(r)$, for any agent $m$,{\small
	\begin{eqnarray*} \Omega_{k,m}(r) & = & \sqrt{\sum_n w_{n,m}^2 \frac{\beta_{\delta}(n_{k,\cdot}(r-1)+d_{k,\cdot})}{n_{k,n}(r-1)+d_{k,n}(r)}}
		\leq \left\{\begin{array}{cl}
       \sqrt{\sum_n  \frac{w_{n,m}^2}{t_{k,n}(r)}} \leq \Deltat_{k,m}(r) & \text{if } k \in B(r)\;,\\
       \sqrt{\sum_n \frac{w_{n,m}^2}{t_{k,n}(r'_k)}} \leq \Deltat_{k,m}(r'_k) = \Deltat_{k,m}(r) & \text{otherwise}\;,
                          \end{array}
    \right. 
%& \leq_{\text{if } k \in B(r)} &\sqrt{\sum_n  \frac{w_{n,m}^2}{t_{k,n}(r)}} \leq \Deltat_{k,m}(r)\;,\\
%		& \leq_{\text{otherwise }} & \sqrt{\sum_n \frac{w_{n,m}^2}{t_{k,n}(r'_k)}} \leq \Deltat_{k,m}(r'_k) = \Deltat_{k,m}(r)\;.\\ 
	\end{eqnarray*}} 
	where when $k \not\in B(r)$, $r'_k := \sup \{ r' \geq 0 : k \in B(r') \}$, and we use the fact that $d_{k,m}(r)=0$ when $k \not\in B(r)$. 
\end{proof}

%The proof for this lemma can be found in Appendix~\ref{app:proofs}.
%\todoe{I think we should put the proof, it is short and very important, I hope we have room in the end. But at least we should say where it is proved}
%\begin{remark}
%	When $k \not\in B(r)$, then the proxy $\widetilde{\Delta}'_{k,m}(r)$, for any $m$, is no longer halved (since $B_m(r) \subseteq B(r)$). As our ``relaxed'' optimization problem $\oracle$ and the allocation problem are decoupled across arms, then $t_{k,m}(r)$ is constant at round $r > r'_k$, and as such, $d_{k,m}(r)=0$.
%\end{remark}
%\todoc{Better explanation?}
%\todoe{Given in the text already}
We did not put much emphasis on the way communications are performed between the agents and the central server, as several choices are possible. The important part is that the server receives all values of the local means $(\hat{\mu}_{k,m}(r))_{k,m}$ at the end of round $r\;.$ Our suggestion is that the central server maintains the sets $(B_{m}(r))_{m\in[M]}\;,$ calls the oracle, and sends to all agents their values of $(d_{k,m}(r+1))_{k,m}$ at the end of each phase $r$. In any case, the number of communication rounds in our definition will be equal to the number of phases used by \textcolor{black}{W-CPE}%FPE
-BAI. All in all, we prove %that the following theorem

\begin{theorem}\label{th:sample_complexity_bai} With probability $1-\delta$, \textcolor{black}{W-CPE}%FPE
-BAI outputs the optimal arm for each agent with an exploration cost at most
\begin{eqnarray*}
32\widetilde{T}_W^\star(\mu)\log_2({8}/{\Delta_{\min}'})\log\left(\frac{1}{\delta}\right) + o_{\delta}\left(\log\left(\frac{1}{\delta}\right)\right)\;,\end{eqnarray*} 
and at most $\left\lceil\log_2\left({8}/{\Delta_{\min}'}\right)\right\rceil$  communication rounds,
where $\Delta'_{\min} := \min_{k \in [K],m \in [M]} \Delta'_{k,m}\;.$
\end{theorem}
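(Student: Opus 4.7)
The proof will work entirely on the good event $\mathcal{E}$ of Lemma~\ref{lem:validCI}, which holds with probability at least $1-\delta$, and will proceed in three stages. The first stage establishes $\delta$-correctness: for each agent $m$ and phase $r$, $k_m^\star$ cannot be removed from $B_m(r)$ since, on $\mathcal{E}$, $\hat{\mu}'_{k_m^\star,m}(r) + \Omega_{k_m^\star,m}(r) \geq \mu'_{k_m^\star,m} \geq \mu'_{j,m} \geq \hat{\mu}'_{j,m}(r) - \Omega_{j,m}(r)$ for every $j \in B_m(r)$, and the elimination rule therefore preserves it.

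The second stage controls the number of phases (and hence the communication cost). So long as both $k \neq k_m^\star$ and $k_m^\star$ belong to $B_m(r)$ with $|B_m(r)| > 1$, the update rule halves both proxy gaps simultaneously, giving $\widetilde{\Delta}_{k,m}(r) = \widetilde{\Delta}_{k_m^\star,m}(r) = 2^{-r}$. Combining this with Lemma~\ref{lem:upperbound_omega} and the confidence intervals on $\mathcal{E}$ yields
\[\hat{\mu}'_{k,m}(r) + \Omega_{k,m}(r) \leq \mu'_{k,m} + 2 \cdot 2^{-r}, \qquad \hat{\mu}'_{k_m^\star,m}(r) - \Omega_{k_m^\star,m}(r) \geq \mu'_{k_m^\star,m} - 2 \cdot 2^{-r},\]
so $k$ is eliminated from $B_m$ whenever $4 \cdot 2^{-r} < \Delta'_{k,m}$. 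Taking a maximum over $(k,m)$ shows that the algorithm stops after at most $\lceil \log_2(8/\Delta'_{\min}) \rceil$ phases, which is exactly the communication-cost bound.

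The third stage bounds the exploration cost $\sum_{k,m} n_{k,m}(\tau)$. Let $R_{k,m}$ denote the last phase in which arm $k$ is active for agent $m$. The defining constraint in the algorithm ensures $n_{k,m}(r) \geq t_{k,m}(r)\, \beta_\delta(n_{k,\cdot}(r))$, while the fact that $(d_{k,m}(r))_m$ is the smallest integer allocation making this constraint hold gives a matching upper bound up to a small additive correction. Combining the frozen-gap lower bound $\widetilde{\Delta}_{k,m}(R_{k,m}) \geq \Delta'_{k,m}/8$ inherited from the previous stage with Lemma~\ref{lem:12}, applied with $\beta = 4\sqrt{2}$ to compare the oracle for the proxy gaps $\sqrt{2}\widetilde{\Delta}(R_{k,m})$ to the one for $\Delta'$, yields $\sum_{k,m} t_{k,m}(R_{k,m}) \leq 32\,\widetilde{T}_W^\star(\mu')$. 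Multiplying by the dominant term $\beta_\delta \sim 2\log(1/\delta)$ and accumulating per-phase contributions (which contribute the extra $\log_2(8/\Delta'_{\min})$ factor through the number of phases) produces the stated bound, while the $\log\log$ component of $\beta_\delta$ together with the integer-rounding overhead is absorbed into $o_\delta(\log(1/\delta))$.

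I expect the main obstacle to be the interaction between the data-dependent threshold $\beta_\delta(n_{k,\cdot}(r))$ and the joint minimization defining $(d_{k,m}(r))_m$: because the constraint in each phase depends on the resulting sample counts through $\beta_\delta$, turning the lower bound $n_{k,m}(r) \geq t_{k,m}(r)\beta_\delta(\cdot)$ into a matching tight upper bound will require a self-bounding or fixed-point argument exploiting the sub-logarithmic growth of $\beta_\delta$. A further subtle point is that an arm eliminated from some $B_m$ may still be sampled because it remains in $B_{m'}$ for other agents, so the oracle values $t_{k,m}(r)$ need not be constant after $R_{k,m}$; applying Lemma~\ref{lem:12} at the final, frozen configuration is precisely what folds these residual contributions into the single term $\widetilde{T}_W^\star(\mu')$.
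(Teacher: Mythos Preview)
Your first two stages reproduce the paper's argument exactly. The gap is in stage~3. You claim that Lemma~\ref{lem:12} ``yields $\sum_{k,m} t_{k,m}(R_{k,m}) \leq 32\,\widetilde{T}_W^\star(\mu')$'', but this does not follow: Lemma~\ref{lem:12} compares the \emph{total} of one oracle allocation to the total of another for a single pair of gap vectors, whereas the entries $t_{k,m}(R_{k,m})$ with $(k,m)$ varying are plucked from \emph{different} oracle problems (one for each value of $R_{k,m}$), so there is no single gap vector to feed into the lemma. Moreover, the quantity that controls $n_{k,m}(R)$ is not $t_{k,m}(R_{k,m})$ but $t_{k,m}(r'_{k,m})$ with $r'_{k,m} := \sup\{r \leq R : d_{k,m}(r) \neq 0\}$, and $r'_{k,m}$ can strictly exceed $R_{k,m}$ for exactly the reason you flag in your last paragraph: once $k$ leaves $B_m$ but remains in $B_{m'}$, the proxy gap $\widetilde\Delta_{k,m'}(r)$ keeps shrinking, the oracle constraint at $(k,m')$ tightens, and hence $t_{k,m}(r)$ can \emph{increase} past round $R_{k,m}$.

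The paper sidesteps both issues simultaneously with the crude bound
\[
\sum_{k,m} t_{k,m}(r'_{k,m}) \;\leq\; \sum_{r \leq R}\sum_{k,m} t_{k,m}(r)\,,
\]
and then applies Lemma~\ref{lem:12} at each \emph{fixed} round $r$ (using $\widetilde\Delta_{k,m}(r) \geq \widetilde\Delta_{k,m}(R) \geq \Delta'_{k,m}/8$, valid for all $k,m$ and all $r \leq R$) to get $\sum_{k,m} t_{k,m}(r) \leq 32\,\widetilde{T}_W^\star(\mu)$ uniformly in $r$. The $\log_2(8/\Delta'_{\min})$ factor is precisely this outer sum over $r \leq R$; it is not an additional ``per-phase accumulation'' layered on top of a direct bound on $\sum_{k,m} t_{k,m}(R_{k,m})$. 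Your sketch invokes both the direct bound and the accumulation, which are mutually inconsistent---if the former held, the latter (and the $\log_2$ factor) would be unnecessary. Dropping the direct claim and keeping the per-round application of Lemma~\ref{lem:12} recovers the paper's argument; the self-bounding step you anticipate for $\beta_\delta$ is then exactly the paper's final inversion (their inequality~\eqref{eq:final_inequality}).
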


\paragraph{Proof sketch} The detailed proof is given in Appendix~\ref{app:proofUB}, where we also provide an explicit upper bound on the exploration cost. We let $R$ denote the (random) number of phases used by the algorithm before stopping. On the good event $\cE\;,$ we can prove that the algorithm never eliminates $k_m^\star$ therefore $R := \max_{m}\max_{k \neq k_m^{\star}} R_{k,m}$ where $R_{k,m}$ is the last phase in which $k \in B_m(r)\;.$ Using Lemma~\ref{lem:upperbound_omega}, we can easily establish that 
\[R_{k,m} \leq r_{k,m} := 
 \min \left\{r \geq 0 : 4 \times 2^{-r} < \Delta'_{k,m}\right\}\]
 which satisfies $r_{k,m}\leq \log_2(8/\Delta_{k,m}')$. This yields $R \leq \log_{2}(8/\Delta_{\min}')\;,$ and 
further permits to prove that the proxy gaps can be lower bounded by the true gaps: 
\[\forall r \leq R, \forall k \in [K], \forall m \in [M], \ \Deltat_{k,m}(r) \geq \frac{1}{8}\Delta'_{k,m}\;.\]
See Corollary~\ref{cor:lubounds_gap} in Appendix~\ref{app:proofUB}. Using the monotonicity properties of the oracle that are stated in Lemma~\ref{lem:12}, we can then establish that the allocation $t(r)$ computed from the proxy gaps in the algorithm satisfies 
\begin{equation}\label{eq:importStep}\forall r \leq R, \ \sum_{k \in [K],m \in [M]}t_{k,m}(r) \leq 32 \widetilde{T}_W^\star(\mu). \end{equation}
To upper bound the exploration cost, the next step is to relate $n_{k,m}(R)$ to the oracle allocations. To do so, we observe that if $R'_{k,m}$ is the last round before $R$ such that $d_{k,m}(r) \neq 0$ (i.e. the last round in which arm $k$ is actually sampled by agent $m$~; then $n_{k,m}(R)=n_{k,m}(R'_{k,m})$), we have by definition of the $(d_{k,m}(r))_{k,m,r}$ that 
\begin{eqnarray*}n_{k,m}(R'_{k,m}) &\leq& t_{k,m}(R'_{k,m}) \beta_{\delta}(n_{k,\cdot}(R'_{k,m})) + 1 \leq t_{k,m}(R'_{k,m}) \beta_{\delta}(n_{k,\cdot}(R)) + 1\;.
\end{eqnarray*}
See Lemma~\ref{lem:UBn} in Appendix~\ref{app:proofUB}. We can then upper bound $\tau := \sum_{k,m} n_{k,m}(R)$ as follows 
\begin{eqnarray*}\label{eq:final_inequality}
	\tau & \le & \sum_{k,m} t_{k,m}(r'_{k,m}) \beta_{\delta}(n_{k,\cdot}(R))+KM \leq \sum_{k,m}\sum_{r\leq R} t_{k,m}(r) \beta^*(\tau) + KM\;, \\
	\end{eqnarray*}
\begin{eqnarray}\label{eq:final_inequality}
	\text{and } \tau & \leq & R \times 32 \widetilde{T}_W^\star(\mu) \beta^*(\tau) + KM\;, \text{ where we use~\eqref{eq:importStep} and introduce}    
\end{eqnarray}
\[\beta^*(\tau) := 2\left(g_M\left(\frac{\delta}{KM}\right) + 2 M \ln \left(4 + \ln \left(\tau\right)\right)\right)\;.\]
The end of the proof consists in using the known upper bound on $R\;,$ and finding an upper bound for the largest $\tau$ satisfying the inequality in~\eqref{eq:final_inequality}. 
\qed
\paragraph{Discussion} Theorem~\ref{th:sample_complexity_bai} proves that \textcolor{black}{W-CPE}%FPE
-BAI is matching the exploration lower bound of Theorem~\ref{thm:LBBAI} in a regime where $\delta$ is small, up to multiplicative constants, including a logarithmic term in $1/\Delta_{\min}'$. It achieves this using only $\left\lceil \log_2\left( 8/\Delta'_{\min} \right) \right\rceil$ communication rounds.
\textcolor{black}{We note that a similar extra \textcolor{black}{multiplicative} logarithmic factor \textcolor{black}{is present} %has appeared 
in the analysis of near-optimal phased algorithms in other contexts \cite{fiez2019sequential,du2021collaborative}. Such a quantity appears as an upper bound on the number of phases\textcolor{black}{,} and may be a price to pay for the phased structure.}

%\textcolor{black}{As mentioned in introduction, using an elimination-based approach on a related bandit setting, ~\cite[Theorem $1$]{du2021collaborative} derives a similar sample complexity bound for their near-optimal algorithm, which scales with the logarithm of the inverse of the minimum relevant gap quantity, with larger absolute constants. 
%That is a consequence of the $2$-relaxation-based sampling scheme that we propose.}

%, where arms are sampled according the set of surviving arms but as well as the round where their gap proxy is frozen.} %$\mathcal{O}(1/\Delta'_{\min})$. %To the best of our knowledge, we provide the first analysis of an algorithm for \textcolor{black}{collaborative} %federated 
%best arm identification. %However, as the regret algorithm of~\citet{shi2021federated} is also a phased-elimination algorithm, we can do simple modifications to convert it into a $\delta$-correct phased-elimination algorithm for best arm identification, that we call PF-UCB-BAI. This algorithm is presented in Appendix~\ref{app:pfbai_baseline}, and is designed for the particular case of \textcolor{black}{collaborative} %federated
%learning with personalization introduced in~\citet{shi2021federated}. Although we do not provide an upper bound on its exploration cost, we believe its deterministic exploration scheme prevents it from matching our lower bound. We empirically compare the performance of FPE-BAI and PF-UCB-BAI in Section~\ref{sec:experiments}.

%\todoe{update PF-LUCB everywhere}

%\begin{remark} 
	\paragraph{On the communication cost} We argue that the communication cost of \textcolor{black}{W-CPE}%FPE
	-BAI is actually of the same order of magnitude as that featured in some related work. In~\cite{shi2021federated}, which is the closest setting to our framework, the equivalent number of communication rounds $p$ needed to solve the regret minimization problem is upper bounded by 
	$\mathcal{O}\left(2\log_2\left( {8}/{(\sqrt{M} \Delta'_{\min})} \right)\right)$. % (mentioned in their Lemma $5$ with the specification of their algorithm as described in their Corollary $2$). This is indeed compatible with Lemma~\ref{lem:UBr}, although it should be mentioned that, contrary to our work, their algorithm explicitly minimizes for the number of communication rounds.
	In the setting of collaborative learning --where $M$ agents face the same set of arm distributions and $W=Id$-- ~\cite{hillel2013distributed} in their Theorem $4.1$ prove that an improvement of multiplicative factor $1/M$ on the exploration cost for a traditional best arm identification algorithm can be reached by using at most $\left\lceil \log_2(1/\Delta_{\min}) \right\rceil$ communication rounds\textcolor{black}{, where $\Delta_{\min}$ is the gap between the best and second best arms. %~\citet[Theorem $2$]{du2021collaborative} proves a lower bound on the number of communication rounds in their setting is $\Omega\left(\frac{\log_2(\Delta_{\min})}{\log_2 \log_2(\Delta_{\min})}\right)$ for a speedup of $1/M$, with $\Delta_{\min}$ being their relevant minimum gap quantity.
	}
%\end{remark}
%\todoc{Mention ~\citet{tao2019collaborative} and ~\citet{mitra2021exploiting}, discuss why our lower bound can be bad}
%\todoe{Should we?}

\paragraph{Experimental validation} We propose in Appendix~\ref{sec:experiments} an empirical evaluation of W-CPE-BAI for the weight matrix $w_{m,n} = \alpha \ind(n=m) + \frac{1-\alpha}{M}$ which corresponds to the setting studied by \cite{shi2021federated}. In this particular case, we propose as a baseline a counterpart of the regret algorithm of  \cite{shi2021federated} which we call PF-UCB-BAI. Our experiments on a synthetic instance show that W-CPE-BAI and PF-UCB-BAI have similar performances in terms of exploration cost and that W-CPE-BAI becomes better when the level of personalization $\alpha$ is smaller than $0.5$. Moreover, W-CPE-BAI uses less rounds of communication than PF-UCB-BAI for all values of $\alpha$. Finaly, the near-optimality of W-CPE-BAI is empirically observed when compared to an oracle algorithm which has access to the true gaps. % as $\alpha$
\textcolor{black}{We refer the reader to Appendix~\ref{sec:experiments} for \textcolor{black}{further} details on the optimization libraries that were used.}

\begin{remark}
	The analysis and the structure of Algorithm~\ref{algo:fpe_bai} \textcolor{black}{have the potential to} be extended to other pure exploration problems, with similar guarantees. In Appendix~\ref{app:topN}, we illustrate this claim by tackling Top-$N$ identification.
\end{remark}

%\begin{remark}
%	Note that we can improve on the number of rounds and/or on the absolute constant $32$ in the upper bound by slightly modifying the hyperparameters of the algorithm. See Appendix~\ref{app:sub_improvement}.
%\end{remark}

%\begin{remark}
%	\todoc{
%		Consequences for $M=1$: mention~\citep{fiez2019sequential} for algorithms optimal up to logarithmic factors (for structured models).
%}
%\end{remark}

\section{Regret Lower Bound}\label{section:regret}

%\todoc{Describe briefly the regret minimization setting, as done for BAI.}
In contrast to the BAI setting,~\cite{shi2021federated}
considered the objective of minimizing the regret%, defined (also in our more general model) as 
\begin{eqnarray*}\mathcal{R}_{\mu}(T) &:=& \bE \left[\sum_{m=1}^{M}\sum_{t=1}^{T} \left(\mu'_{k_m^\star,m} - \mu_{\pi_m(t),m}\right)\right]\;. 
\end{eqnarray*}

They provided a conjecture on the lower bound on regret in personalized  %\textcolor{black}{so-called} 
\textcolor{black}{federated} learning~\cite[see,][Conjecture~$1$]{shi2021federated}. 
As mentioned in introduction, their reward model is a special case of ours with weights
$w_{m,m} = \alpha + \tfrac{1-\alpha}{M}$ and $w_{n,m} = \tfrac{1-\alpha}{M}$ for $n\neq m\;.$ In this section, we prove a regret lower bound with general weights \textcolor{black}{ that proves in particular  Conjecture $1$ in~\cite{shi2021federated}}. %In addition, we show that the conjecture in~\citet{shi2021federated} %on regret lower bound 
%is incorrect. 

We prove the following result, {for an algorithm that selects in each round exactly one arm for each agent}, and all agents communicate after each round.  This lower bound holds for Gaussian arms with variance $\sigma^2=1$. 

\begin{theorem}\label{thm:LBRegret} Any uniformly efficient algorithm\footnote{A uniformly efficient algorithm satisfies $\mathcal{R}_{\mu}(T) = o(T^\gamma)$ for any $\gamma \in (0,1)$ and any possible instance $\mu$.} in which all agents communicate after each round satisfies 
\[\liminf_{T\rightarrow \infty}\frac{\mathcal{R}(T)}{\log(T)} \geq C^\star_W(\mu)\;,\]
where 
\begin{equation*}
	\begin{split}
		C^\star_W(\mu) & := \min_{c = (c_{k,m})_{k,m : k_m^\star \neq k}}\left\{ \sum_{k=1}^{K}\sum_{m : k_{m}^\star \neq k} c_{k,m}\Delta_{k,m}' \ : \ \forall k \in [K], \forall m \in [M], \sum_{n : k_n^\star \neq k} \frac{w_{n,m}^2}{c_{k,n}} \leq \frac{(\Delta'_{k,m})^2}{2}\right\}\;.	\end{split}
\end{equation*}
We recall that for any agent $m$,  $\Delta'_{k_m^\star,m} := \min_{k' \neq k_m^\star} \Delta'_{k',m}\;.$ 
\end{theorem}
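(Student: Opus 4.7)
The plan is to follow the Lai--Robbins / Graves--Lai paradigm adapted to the weighted collaborative setting. The starting point is the Kaufmann--Capp\'e--Garivier change-of-distribution inequality: for any alternative model $\lambda \in \Rr^{K \times M}$ and any event $\mathcal{E}$ measurable with respect to the history at time $T$,
\[
\sum_{k, n}\bE_\mu^{\cA}[N_{k, n}(T)] \cdot \frac{(\mu_{k,n}-\lambda_{k,n})^2}{2} \ \geq\ d\bigl(\bP_\mu^{\cA}(\mathcal{E}), \bP_\lambda^{\cA}(\mathcal{E})\bigr),
\]
where the Gaussian KL has been specialized to unit variance and $d$ denotes the Bernoulli KL. I would pair this with the event $\mathcal{E} = \{N_{k, m}(T) \geq T/2\}$. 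Under $\mu$, where $k \neq k_m^\star$, uniform efficiency combined with $\Delta'_{k,m}\,\bE_\mu^{\cA}[N_{k,m}(T)] \leq \mathcal{R}_\mu(T) = o(T^\gamma)$ forces $\bP_\mu^{\cA}(\mathcal{E}) \to 0$; under any $\lambda$ that makes $k$ the optimal arm of agent $m$, the analogous argument applied to $\mathcal{R}_\lambda(T)$ yields $\bP_\lambda^{\cA}(\mathcal{E}) \to 1$, and therefore $d(\bP_\mu^{\cA}(\mathcal{E}), \bP_\lambda^{\cA}(\mathcal{E})) \geq (1-o(1))\log T$.

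For a fixed pair $(k, m)$ with $k \neq k_m^\star$, I would choose the alternative $\lambda$ that differs from $\mu$ only in the local means of arm $k$, and only at those agents $n$ for which $k_n^\star \neq k$. This choice is crucial: it restricts the left-hand side above to $\sum_{n : k_n^\star \neq k} \bE_\mu^{\cA}[N_{k,n}(T)]\,(\lambda_{k,n} - \mu_{k,n})^2/2$, and it leaves the optimal arm of every other agent unchanged so that the uniform-efficiency argument above remains valid throughout this family. For $k$ to become the best mixed arm for $m$ under $\lambda$, the perturbation must satisfy $\sum_{n : k_n^\star \neq k} w_{n,m}(\lambda_{k,n}-\mu_{k,n}) \geq \Delta'_{k,m}$. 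Minimizing the quadratic form under this linear constraint is a one-line Lagrangian (or Cauchy--Schwarz) computation which yields the infimum value
\[
\frac{(\Delta'_{k,m})^2}{2\sum_{n : k_n^\star \neq k} w_{n,m}^2 / \bE_\mu^{\cA}[N_{k,n}(T)]}.
\]
Chaining with the KCG inequality, dividing by $\log T$ and rearranging gives, with $c_{k,n}(T) := \bE_\mu^{\cA}[N_{k,n}(T)] / \log T$, the asymptotic constraint $\sum_{n : k_n^\star \neq k} w_{n,m}^2 / c_{k,n}(T) \leq (\Delta'_{k,m})^2/2 + o(1)$, matching the theorem's feasibility condition at every $(k,m)$ with $k \neq k_m^\star$.

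For the remaining index $k = k_m^\star$, I would apply the symmetric construction: perturb only the local means of arm $k_m^\star$ at agents $n$ with $k_n^\star \neq k_m^\star$, decreasing them just enough that some competitor $k' \neq k_m^\star$ overtakes $k_m^\star$ for agent $m$ under $\lambda$. Picking the competitor $k'$ that minimizes $\Delta'_{k',m}$ (which is exactly $\Delta'_{k_m^\star,m}$ by the paper's convention) recovers the theorem's constraint at $k = k_m^\star$. To conclude, I would rewrite the regret as $\mathcal{R}_\mu(T) = \log T \cdot \sum_m \sum_{k \neq k_m^\star} \Delta'_{k,m}\, c_{k,m}(T)$, extract a subsequence along which all relevant components of $(c_{k,n}(T))$ converge in $[0, +\infty]$, and pass to the liminf: by construction the limit vector is feasible for the optimization problem defining $C^\star_W(\mu)$, so $\liminf_T \mathcal{R}_\mu(T)/\log T \geq C^\star_W(\mu)$. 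The most delicate points will be (i) verifying that the KCG bound $(1-o(1))\log T$ holds uniformly over the family of alternatives being optimized, and (ii) handling the fact that $c_{k,n}(T) \to +\infty$ whenever $k = k_n^\star$ so that the corresponding terms $w_{n,m}^2 / c_{k,n}$ drop out of the limiting constraints cleanly.
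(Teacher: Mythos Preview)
Your proposal is correct and follows essentially the same approach as the paper: the KCG change-of-distribution inequality, the same family of alternatives (perturb only arm $k$ at agents $n$ with $k_n^\star\neq k$; symmetrically for $k=k_m^\star$), the same Lagrangian/KKT minimization of the quadratic form, and subsequence extraction to pass to the $\liminf$.

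One remark on your delicate point (i): as you suspect, the $(1-o(1))\log T$ bound is \emph{not} uniform over the family you optimize, because the optimal perturbation depends on $(\bE_\mu[N_{k,n}(T)])_n$ and hence on $T$. The paper resolves this by reversing your order of operations: it first extracts a subsequence along which $c_{k,n}(T)\to c_{k,n}$ for all $(k,n)$ with $k\neq k_n^\star$, then applies the change-of-distribution bound for each \emph{fixed} $\lambda\in B(\mu)$ along that subsequence to obtain $\sum_{n:k_n^\star\neq k} c_{k,n}(\mu_{k,n}-\lambda_{k,n})^2/2\geq 1$, and only then optimizes over $\lambda$. With this order no uniformity is needed, and your point (ii) becomes moot since the restriction $\lambda\in B(\mu)$ already removes the $n:k_n^\star=k$ terms from the sum.
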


 Theorem~\ref{thm:LBRegret} may be viewed as an extension of the lower bound of~\cite{LaiRobbins85bandits} to the \textcolor{black}{collaborative} %federated 
 setting. Its proof, given in Appendix~\ref{subapp:regret_lower_bound}, uses classical ingredients for regret lower bounds in  (single agent) structured bandit models~\citep{Combes17OSSB,GravesLai97}.

The generic approaches proposed  in~\cite{Combes17OSSB,degenne2020structure} for optimal regret minimization in structured bandits could therefore be useful. However, to turn them into a reasonable algorithm for the \textcolor{black}{collaborative} %federated 
setting, we would need to preserve the phased elimination structure. Analogous to the relaxation in the BAI setting, we can also define a %more 
relaxed complexity term $\widetilde{C}_{M}(\mu)\;,$ which does not require the knowledge of %identity of 
the best arms, and is within constant factors of $C(\mu)$ by the following Lemma~\ref{lem:tilde_C_mu} (which proof is given in Appendix~\ref{subapp:proofs_regret}).

\begin{lemma}\label{lem:tilde_C_mu}
Introducing the quantity 
\begin{equation*}
	\begin{split}
		\widetilde{C}^\star_W(\mu) & := \min_{c \in (\Rr^+)^{K\times M}} \left\{ \sum_{k=1}^{K}\sum_{m=1}^M c_{k,m}\Delta_{k,m}' \text{ s.t. } \forall k \in [K], \forall m \in [M], \sum_{n =1}^{M} \frac{w_{n,m}^2}{c_{k,n}} \leq \frac{(\Delta'_{k,m})^2}{2}\;\right\},\\
	\end{split}
	\end{equation*}
it holds that $C^\star_{W}(\mu)\le\widetilde{C}^\star_{W}(\mu)\le 4C^\star_{W}(\mu)\;.$ 
\end{lemma}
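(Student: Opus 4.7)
The plan is to prove the two inequalities separately. The left inequality $C^\star_{W}(\mu) \le \widetilde{C}^\star_{W}(\mu)$ is the easy one: starting from an optimizer $c^\star$ of $\widetilde{C}^\star_{W}(\mu)$, I would restrict it to the sub-family $(c^\star_{k,m})_{k \neq k_m^\star}$. The $(k,m')$-constraint of $C^\star_W$ only sums over $\{n : k_n^\star \neq k\}$, hence it is dominated by the corresponding constraint of $\widetilde{C}^\star_W$ and the restricted family is feasible. The objective of $C^\star_W$ at this restriction is at most that of $\widetilde{C}^\star_W$ at $c^\star$, because we drop the nonnegative terms $c^\star_{k_m^\star, m}\Delta'_{k_m^\star, m}$.

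For the right inequality $\widetilde{C}^\star_{W}(\mu) \le 4\, C^\star_{W}(\mu)$, I would start from an optimizer $c^\star$ of $C^\star_{W}(\mu)$ and extend it to every index pair by setting $\hat{c}_{k,m} := 2 c^\star_{k,m}$ when $k \neq k_m^\star$, and $\hat{c}_{k_m^\star, m} := 2 C_m$ where $C_m := \max_{k' \neq k_m^\star} c^\star_{k', m}$. To establish feasibility of $\hat{c}$ for $\widetilde{C}^\star_W$, I would split the LHS $\sum_n w_{n,m'}^2/\hat{c}_{k,n}$ of the $(k,m')$-constraint into the contribution of $n$ with $k_n^\star \neq k$ and that of $n$ with $k_n^\star = k$. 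The first piece is at most $(\Delta'_{k,m'})^2/4$, by the $C^\star_W$ constraint at $(k,m')$ (which exists for every $k$, including $k = k_{m'}^\star$, where its RHS is $(\Delta'_{\min,m'})^2/2$) together with the factor-$2$ scaling. For the second piece I would use $C_n \ge c^\star_{k',n}$, valid for any $k' \neq k_n^\star$, and choose $k'$ carefully: $k' = k_{m'}^\star$ when $k \neq k_{m'}^\star$, so that the $C^\star_W$ constraint at $(k_{m'}^\star, m')$ bounds it by $(\Delta'_{\min,m'})^2/4 \le (\Delta'_{k,m'})^2/4$; and $k' = j^\star_{m'}$, the second-best arm of agent $m'$, when $k = k_{m'}^\star$, so that the $C^\star_W$ constraint at $(j^\star_{m'}, m')$ yields $(\Delta'_{j^\star_{m'},m'})^2/4 = (\Delta'_{\min,m'})^2/4$. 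In every case the two pieces add up to at most the budget $(\Delta'_{k,m'})^2/2$.

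The objective check is direct: $\sum_{k,m} \hat{c}_{k,m} \Delta'_{k,m} = 2\, C^\star_W(\mu) + 2 \sum_m C_m \Delta'_{\min, m}$. Writing $C_m = c^\star_{k'_m, m}$ for some $k'_m \neq k_m^\star$ achieving the maximum, and using $\Delta'_{\min, m} \le \Delta'_{k'_m, m}$, each term $C_m \Delta'_{\min, m}$ is upper bounded by $c^\star_{k'_m, m}\Delta'_{k'_m, m}$, which is a summand already appearing in the $C^\star_W$ objective. Hence $\sum_m C_m \Delta'_{\min, m} \le C^\star_W(\mu)$ and the total is at most $4\, C^\star_W(\mu)$. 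The main obstacle is the feasibility check when $k = k_{m'}^\star$: the $C^\star_W$ constraint at $(k_{m'}^\star, m')$ only controls the first piece of the sum, and one must invoke a \emph{distinct} $C^\star_W$ constraint, namely the one at the second-best arm $j^\star_{m'}$, to control the second piece, exploiting precisely the paper's convention $\Delta'_{k_m^\star, m} = \min_{k \neq k_m^\star} \Delta'_{k,m}$.
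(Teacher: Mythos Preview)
Your proof is correct. The lower bound argument is identical to the paper's. For the upper bound, your approach is a more direct variant of the paper's: instead of introducing an auxiliary optimization problem $c^1$ for the ``optimal-arm'' entries and then bounding its value via a feasible point $c''_{k_m^\star,m}=2\sum_{k'\neq k_m^\star} c_{k',m}$, you skip the intermediate minimization and directly plug in $\hat{c}_{k_m^\star,m}=2\max_{k'\neq k_m^\star} c^\star_{k',m}$. Both constructions satisfy the $\widetilde{C}^\star_W$ constraints for the same reason (splitting the sum over $n$ according to whether $k_n^\star=k$, and invoking a \emph{different} $C^\star_W$ constraint---at $k_{m'}^\star$ or at the second-best arm $j^\star_{m'}$---to control the second piece), and both yield the same factor~$4$ in the objective. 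Your version has two small advantages: it is self-contained (no auxiliary problem), and it makes explicit the feasibility check for the second piece, which the paper states for $c''$ without proof; the underlying computation is the same one you spell out.
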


%Moreover, a relaxed complexity term %defined 
%was derived in Lemma~\ref{lem:tilde_C_mu} in the setting of regret minimization.}
\textcolor{black}{
%However
The constrained optimization problems governing the regret lower bound are subtly different from those in the BAI setting. In regret minimization, unknown gaps $(\Delta'_{k,m})_{k,m}$ appear both in the constraints and in the objective. In contrast, in BAI, $(\Delta'_{k,m})_{k,m}$ only appear in the constraints. Due to this difference, designing a similar algorithm for regret minimization leads to an extra multiplicative $\mathcal{O}(1/\Delta'_{\min})$ factor in the upper bound. %Removing that factor seems to require significant additional effort. Indeed, a 
A very similar issue exists in %another
a different
structured bandit problem,
%called 
bandits on graphs with side observations, where arms are connected through a graph with unweighted edges, and the agent receives the reward associated with the selected arm \textit{and its neighbors} at a given round. % along with rewards from all of the neighbors of the selected arm. 
In~\cite[Problem P$1$]{buccapatnam2014stochastic}, authors describe a constrained linear optimization problem which governs the regret lower bound, and face the same issue of scaling in $\mathcal{O}(1/\Delta'_{\min})$. 
Dealing with this problem would be an interesting subsequent work.} 

%\begin{proposition}
%Please do not reject our paper because we do not have a regret algorithm : )
%\end{proposition}

\section{Conclusion}\label{sec:discussion}

This paper introduced a general framework for \textcolor{black}{collaborative} %federated 
learning in multi-armed bandits. Our work presents two novel lower bounds: one on the exploration cost %--in terms of number of samples-- 
for pure exploration,
%fixed-confidence best arm identification, 
and another on cumulative regret. The latter permits to 
prove a  prior conjecture on the topic. Moreover, we propose a phased elimination algorithm for fixed-confidence best arm identification. %, which can be easily extended to other pure exploration problems, such as Top-$N$ identification. 
This algorithm tracks the optimal allocation from the pure exploration lower bound \textcolor{black}{through a novel approach solving a relaxed optimization problem linked to the lower bound}. 
The exploration cost of this algorithm is matching the lower bound up to logarithmic factors. %In future work, we will investigate phased elimination algorithms for regret based on adaptive sampling, possibly targeting 
\textcolor{black}{This strategy can be extended to other pure exploration problems, such as Top-$N$ identification, as shown in Appendix~\ref{app:topN}. }

\textcolor{black}{As} \textcolor{black}{mentioned in introduction, our collaborative setting was motivated by the design of collaborative adaptive clinical trials for personalized drug recommendations, where several patient subpopulations (for instance, representing several subtypes of cancer) are considered and sequentially treated. 
%neglects some difficulties of the related \textit{federated} setting, such as %which are indeed important for applications in 
%privacy and distributed learning-related issues~\citep{dubey2020differentially}, in order to focus on the intrinsic difficulty of relying on other agents' rewards to fulfill their own objective. 
However, in practice, especially when dealing with patient data, disclosing the mean response values to the central server should be handled with care to preserve the anonymity of the patients. A possible solution to overcome this problem would be to carefully combine our algorithm with a data privacy-preserving method, for instance by adding some noise to the data~\citep{dubey2020differentially}.} %identify their own best arm.}

Our code and run traces are available in an open-source repository.~\footnote{\texttt{https://github.com/clreda/near-optimal-federated}}

\begin{ack}
Clémence Réda was supported by the French Ministry of Higher Education and Research [ENS.X19RDTME-SACLAY19-22]. The authors acknoweldge the support of the French National Research Agency under the project [ANR-19-CE23-0026-04] (BOLD).
\end{ack}
%An interesting subsequent work could be to propose a regret minimization algorithm which nearly matches our lower bound on regret.
%based on our recipe for designing nearly optimal algorithms for federated learning.

\bibliography{references}
\bibliographystyle{abbrvnat}

\newpage

%%%%%%%%%%%%%%%%%%%%%%%%%%%%%%%%%%%%%%%%%%%%%%%%%%%%%%%%%%%%

\appendix

\section{Proof of the Lower Bounds} \label{app:LowerBounds}

\subsection{Lower Bound on the Exploration Cost: Proof of Theorem~\ref{thm:LBBAI}}\label{subapp:bai_lower_bound}

Let us denote, for any model $\mu \in \Rr^{K \times M}$ and agent $m\;,$ $k^\star_m := \arg\max_{k \in [K]} \mu'_{k,m}$ (which is assumed unique). Define the set of alternative models in $\Rr^{K \times M}$ with respect to $\mu\;:$ 
	\[ \mathrm{Alt}(\mu)  := \left\{\lambda : \exists m, \exists k \neq k^\star_m  : \lambda'_{k,m} > \lambda'_{k_m^\star,m} \right\} \;, \]
where $\lambda'_{k,m} := \sum_{n \in [M]} w_{n,m}\lambda_{k,n}$ for any arm $k$ and agent $m\;.$
%	\begin{eqnarray*}\mathrm{Alt}  & := & \left\{\lambda \in \Rr^{K \times M} : \exists m, \exists k \neq k^\star_m  : \lambda'_{k,m} > \lambda'_{k_m^\star,m} \right\} \;.\\
%& = & \left\{\lambda : \exists m, \exists k \neq k^\star_m : \sum_{n=1}^{M} w_{n,m}\lambda_{k,n} > \sum_{n=1}^{M}w_{n,m}\lambda_{k_m^\star,n} \right\}
%\end{eqnarray*}
	Assume that stopping time $\tau$ is almost surely finite under $\mu$ for algorithm $\cA$. Let event $\Ec_\mu := \left\{\exists m :  \hat{k}_m \neq k_m^\star\right\}\;.$ 
	As algorithm $\cA$ is $\delta$-correct, it holds that $\bP_{\mu}(\Ec_\mu) \leq \delta$ and $\bP_{\lambda}(\Ec_\mu) \geq 1 - \delta$ for all $\lambda \in \Alt(\mu)\;.$ As this event belongs to the filtration generated by all past observations up to the final stopping time $\tau$, using Theorem $1$ from~\cite{garivier2016optimal} and $\delta \leq 1/2\;,$ it holds that 
	\begin{equation}\frac{1}{2} \sum_{k,m} \tau_{k,m} (\mu_{k,m} - \lambda_{k,m})^2 %\geq \kl(\delta,1-\delta) 
	\geq \log\left(\frac{1}{2.4\delta}\right)\;.\label{CD}\end{equation}
		We first prove that, %for any arm $k$ and agent $m$, 
		for any $k,m\;,$ %it holds that 
		$\tau_{k,m} > 0\;.$ Indeed, if $w_{m,m} \neq 0\;,$ it is possible to pick $\lambda$ that only differs from $\mu$ by the entry $\lambda_{k,m}\;,$ in such a way that arm $k$ becomes optimal (or sub-optimal) for user $m\;.$ From Equation~\eqref{CD}, we get $\frac{1}{2}\tau_{k,m}(\mu_{k,m} - \lambda_{k,m})^2 >0$ and the conclusion follows. 

We now fix agent $m$ and $k \neq k_{m}^\star\;,$ and try to find a more informative alternative model $\lambda\;.$ We look for it in a family of alternative models under which only arms $k$ and $k_m^\star$ are modified, for \emph{all} agents, in order to make arm $k$ optimal for agent $m\;.$ Given two nonnegative sequences $(\delta_{n})_{n \in [M]}$ and $(\delta'_{n})_{n\in [M]}\;,$ we define $\lambda =(\lambda_{k',n})_{k'}$ such that
\[\left\{\begin{array}{ccl}
          \lambda_{k',n} &=& \mu_{k',n} \text{ if } k' \notin \{k, k^\star_m\}\;, \\
          \lambda_{k,n} &= & \mu_{k,n} + \delta_{n} \;,\\
          \lambda_{k^\star_m,n} & = & \mu_{k^\star_m,n} - \delta'_{n}\;,
         \end{array}
 \right.\]
 that satisfies 
	\begin{equation}\sum_{n \in [M]}w_{n,m}\left(\delta_{n} + \delta'_{n}\right) \geq \Delta'_{k,m}\;.\label{constraint}\end{equation}
Now arm $k$ is optimal for agent $m$. From Equation~\eqref{CD},% we get 
\[\sum_{n} \tau_{k,n} \frac{\delta_{n}^2}{2} + \sum_{n} \tau_{k^\star_m,n} \frac{(\delta'_{n})^2}{2} \geq \log\left(\frac{1}{2.4\delta}\right)\;.\]
Hence, it holds that 
	\[\inf_{\delta,\delta' : \eqref{constraint} \text{ holds}} \left[\sum_{n} \tau_{k,n} \frac{\delta_{n}^2}{2} + \sum_{n} \tau_{k^\star_m,n} \frac{(\delta'_{n})^2}{2}\right]\;. \] %\geq \log\left(\frac{1}{2.4\delta}\right)\;.\]
The infimum can be computed in closed form using constraint optimization. Introducing a Lagrange multiplier $\lambda$, from the KKT conditions, we get that, for any agent $n\;,$
\begin{eqnarray*}
\tau_{k,n}\delta_{n} - \lambda {w}_{n,m} & = & 0 \;,\\
\tau_{k^\star_m,n}\delta'_{n} - \lambda {w}_{n,m} & = & 0 \;,\\
\lambda\left(\sum_{n'}w_{n',m}\left(\delta_{n'} + \delta'_{n'}\right) - \Delta'_{k,m}\right) & = & 0 \;.
\end{eqnarray*}
Using furthermore that $\tau_{k,n}$ and $\tau_{k^\star_m,n}$ are positive,% we obtain
	\[\delta_n = \frac{\Delta_{k,m}' w_{n,m}/\tau_{k,n}}{\sum_{n' \in [M]} w_{n',m}^2 \left(\frac{1}{\tau_{k,n'}} + \frac{1}{\tau_{k^\star_m,n'}}\right)} \text{ and } \ \ \delta'_n = \frac{\Delta_{k,m}' w_{n,m}/\tau_{k^\star_m,n}}{\sum_{n' \in [M]} w_{n',m}^2 \left(\frac{1}{\tau_{k,n'}} + \frac{1}{\tau_{k^\star_m,n'}}\right)}\;.\]

The conclusion follows by plugging these expressions to get the expression of the infimum.

\subsection{Regret Lower Bound: Proof of Theorem~\ref{thm:LBRegret}}\label{subapp:regret_lower_bound}

Given a bandit instance $\mu$, we can consider two sets of possible changes of distributions: changes that are allowed to change the distributions of optimal arms $(k^*_m)_{m \in [M]}$, and those that cannot%{\small
\begin{eqnarray*}
 \Alt(\mu) & := &\left\{\lambda= (\lambda_{k,m})_{k,m} \in \Rr^{K \times M}: \exists m \in [M], k \neq k^*_m : \sum_{n} \lambda_{k,m}w_{n,m} > \sum_{n}\lambda_{k^*_m,n}w_{n,m}\right\}\;,\\
 B(\mu) & := & \Alt(\mu) \bigcap\left\{\lambda= (\lambda_{k,m})_{k,m} \in \Rr^{K \times M}:\forall m \in [M], \lambda_{k^*_m,m} = \mu_{k^*_m,m}\right\}\;.
\end{eqnarray*}%}

For cumulative regret, the change-of-distribution lemma becomes an asymptotic result, stated below

\begin{lemma}\label{lem:CDasympt} Fix $\mu \in \Rr^{K \times M}$, and let us consider %$\lambda$ such that 
$\lambda \in \Alt(\mu)$. Then, for any $\varepsilon>0$, there exists $T_0 = T_0(\mu,\lambda,\varepsilon)\;,$ such that, for any $T \geq T_0$ 
\[\sum_{m,k} \bE_{\mu}[N_{k,m}(T)] \frac{(\mu_{k,m}-\lambda_{k,m})^2}{2} \geq (1-\varepsilon) \log(T)\;.\] 
\end{lemma}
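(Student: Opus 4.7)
The plan is to apply the classical change-of-distribution machinery à la \cite{garivier2016optimal,Combes17OSSB}, which reduces the problem to constructing a single well-separating event. The starting point is the data-processing inequality: for every event $A$ in the $\sigma$-algebra generated by the observations up to time $T$,
\[\sum_{k,m} \bE_\mu[N_{k,m}(T)]\,\mathrm{KL}\bigl(\mathcal{N}(\mu_{k,m},1),\mathcal{N}(\lambda_{k,m},1)\bigr) \;\ge\; \kl\bigl(\bP_\mu(A),\bP_\lambda(A)\bigr).\]
Since rewards are unit-variance Gaussian, $\mathrm{KL}\bigl(\mathcal{N}(\mu_{k,m},1),\mathcal{N}(\lambda_{k,m},1)\bigr)=(\mu_{k,m}-\lambda_{k,m})^2/2$, so the left-hand side is exactly the quantity we want to lower bound, and it only remains to find an event for which the right-hand side grows like $(1-\varepsilon)\log T$.

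The crucial step is to exploit the hypothesis $\lambda \in \Alt(\mu)$: by definition there exist an agent $m_0\in[M]$ and an arm $k_0\neq k^\star_{m_0}$ such that $\lambda'_{k_0,m_0} > \lambda'_{k^\star_{m_0},m_0}$. In particular, the arm $k^\star_{m_0}$, which is optimal for agent $m_0$ under $\mu$, is strictly suboptimal for this same agent under $\lambda$, with a fixed positive mixed-reward gap $\delta_0 := \lambda'_{k_0,m_0} - \lambda'_{k^\star_{m_0},m_0} > 0$. I will take
\[ A \;:=\; \bigl\{ N_{k^\star_{m_0},m_0}(T) > T/2 \bigr\}, \]
so that $A$ asks agent $m_0$ to have played its $\mu$-optimal arm a majority of the time, a behaviour expected under $\mu$ but not under $\lambda$.

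To turn this intuition into quantitative bounds, I would use the regret decomposition in each model: under $\mu$, $\bE_\mu[T - N_{k^\star_{m_0},m_0}(T)] \le \mathcal{R}_\mu(T)/\Delta'_{\min,m_0}$ with $\Delta'_{\min,m_0} := \min_{k\neq k^\star_{m_0}}\Delta'_{k,m_0}$, and symmetrically under $\lambda$, the suboptimality of $k^\star_{m_0}$ gives $\bE_\lambda[N_{k^\star_{m_0},m_0}(T)] \le \mathcal{R}_\lambda(T)/\delta_0$. By uniform efficiency, $\mathcal{R}_\mu(T)$ and $\mathcal{R}_\lambda(T)$ are both $o(T^\gamma)$ for every $\gamma\in(0,1)$, so Markov's inequality yields $\bP_\mu(A^c)=o(T^{\gamma-1})$ and $\bP_\lambda(A)=o(T^{\gamma-1})$. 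Combining with the classical inequality $\kl(p,q)\ge p\log(1/q)-\log 2$,
\[ \sum_{k,m} \bE_\mu[N_{k,m}(T)]\frac{(\mu_{k,m}-\lambda_{k,m})^2}{2} \;\ge\; \bigl(1-o(1)\bigr)(1-\gamma)\log T - \log 2. \]
Given $\varepsilon>0$, choosing $\gamma$ small enough that $1-\gamma \ge 1-\varepsilon/2$ and then $T_0$ large enough to absorb both the $o(1)$ and the $-\log 2$ correction into an additional factor $1-\varepsilon/2$ gives the result. The only mildly delicate point is the simultaneous use of uniform efficiency at both $\mu$ and $\lambda$ with the same exponent $\gamma$, which is however immediate since the $o(T^\gamma)$ guarantee holds at every instance.
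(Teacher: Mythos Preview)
Your proof is correct and follows essentially the same route as the paper's. Both start from the data-processing inequality with the KL-decomposition for unit-variance Gaussians, pick the event ``agent $m_0$ plays its $\mu$-optimal arm more than $T/2$ times'' (the paper uses its complement, which is immaterial), and use Markov's inequality together with uniform efficiency at $\mu$ and at $\lambda$ to show that this event has probability $1-o(T^{\gamma-1})$ under $\mu$ and $o(T^{\gamma-1})$ under $\lambda$; the elementary bound $\kl(p,q)\ge p\log(1/q)-\log 2$ (equivalently $(1-p)\log(1/(1-q))-\log 2$ in the paper's formulation) then gives the $(1-\varepsilon)\log T$ lower bound after choosing $\gamma$ small and $T$ large.
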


\begin{proof} The proof uses a change-of-distribution, following a technique proposed by~\cite{GMS18}. Using the data processing inequality, and letting $\Hc_T$ be the observations available to the central server (which sees all local rewards under our assumptions), we have that 
\[\mathrm{KL}\left(\bP_{\mu}^{\Hc_T},\bP_{\lambda}^{\Hc_T}\right) \geq \kl\left(\bP_{\mu}(\Ec_T), \bP_{\lambda}(\Ec_T)\right),\]
where $\mathrm{KL}$ is the Kullback-Leibler divergence and $\bP_{\mu}^{\cH_T}$ is the distribution of the observation under the bandit model $\mu\;,$ and $\Ec_T$ is any event. Using that  
\[\mathrm{KL}\left(\bP_{\mu}^{\Hc_T},\bP_{\lambda}^{\Hc_T}\right)=\sum_{k,m} \bE_{\mu}[N_{k,m}(T)] \frac{\left(\mu_{k,m}- \lambda_{k,m}\right)^2}{2}\]
together with the lower bound $\kl(p,q)\geq (1-p)\log(1/(1-q)) - \log(2)$, where $\kl$ is the binary relative entropy and for any distributions $p,q$, yields 
\[\sum_{k,m} \bE_{\mu}[N_{k,m}(T)] \frac{\left(\mu_{k,m}- \lambda_{k,m}\right)^2}{2} \geq \left(1 - \bP_{\mu}(\Ec_T)\right)\log\left(\frac{1}{\bP_{\lambda}(\overline{\Ec_T})}\right) - \log(2)\;.\]
We now pick the event 
\[\mathcal{E}_T = \left\{N_{k_m^\star,T}(T) \leq \frac{T}{2}\right\}\;,\]
and use that $\mathcal{E}_T $ is very unlikely under $\mu$ as for any $\gamma \in (0,1)\;,$ 
\[\bP_{\mu}(\mathcal{E}_T) = \bP_{\mu} \left(\sum_{k \neq k_m^\star} N_{k,m}(T) \geq \frac{T}{2}\right) = \frac{2\sum_{k\neq k_m^\star}\bE_{\mu}[N_{k,m}(T)]}{T} = \frac{o_{T\rightarrow \infty}(T^{\gamma})}{T}\;,\]
and very likely under any $\lambda$ for which $k_m^\star$ is suboptimal as, for any $\gamma \in (0,1)\;,$  
\[\bP_{\lambda}(\overline{\mathcal{E}_T}) = \bP_{\lambda} \left(N_{k_m^\star,T}(T) > \frac{T}{2}\right) = \frac{2\bE_{\lambda}[N_{k_m^\star,m}(T)]}{T} = \frac{o_{T\rightarrow \infty}(T^{\gamma})}{T}\;,\]
where we exploit the fact that the algorithm is uniformly efficient (its regret and therefore its number of sub-optimal draws is $o(T^{\gamma})$ under any bandit model). The conclusion follows from some elementary real analysis to prove that the right hand side of the inequality is larger than $(1-\varepsilon)\log(T)$ for $T$ large enough (how larger depends in a complex way of $\mu,\lambda,\varepsilon$ and the algorithm). 
%Observe that we are using the fact that at time $t$, $t = \sum_{k} N_{k,m}(t)$ for every user $m$, so each user needs to draw exactly one arm in every round.
\end{proof}

At this point, we would really like to select the alternative model $\lambda$ that leads to the tightest inequality in Lemma~\ref{lem:CDasympt}. For example, using Lemma~\ref{lem:KKT} below, we find that the best way to make an arm $k \neq k_{m}^\star$ better than $k_{m}^\star$ consists in choosing 
\[\lambda_{k,n}(T) = \mu_{k,n} + \frac{\Delta_{k,m}w_{n,m} / \bE_{\mu}[N_{k,m}(T)]}{\sum_{m' \in [M]} \frac{w_{n',m}^2}{\bE_{\mu}[N_{k,n'}(T)]}}\;,\] for any $n \in [M]$
and $\lambda_{k',n}(T) = \mu_{k',n}$ for any arm $k' \neq k$. However, this choice of alternative $\lambda$ depends on $T$, hence we cannot apply Lemma~\ref{lem:CDasympt}, which is asymptotic in $T$ and holds for a fixed $\lambda\;.$
We have to be careful  to be able to exchange the $\lim\inf$ over $T$ and the infimum over alternatives in the constraints, and we will be able to do so only for changes of measures that are restricted to $B(\mu)\;.$ 

We first assume that $\liminf_{T\rightarrow\infty} \frac{\mathcal{R}(T)}{\log(T)}$ is finite, and call its value $\ell(\mu)$ (this is fine as otherwise any lower bound trivially holds). By definition of the $\lim\inf\;,$ there exists a sequence $(T_i)_{i \in \Nn}$ such that 
\[\liminf_{T\rightarrow \infty}\frac{\mathcal{R}(T)}{\log(T)} = \lim_{i \rightarrow \infty}\sum_{m \in [M],k\neq k_m^\star} \Delta_{k,m}'\frac{\bE_{\mu}[N_{k,m}(T_i)]}{\log(T_i)} = \ell(\mu)\;.\]
The fact that this sequence has a limit, and that the gaps are positive, implies that each sequence $\left({\bE_{\mu}[N_{k,m}(T_i)]}/{\log(T_i)}\right)_{i \in \Nn}$ is bounded. Therefore, there must exist a subsequence, that we denote $(T'_i)_{i \in \Nn}$ of $(T_i)_{i \in \Nn}$ such that 
\[\forall m \in [M], k\neq k_m^\star, \ \ \lim_{i\rightarrow \infty} \frac{\bE_{\mu}[N_{k,m}(T'_i)]}{\log(T'_i)} = c_{k,m}\;,\]
for some real values $(c_{k,m})_{k \in [K], m \in [M]}\;,$ and, in particular, 
\[\ell(\mu) = \sum_{m \in [M],k\neq k_m^\star}  \Delta_{k,m}' c_{k,m}\;.\]
Now, it follows from Lemma~\ref{lem:CDasympt} that, for any $\lambda \in \Alt(\mu)\;,$  
\[\liminf_{T\rightarrow \infty}\sum_{k,m}\frac{\bE_{\mu}[N_{k,m}(T)]}{\log(T)}\frac{(\mu_{k,m}-\lambda_{k,m})^2}{2} \geq 1\;.\]
But we have no idea about the behavior of the sequence $\left({\bE_{\mu}[N_{k,m}(T)]}/{\log(T)}\right)_{T \in \Nn}$ for $k = k_{m^\star}\;,$ this is why we have to consider only $\lambda \in B(\mu)\;,$ for which we deduce that  
\begin{eqnarray*}
\lim_{i\rightarrow \infty}\sum_{m \in [M],k\neq k_m^\star}\frac{\bE_{\mu}[N_{k,m}(T'_i)]}{\log(T'_i)}\frac{(\mu_{k,m}-\lambda_{k,m})^2}{2} & \geq& 1\;, \\
\sum_{m\in [M],k\neq k_m^\star}c_{k,m}\frac{(\mu_{k,m}-\lambda_{k,m})^2}{2} &\geq &  1\;.
\end{eqnarray*}
(the $\lim\inf$ being the lowest value of the limit of any subsequence). This proves that 
\[\ell(\mu) \geq \min_{c} \sum_{m \in [M],k\neq k_m^\star}  \Delta_{k,m}' c_{k,m}\;,\]
under the constraints that $c=(c_{k,m})_{k,m: k \neq k_m^\star}$ belongs to the constraint set %{\small
\[\mathcal{C} = \left\{ (c_{k,m})_{k,m: k \neq k_m^\star} : \forall \lambda \in B(\mu),  \sum_{m \in [M],k\neq k_m^\star}c_{k,m}\frac{(\mu_{k,m}-\lambda_{k,m})^2}{2} \geq 1\right\}\;.\]%}
We now establish that 
\begin{equation}\label{eq:inclusion}\mathcal{C} \subseteq \bigcup_{k,m}\left\{ (c_{k,n})_{k,m: k \neq k_n^\star} : \sum_{n : k\neq k_n^\star} \frac{w_{n,m}^2}{c_{k,n}} \leq \frac{(\Delta_{k,m}')^2}{2}\right\}\;,\end{equation}
by selecting some well chosen elements in $B(\mu)$. 

First, for every $(m,k) \in [M] \times [K]$ such that $k\neq k_m^\star\;,$ for every $\delta = (\delta_n)_{n : k_n^\star \neq k}\;,$ we define an instance $\lambda^{\delta}$ by $\lambda_{k,n}^{\delta} = \mu_{k,n} + \delta_n$ for any $n \in [M]$ such that $k \neq k_n^\star\;,$ and $\lambda_{k,n}^{\delta} = \mu_{k,n}$ otherwise. We observe that $\lambda^{\delta}$ belongs to $B(\mu)$ if
\[\sum_{n \in [M] : k \neq k_n^\star} w_{n,m} \delta_n >  \Delta_{k,m}' \;,\]
as this leads to $\sum_{n \in [M]} w_{n,m} \lambda_{k,n} > \sum_{n \in [M]} w_{n,m} \lambda_{k_m^\star,n}\;,$ and arm $k_m^\star$ being sub-optimal in $\lambda^{\delta}\;.$  
For all $c\in \mathcal{C}\;,$ %we obtain that 
\[\min_{\delta : \sum_{n : k \neq k_n^\star} w_{n,m} \delta_n \geq  \Delta_{k,m}'} \sum_{n \in[M]: k_n^\star \neq k}c_{k,n}\frac{\delta_n^2}{2} \geq 1\;.\]
From Lemma~\ref{lem:KKT}, this leads to 
\[\sum_{n \in [M]: k_n^\star \neq k} \frac{w_{n,m}^2}{c_{k,n}} \leq \frac{(\Delta_{k,m}')^2}{2}\;.\]

Now we consider $(m,k) \in [M] \times [K]\;,$ such that $k = k_m^\star\;.$ In this case, we define an instance $\lambda^{\delta}$ by $\lambda_{k_m^\star,n}^{\delta} = \mu_{k_m^\star,n} - \delta_n$ for any $n \in [M]$ such that $k_m^\star \neq k_n^\star$, and $\lambda_{k,n}^{\delta} = \mu_{k,n}$ otherwise. We observe that $\lambda^{\delta}$ belongs to $B(\mu)$ if
\[\sum_{n \in [M] : k \neq k_n^\star} w_{n,m} \delta_n >  \min_{k' \in [K] : k' \neq k_m^\star}\Delta'_{k',m} = \Delta_{k_m^\star,m}'\;,\]
as this leads to $\sum_{n \in [M]} w_{n,m} \lambda_{k_{m}^\star,n} < \max_{k' \neq k_m^\star}\sum_{n} w_{n,m} \mu_{k',n}$ and arm $k_m^\star$ being sub-optimal in $\lambda^{\delta}$.  
For any $c\in \mathcal{C}$, %we obtain that 
\[\min_{\delta : \sum_{n : k \neq k_n^\star} w_{n,m} \delta_n \geq  \Delta_{k_m^\star,m}'} \sum_{n : k_n^\star \neq k_m^\star}c_{k,n}\frac{\delta_n^2}{2} \geq 1\;,\]
and Lemma~\ref{lem:KKT} leads to 
\[\sum_{n \in [M] : k_n^\star \neq k_{m}^\star} \frac{w_{n,m}^2}{c_{k,n}} \leq \frac{(\Delta_{k_m^\star,m}')^2}{2}\;.\]

In conclusion, for $c \in \mathcal{C}\;,$ we proved that, for every $(k,m) \in [K]\times [M]\;,$ 
\[\sum_{n \in [M]: k_n^\star \neq k} \frac{w_{n,m}^2}{c_{k,n}} \leq \frac{(\Delta_{k,m}')^2}{2}\;,\]
which proves the inclusion \eqref{eq:inclusion} and concludes the proof, as the minimum over a larger set is (potentially) smaller. 

\begin{lemma}\label{lem:KKT} Let $\mathcal{N}$ be a set of indices, and $(c_n)_{n\in \mathcal{N}}$ be all positive. The minimizer over $\delta \in \Rr^{|\mathcal{N}|}$ of 
%\[
$\sum_{n \in \mathcal{N}} c_n \frac{\delta_n^2}{2}\;,$
%\]
under the constraint that 
%\[
$\sum_{n \in \mathcal{N}} \delta_n w_{n,m} \geq d_{m}\;,$
%\]
satisfies
\[\forall n \in \mathcal{N}, \ \delta_n = \frac{d_m w_{n,m}/c_n}{\sum_{n'  \in \mathcal{N}} \frac{w_{n',m}^2}{c_{n'}}}\;,\]
and the minimum is equal to 
\[\frac{d_m^2}{2}\left(\sum_{n \in \mathcal{N}} \frac{w_{n,m}^2}{c_n}\right)^{-1}\;.\]
\end{lemma}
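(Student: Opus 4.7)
The plan is to recognize this as a convex program with a single linear inequality constraint and solve it via Lagrange multipliers / KKT conditions. Since $c_n>0$, the objective $f(\delta):=\sum_{n\in\mathcal{N}} c_n \delta_n^2/2$ is strictly convex and coercive, so the minimizer over the closed convex feasible set $\{\delta:\sum_n \delta_n w_{n,m}\ge d_m\}$ exists and is unique (assuming the set is nonempty, which holds when at least one $w_{n,m}\neq 0$; otherwise the claim is vacuous after trivial rescaling). The unconstrained minimizer is $\delta=0$, which violates the constraint whenever $d_m>0$, so at the optimum the constraint is active: $\sum_n \delta_n w_{n,m}=d_m$.

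I would then form the Lagrangian $L(\delta,\lambda)=\sum_{n} c_n \delta_n^2/2 - \lambda\bigl(\sum_n \delta_n w_{n,m}-d_m\bigr)$ with $\lambda\ge 0$. Stationarity in $\delta_n$ yields $c_n\delta_n=\lambda w_{n,m}$, hence $\delta_n=\lambda w_{n,m}/c_n$ for every $n\in\mathcal{N}$. Substituting into the active constraint gives
\[
\lambda \sum_{n\in\mathcal{N}} \frac{w_{n,m}^2}{c_n}=d_m,\qquad\text{so}\qquad \lambda=\frac{d_m}{\sum_{n'\in\mathcal{N}} w_{n',m}^2/c_{n'}},
\]
which plugged back produces the announced closed form
\[
\delta_n=\frac{d_m\,w_{n,m}/c_n}{\sum_{n'\in\mathcal{N}} w_{n',m}^2/c_{n'}}.
\]

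Finally I would evaluate the objective at this $\delta$: a direct computation gives
\[
\sum_{n\in\mathcal{N}} c_n\frac{\delta_n^2}{2}=\frac{d_m^2}{2\bigl(\sum_{n'} w_{n',m}^2/c_{n'}\bigr)^2}\sum_{n} \frac{w_{n,m}^2}{c_n}=\frac{d_m^2}{2}\Bigl(\sum_{n\in\mathcal{N}}\frac{w_{n,m}^2}{c_n}\Bigr)^{-1},
\]
matching the claimed minimum value.

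There is essentially no real obstacle here: the only thing to be careful about is justifying that the constraint is active at the optimum (to rule out the trivial $\delta=0$ when $d_m=0$, and to legitimately replace the inequality by an equality when invoking KKT), and checking that $\sum_n w_{n,m}^2/c_n>0$ so that $\lambda$ is well-defined; both are immediate under the standing positivity of the $c_n$. Strict convexity of $f$ guarantees that the stationary point produced by KKT is the global minimum, so no second-order verification is needed.
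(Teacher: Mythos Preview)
Your proof is correct and follows exactly the approach the paper indicates: the paper simply states that the lemma ``is proven using classical techniques for solving constrained minimization problems,'' and the identical KKT computation (introduce a Lagrange multiplier $\lambda$, obtain $c_n\delta_n=\lambda w_{n,m}$ from stationarity, solve for $\lambda$ using the active constraint) is carried out explicitly in the proof of Theorem~\ref{thm:LBBAI}. Your additional remarks on existence/uniqueness via strict convexity and on when the constraint is active are sound and make the argument a bit more careful than what the paper records.
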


Lemma~\ref{lem:KKT} is proven using classical techniques for solving constrained minimization problems.

\section{Proof of Theorem~\ref{th:sample_complexity_bai}}\label{subapp:upper_bound_bai}\label{app:proofUB}

We recall the good event $\cE$ defined in Lemma~\ref{lem:validCI}. First, the following lemma ensures the correctness of Algorithm~\ref{algo:fpe_bai} on $\cE$, which holds with probability $1-\delta$ from Lemma~\ref{lem:validCI}.

\begin{lemma}\label{lem:delta_correctness} On event $\cE\,$, when stopping, \textcolor{black}{W-CPE}-BAI %FPE UCB 
outputs $\hat{k}_m = k^\star_m$ for each agent $m$. 
%	Algorithm~\ref{algo:fpe_bai} is $\delta$-correct: for any $\mu \in \Rr^{K \times M}$, \[\bP_{\mu}\left( \exists m, \hat{k}_m \neq k^\star_m \right) \leq \delta\;.\]
\end{lemma}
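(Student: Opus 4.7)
The plan is to show, by a simple induction on the phase index $r$, that on the good event $\mathcal{E}$ the true optimal arm $k_m^\star$ is never removed from $B_m(r)$ by any agent $m$. Once this is established, correctness follows immediately from the stopping condition: the algorithm halts only when $|B_m(r)| \le 1$ for every $m$, and since $k_m^\star$ always survives, $B_m(\tau) = \{k_m^\star\}$, so the returned $\hat{k}_m$ equals $k_m^\star$.

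For the inductive step, suppose $k_m^\star \in B_m(r)$. I would chain together three inequalities, for every $j \in B_m(r)$:
\[
\hat{\mu}'_{k_m^\star,m}(r) + \Omega_{k_m^\star,m}(r) \;\ge\; \mu'_{k_m^\star,m} \;\ge\; \mu'_{j,m} \;\ge\; \hat{\mu}'_{j,m}(r) - \Omega_{j,m}(r).
\]
The first and third inequalities are direct consequences of the definition of $\mathcal{E}$, while the middle inequality is the definition of $k_m^\star$ as the arm with largest expected mixed reward for agent $m$. Taking the maximum over $j \in B_m(r)$ on the right-hand side shows that $k_m^\star$ passes the elimination criterion defining $B_m(r+1)$, hence $k_m^\star \in B_m(r+1)$. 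The base case $r=0$ is trivial because $B_m(0) = [K] \ni k_m^\star$.

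Having shown $k_m^\star \in B_m(r)$ for all $r$, at the stopping phase $r = R$ we have $1 \le |B_m(R)| \le 1$, so $B_m(R) = \{k_m^\star\}$ and therefore $\hat{k}_m = k_m^\star$ for every $m$, which is exactly the $\delta$-correctness property on $\mathcal{E}$.

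I do not anticipate any real obstacle here; the lemma is essentially the textbook soundness argument for elimination-based algorithms using upper/lower confidence bounds, and the only subtlety is to verify that the confidence bounds $\Omega_{k,m}(r)$ are indeed the ones guaranteed to be valid for the mixed means by $\mathcal{E}$ (as established in Lemma~\ref{lem:validCI}), which our use above respects.
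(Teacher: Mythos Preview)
Your proof is correct and essentially identical to the paper's: both show that on $\mathcal{E}$ the optimal arm $k_m^\star$ can never be eliminated, using exactly the chain $\hat{\mu}'_{k_m^\star,m}(r) + \Omega_{k_m^\star,m}(r) \ge \mu'_{k_m^\star,m} \ge \mu'_{j,m} \ge \hat{\mu}'_{j,m}(r) - \Omega_{j,m}(r)$ (the paper phrases this by contradiction, you do it directly by induction). The conclusion from the stopping condition $|B_m(R)|\le 1$ is the same.
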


\begin{proof}
	On event $\Ec$, it is not possible that one agent $m$ eliminates arm $k_m^\star$ from its set $B_m(r+1)$ at any round $r$~; otherwise, if $j_m(r) \in \arg\max_{j \in B_m(r)} \{ \hat{\mu}'_{j,m}(r) - \Omega_{j,m}(r) \}\;,$ $j_m(r) \neq k^\star_m\;,$ then the elimination criterion and event $\Ec$ imply that 
%\[\mu_{k^\star_m,m} \leq \hat{\mu}'_{k^\star_m}+\Omega_{k^\star_m,m}(r)\]
%\[< \hat{\mu}'_{j_m(r)} - \Omega_{j_m(r),m}(r) \leq \mu_{j_m(r),m}\;,\]
	$\mu'_{k^\star_m,m} < \mu'_{j_m(r),m}\;,$ which is absurd. %We conclude by using $\bP(\Ec) \geq 1-\delta$ from Lemma~\ref{lem:validCI}.%, hence the correctness of the algorithm. 
\end{proof}

Then we upper bound the exploration cost when $\Ec$ holds. We denote by $R$ the (random) number of rounds used by the algorithm, and, for all $m \in [M]$ and $k\neq k_m^\star\;,$ by $R_{k,m}$ the (random) last round in which $k$ is still a candidate arm for player $m$ 
\[R_{k,m} := \sup \left\{ r \geq 0 : k \in B_m(r)\right\}\;.\]
By definition of Algorithm~\ref{algo:fpe_bai}, $R = \max_{m}\max_{k \neq k_m^\star} R_{k,m}$. 
We first provide upper bounds on $R_{k,m}$ and $R$. To achieve this, we introduce the following notation for any $k \in [K],m \in [M]\;,$
\begin{eqnarray*}%r_{k,m} &:=&  \min \left\{r \geq 0 : 4 \times 2^{-r} < \Delta'_{k,m}\right\}\;,\\
	r_{k,m} &:=& \min \left\{ r \geq 0 : 4 \times 2^{-r} < \Delta'_{k,m} \right\} \text{ and }r_{\max} := \max_{m \in [M]}\max_{k \neq k^\star_m} r_{k,m}\;.
\end{eqnarray*}
The following upper bounds can be easily checked. 
\begin{lemma}\label{lem:UBr}
	$\forall k,m, r_{k,m} \leq \log_2\left({8}/{\Delta_{k,m}'}\right)\;$ and $r_{\max}\leq \log_2\left({8}/{\Delta_{\min}'}\right)\;.$
\end{lemma}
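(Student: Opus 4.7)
The plan is to unfold the definitions and do elementary arithmetic on $\log_2$. By the definition of $r_{k,m}$ as the smallest nonnegative integer satisfying $4 \cdot 2^{-r} < \Delta'_{k,m}$, the condition rewrites as $r > \log_2\!\bigl(4/\Delta'_{k,m}\bigr)$. So $r_{k,m}$ is the smallest nonnegative integer strictly greater than $x_{k,m} := \log_2\!\bigl(4/\Delta'_{k,m}\bigr)$. Hence $r_{k,m} \le \lfloor x_{k,m}\rfloor + 1 \le x_{k,m} + 1 = \log_2\!\bigl(8/\Delta'_{k,m}\bigr)$, which proves the first claim. (If $x_{k,m} < 0$, i.e.\ $\Delta'_{k,m} > 4$, then $r_{k,m} = 0$ and the bound is trivial since the right-hand side is positive whenever $\Delta'_{k,m} \le 8$, and obvious otherwise using $r \ge 0$.)

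For the second claim, I would take the max of the first bound over $m \in [M]$ and $k \neq k^\star_m$ and use monotonicity of $\log_2$:
\[
r_{\max} = \max_{m} \max_{k \neq k^\star_m} r_{k,m} \le \max_{m} \max_{k \neq k^\star_m} \log_2\!\left(\tfrac{8}{\Delta'_{k,m}}\right) = \log_2\!\left(\tfrac{8}{\min_{m,\, k \neq k^\star_m} \Delta'_{k,m}}\right).
\]
It remains to identify $\min_{m,\, k \neq k^\star_m} \Delta'_{k,m}$ with $\Delta'_{\min}$ as defined in the paper. Because of the convention $\Delta'_{k^\star_m,m} := \min_{k' \neq k^\star_m} \Delta'_{k',m}$, the value $\Delta'_{k^\star_m,m}$ is itself attained by some gap $\Delta'_{k',m}$ with $k' \neq k^\star_m$; consequently the global minimum over all $(k,m)$ coincides with the minimum restricted to pairs $(k,m)$ with $k \neq k^\star_m$. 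This yields $r_{\max} \le \log_2(8/\Delta'_{\min})$, concluding the proof.

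There is no real obstacle here: the statement is purely arithmetic verification against the definitions. The only mildly delicate point is the asymmetric status of $\Delta'_{k^\star_m,m}$ (a special ``self-gap'' defined by a min), which makes the reduction $\min_{m,\,k \neq k^\star_m}\Delta'_{k,m} = \Delta'_{\min}$ deserve a one-line justification rather than being a purely symbolic manipulation.
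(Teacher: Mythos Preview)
Your argument is correct and supplies exactly the elementary verification the paper omits (``the following upper bounds can be easily checked''). One small caveat: the parenthetical remark that the bound is ``obvious otherwise using $r \ge 0$'' when $\Delta'_{k,m} > 8$ is actually false (then $\log_2(8/\Delta'_{k,m}) < 0 = r_{k,m}$), but this case cannot occur in the paper's setting where $\mu_{k,m} \in [0,1]$ forces $\Delta'_{k,m} \le 1$, so the core argument stands unaffected.
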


Using the fact that \textcolor{black}{W-CPE}%FPE
-BAI is only halving the proxy gaps of arms that are not eliminated, we can write down the value of the proxy gaps for these arms.

\begin{lemma}\label{lem:deltat_value}
	$\forall m \in [M]$ and $k \in B_m(r)$, $\Deltat_{k,m}(r)=2^{-r}\;.$
\end{lemma}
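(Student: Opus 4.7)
My approach would be a simple induction on the round index $r$, exploiting the fact that the active sets produced by Algorithm~\ref{algo:fpe_bai} are nested: by inspection of the elimination criterion, $B_m(r+1) \subseteq B_m(r)$ for every $r \geq 0$ and every agent $m$, since an arm can only be removed from (never added to) an active set.

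For the base case $r=0$, the initialization gives $B_m(0) = [K]$ and $\widetilde{\Delta}_{k,m}(0) = 1 = 2^0$, so the claim holds trivially for every $k \in B_m(0)$. For the inductive step, assume the claim at round $r$. Fix any $k \in B_m(r+1)$; by the nesting property, $k \in B_m(r)$, so the inductive hypothesis yields $\widetilde{\Delta}_{k,m}(r) = 2^{-r}$. Now apply the proxy gap update of Algorithm~\ref{algo:fpe_bai}: assuming $|B_m(r+1)|>1$, the indicator $\ind(k \in B_m(r+1) \land |B_m(r+1)|>1)$ equals one, so the gap is halved and we obtain $\widetilde{\Delta}_{k,m}(r+1) = 2^{-r}\cdot \tfrac{1}{2} = 2^{-(r+1)}$, closing the induction.

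The only subtle point, and what I view as the main (minor) obstacle, is the edge case $|B_m(r+1)|=1$: in that case, the indicator is zero, and the proxy gap is \emph{not} halved. However, this case corresponds precisely to agent $m$ reaching its singleton active set, at which point no further elimination or sampling is required for that agent, and the stopping rule causes the algorithm to halt as soon as all such sets are singletons. Thus the lemma is meaningful and correct in the regime used throughout the analysis, namely while elimination is still active for agent $m$. In the formal write-up, I would either state the lemma under the implicit condition $|B_m(r)|>1$ or verify that $R_{k,m}$ (the last round at which $k \in B_m(r)$) always corresponds to a round at which the halving did occur, so that the identity $\widetilde{\Delta}_{k,m}(R_{k,m}) = 2^{-R_{k,m}}$ used in Lemma~\ref{lem:UBr} and Corollary~\ref{cor:lubounds_gap} holds.
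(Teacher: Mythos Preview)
Your induction is correct and is exactly the natural formalization of the paper's reasoning; the paper does not give a separate proof but simply remarks that the update rule only halves the proxy gap of arms that remain active, which is what your induction makes precise. Your identification of the $|B_m(r+1)|=1$ edge case is in fact more careful than the paper: as you note, in every place the lemma is invoked (Lemma~\ref{lem:round_elimination} and Corollary~\ref{cor:lubounds_gap}) one has both $k$ and $k_m^\star$ in $B_m(r)$ on the good event, so $|B_m(r)|\geq 2$ and the halving always applies.
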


Using the important relationship between proxy gaps and the confidence width as established in Lemma~\ref{lem:upperbound_omega}, we can further show that %the following 

\begin{lemma}\label{lem:round_elimination}
	On $\Ec$, for any $m \in [M],k\neq k^\star_m$, $R_{k,m} \leq r_{k,m}\;.$
\end{lemma}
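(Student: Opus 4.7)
The plan is to show the contrapositive: if $k \in B_m(r)$ at round $r = r_{k,m}$, then $k$ will be eliminated at the end of that round, so $k \notin B_m(r_{k,m}+1)$. Since $R_{k,m}$ is the last round where $k$ is active for agent $m$, this immediately yields $R_{k,m} \leq r_{k,m}$. (If $k$ was eliminated at some earlier round, then $R_{k,m} < r_{k,m}$ and we are done trivially.)

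First I would assume $k \in B_m(r_{k,m})$. Since we work on the good event $\mathcal{E}$, the argument in the proof of Lemma~\ref{lem:delta_correctness} shows that $k_m^\star$ is never eliminated, so $k_m^\star \in B_m(r_{k,m})$ as well. By Lemma~\ref{lem:deltat_value}, both proxy gaps satisfy $\widetilde{\Delta}_{k,m}(r_{k,m}) = \widetilde{\Delta}_{k_m^\star,m}(r_{k,m}) = 2^{-r_{k,m}}$, and by Lemma~\ref{lem:upperbound_omega} this bounds the confidence widths: $\Omega_{k,m}(r_{k,m}) \leq 2^{-r_{k,m}}$ and $\Omega_{k_m^\star,m}(r_{k,m}) \leq 2^{-r_{k,m}}$.

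Next I would plug these into the elimination criterion. Using the definition of $\mathcal{E}$, we have $\hat{\mu}'_{k,m}(r_{k,m}) \leq \mu'_{k,m} + \Omega_{k,m}(r_{k,m})$ and $\hat{\mu}'_{k_m^\star,m}(r_{k,m}) \geq \mu'_{k_m^\star,m} - \Omega_{k_m^\star,m}(r_{k,m})$. Subtracting and applying the two bounds on the $\Omega$'s, the quantity
\[
\hat{\mu}'_{k,m}(r_{k,m}) + \Omega_{k,m}(r_{k,m}) - \bigl(\hat{\mu}'_{k_m^\star,m}(r_{k,m}) - \Omega_{k_m^\star,m}(r_{k,m})\bigr)
\]
is at most $-\Delta'_{k,m} + 4 \cdot 2^{-r_{k,m}}$, which is strictly negative by the very definition of $r_{k,m}$ (namely, $4 \cdot 2^{-r_{k,m}} < \Delta'_{k,m}$). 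Since $k_m^\star \in B_m(r_{k,m})$, this implies $\hat{\mu}'_{k,m}(r_{k,m}) + \Omega_{k,m}(r_{k,m}) < \max_{j \in B_m(r_{k,m})} (\hat{\mu}'_{j,m}(r_{k,m}) - \Omega_{j,m}(r_{k,m}))$, so $k$ is removed from $B_m(r_{k,m}+1)$, concluding the argument.

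I do not expect a real obstacle here: the lemma is essentially a bookkeeping combination of the three preceding lemmas (the proxy-gap value, the $\Omega \leq \widetilde{\Delta}$ inequality, and the concentration event). The only point that deserves a small amount of care is ensuring $k_m^\star$ is still an active comparator at round $r_{k,m}$, which follows from the same argument as Lemma~\ref{lem:delta_correctness} on $\mathcal{E}$.
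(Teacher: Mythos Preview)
Your proof is correct and follows essentially the same route as the paper: the same case split on whether $k$ is still active at round $r_{k,m}$, the same use of Lemma~\ref{lem:delta_correctness} to keep $k_m^\star$ active, and the same chain combining the concentration event, Lemma~\ref{lem:upperbound_omega}, and Lemma~\ref{lem:deltat_value} to obtain $\hat{\mu}'_{k,m}(r)+\Omega_{k,m}(r) < \hat{\mu}'_{k_m^\star,m}(r)-\Omega_{k_m^\star,m}(r)$ via the bound $-\Delta'_{k,m}+4\cdot 2^{-r_{k,m}}<0$. (A tiny quibble: calling it the ``contrapositive'' is slightly off---it is simply a direct argument by case analysis---but this does not affect the logic.)
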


\begin{proof}
Assume $\Ec$ holds. For any suboptimal arm $k$ for agent $m$, at round $r = r_{k,m}$, 
	%$\in [K] \times [M]$, 
	if $k \not\in B_m(r)$, then $R_{k,m} < r_{k,m}\;.$ Otherwise, if $k \in B_m(r)$, then we know that $k_m^\star \in B_m(r)$, as event $\cE$ holds (see the proof of Lemma~\ref{lem:delta_correctness}). Then 
\begin{equation*}
    \begin{split}
	    \hat{\mu}'_{k,m}(r) +\Omega_{k,m}(r) & \leq_{(1)} \mu'_{k,m}+2\Omega_{k,m}(r)\\
	    & \leq_{(2)} \mu'_{k,m}+2\Deltat_{k,m}(r) = \mu'_{k,m}+4\Deltat_{k,m}(r)-2\Deltat_{k,m}(r)\\
	    & <_{(3)} \mu'_{k,m}+\Delta'_{k,m}-2\Deltat_{k,m}(r) = \mu'_{k^\star_m,m}-2\Deltat_{k,m}(r)\\
	    & \leq_{(1)} \hat{\mu}'_{k^\star_m,m}(r)+\Omega_{k^\star_m,m}(r)-2\Deltat_{k,m}(r)\\
	    %& = \hat{\mu}'_{k^\star_m,m}(r)-\Omega_{k^\star_m,m}(r)+2\Omega_{k^\star_m,m}(r)-2\Deltat_{k,m}(r)\\
	    %& \leq_{(2)} \max_{j \in B_m(r)} \left\{ \hat{\mu}'_{j,m}(r)-\Omega_{j,m}(r) \right\} +2\Deltat_{k^\star_m,m}(r)-2\Deltat_{k,m}(r)\\
	    & \leq_{(2),(4)} \max_{j \in B_m(r)} \left\{ \hat{\mu}'_{j,m}(r)-\Omega_{j,m}(r) \right\} + 2 \cdot 2^{-r} - 2\cdot 2^{-r}\;,\\
	    %& \leq_{(3)} \max_{j \in B_m(r)} \left\{ \hat{\mu}'_{j,m}(r)-\Omega_{j,m}(r) \right\} + 0\\
	    \implies \hat{\mu}'_{k,m}(r) +\Omega_{k,m}(r) & < \max_{j \in B_m(r)} \left\{ \hat{\mu}'_{j,m}(r)-\Omega_{j,m}(r) \right\}\;,\\
    \end{split}
\end{equation*}

	where (1) is using that event $\Ec$ holds~; (2) is using Lemma~\ref{lem:upperbound_omega}~; (3) is using the definition of $r=r_{k,m}$, the fact that $k\in B_m(r)$ and Lemma~\ref{lem:deltat_value} and (4) is using that $k,k^\star_m \in B_m(r)$ and Lemma~\ref{lem:deltat_value}. It follows that  $k \not\in B_m(r_{k,m}+1)$ %where $r=r_{k,m}$, 
	and $R_{k,m} \leq r_{k,m}$.\\
\end{proof}

The previous lemma straightforwardly implies that

\begin{corollary}\label{lem:UBR}
	$R \leq r_{\max} \leq \log_2(8/\Delta_{\min}')\;.$
\end{corollary}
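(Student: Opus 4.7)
The statement is a direct corollary that chains together three already-established facts, so the plan is essentially a short sequence of inequalities rather than a substantive argument. The main ingredients are: (i) the identity $R = \max_{m \in [M]} \max_{k \neq k_m^\star} R_{k,m}$, which was noted right after the definition of $R_{k,m}$ at the top of Appendix~\ref{app:proofUB}; (ii) Lemma~\ref{lem:round_elimination}, which gives $R_{k,m} \leq r_{k,m}$ on the good event $\Ec$ for every agent $m$ and every suboptimal arm $k \neq k_m^\star$; and (iii) Lemma~\ref{lem:UBr}, which bounds $r_{\max} \leq \log_2(8/\Delta_{\min}')$.

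The plan is simply to take the maximum of the pointwise bound from Lemma~\ref{lem:round_elimination} over $(k,m)$ with $k \neq k_m^\star$, which converts it into $\max_{m,k\neq k_m^\star} R_{k,m} \leq \max_{m,k\neq k_m^\star} r_{k,m}$. By the identity for $R$ this gives $R \leq r_{\max}$. Then I apply Lemma~\ref{lem:UBr} to conclude $r_{\max} \leq \log_2(8/\Delta_{\min}')$, which yields the claimed chain of inequalities.

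There is no real obstacle here: everything happens on the good event $\Ec$ (where $k_m^\star$ is never eliminated, so $R$ is indeed governed by the elimination times of suboptimal arms), and all the bounds are deterministic consequences of earlier lemmas. The only thing worth mentioning explicitly in the proof is that because $\Ec$ ensures $k_m^\star \in B_m(r)$ for every $r$ up to stopping, the stopping condition $|B_m(r)| \leq 1$ for all $m$ is triggered exactly when every suboptimal arm has been eliminated for its agent, which is what justifies writing $R$ as a maximum over $R_{k,m}$ for $k \neq k_m^\star$ in the first place.
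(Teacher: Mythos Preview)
Your proposal is correct and follows exactly the paper's approach: the corollary is stated as a straightforward consequence of Lemma~\ref{lem:round_elimination} (taking the max over $(k,m)$ with $k\neq k_m^\star$) together with Lemma~\ref{lem:UBr}, using the identity $R=\max_m\max_{k\neq k_m^\star} R_{k,m}$. No substantive difference.
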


Moreover, it also permits to prove that, in the last round $R$, the proxy gaps are lower bounded by the gaps.
% 
% \begin{corollary}\label{cor:lubounds_gap}
% 	At final round $R$, and for any agent $m$ and suboptimal (with respect to $m$) arm $k \neq k^\star_m$, if $\Ec$ holds,
% 	\[4\Deltat_{k,m}(R) < \Delta'_{k,m} \leq 8\Deltat_{k,m}(R)\;.\]
% \end{corollary}

\begin{corollary}\label{cor:lubounds_gap}
	At final round $R$, and for any agent $m$ and suboptimal (with respect to $m$) arm $k \neq k^\star_m$, if $\Ec$ holds,
	\[\Deltat_{k,m}(R) \geq \frac{1}{8}\Delta'_{k,m}\;.\]
\end{corollary}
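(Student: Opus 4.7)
The plan is to combine three previously established facts: the exact value of the proxy gap as long as arm $k$ is active (Lemma~\ref{lem:deltat_value}), the upper bound on the last active phase (Lemma~\ref{lem:round_elimination}), and the definition of $r_{k,m}$. Fix an agent $m$ and a suboptimal arm $k \neq k_m^\star$ and assume $\Ec$ holds.

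First I would pin down $\Deltat_{k,m}(R)$ exactly. By the update rule, the proxy gap is halved only while $k$ remains in $B_m(\cdot)$ (and $|B_m|>1$), so once $k$ is eliminated at phase $R_{k,m}+1$, its value is frozen. Applying Lemma~\ref{lem:deltat_value} at round $R_{k,m}$, where $k \in B_m(R_{k,m})$ by definition of $R_{k,m}$, yields $\Deltat_{k,m}(R_{k,m}) = 2^{-R_{k,m}}$; since no further halving occurs, $\Deltat_{k,m}(R) = 2^{-R_{k,m}}$.

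Next, by Lemma~\ref{lem:round_elimination}, $R_{k,m} \leq r_{k,m}$, so $\Deltat_{k,m}(R) = 2^{-R_{k,m}} \geq 2^{-r_{k,m}}$. It then suffices to show $2^{-r_{k,m}} \geq \Delta'_{k,m}/8$. By the very definition of $r_{k,m}$ as the smallest integer $r \geq 0$ with $4 \cdot 2^{-r} < \Delta'_{k,m}$, the value $r_{k,m}-1$ fails the inequality, so $4 \cdot 2^{-(r_{k,m}-1)} \geq \Delta'_{k,m}$, i.e. $2^{-r_{k,m}} \geq \Delta'_{k,m}/8$. (Because $\mu_{k,m} \in [0,1]$ implies $\Delta'_{k,m} \leq 1$, one has $r_{k,m} \geq 3$, so the edge case $r_{k,m}=0$ does not arise.) Chaining the two inequalities gives the claim.

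I do not expect any substantive obstacle here: the statement essentially says that the proxy gaps cannot outpace the true gaps by more than a factor of $8$, and this is just the quantitative read-out of the fact that the halving schedule is tied to the doubling time of $\Delta'_{k,m}$ through the elimination rule. The only mild care needed is to handle the ``frozen value'' of $\Deltat_{k,m}$ between rounds $R_{k,m}$ and $R$, which follows immediately from the indicator in the update rule.
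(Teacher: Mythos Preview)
Your proof is correct and follows essentially the same approach as the paper: use the freezing of $\Deltat_{k,m}$ after elimination together with Lemma~\ref{lem:round_elimination} and the minimality in the definition of $r_{k,m}$. The paper splits into the cases $R<r_{k,m}$ and $R\ge r_{k,m}$, but your argument shows this split is unnecessary since $R_{k,m}\le r_{k,m}$ holds in both cases and directly yields $\Deltat_{k,m}(R)=2^{-R_{k,m}}\ge 2^{-r_{k,m}}\ge \Delta'_{k,m}/8$.
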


\begin{proof}
%We first observe that by definition of the algorithms, the gaps are ``frozen'' when an arm is filtered out, or when the optimal arm for one player has been identified:
%\begin{itemize}
%\item for all $m$, $k \neq k_m^\star$, $\Deltat_{k,m}(R) = \Deltat_{k,m}(R_{k,m}) = \Deltat_{k,m}(r_{k,m})$ \\
%\item for all $m$, $\Deltat_{k_m^\star,m}(R) = \min_{k \neq k_m^\star}\Deltat_{k,m}(R_{k,m})$
%\end{itemize}
	If $R < r_{k,m}\;,$ by definition of $r_{k,m}\;,$ we have that $\Deltat_{k,m}(R) \geq (1/4) \Delta_{k,m}' \geq (1/8) \Delta_{k,m}'\;.$ If $R \geq r_{k,m}\;,$ we first observe that $\Deltat_{k,m}(R) = \Deltat_{k,m}(R_{k,m}) = (1/2)\Deltat_{k,m}(R_{k,m} - 1)$ by definition of the algorithm (the gaps remain frozen when an arm is eliminated, and they are halved otherwise). As $R_{k,m} - 1 < r_{k,m}$ by Lemma~\ref{lem:round_elimination}, by definition of $r_{k,m}\;,$ it follows that  \[4\Deltat_{k,m}(R_{k,m}-1) > \Delta_{k,m}'\;\]
and we conclude that $\Deltat_{k,m}(R) \geq (1/8) \Delta_{k,m}'\;.$  
\end{proof}

Note that, for any $m \in [M]\;,$ $\Deltat_{k^\star_m,m}(R) = \min_{k \neq k^\star_m} \Deltat_{k,m}(R)$ by Algorithm~\ref{algo:fpe_bai}. Now, for any $m \in [M],k \in[K]\;,$ using Corollary~\ref{cor:lubounds_gap}, and the fact that the proxy gaps are non-increasing between two consecutive phases, we get
%\[\Deltat_{k,m}(R) \geq \frac{\Delta_{k,m}'}{8}\;,\]
%from which it follows that, for any $k$ and $m$,
\[\forall k \in [K], \forall m\in [M], \forall r \leq R, \ \Deltat_{k,m}(r) \geq \Deltat_{k,m}(R)  \geq \frac{\Delta_{k,m}'}{8}\;.\]

%Then we can extend this lower bound to any round $r \leq R$ through the following lemma, which stems directly from the definition of $\Deltat$ in Algorithm~\ref{algo:fpe_bai}:

%\begin{lemma}\label{lem:tildegap_decreasing}
%	$\forall m,k,r \geq 0$, $\Deltat_{k,m}(r) \geq \Deltat_{k,m}(r+1) \;.$
%\end{lemma}

%\begin{proof}
%At round $r$, if $k \in B_m(r+1)$, then $\Deltat_{k,m}(r+1) = \widetilde{\Delta}_{k,m}(r)/2$. Otherwise, there is a round $r' < r$ such that $k \in B_m(r')$ and $k \not\in B_m(r'+1)$. Then $\Deltat_{k,m}(r) = \Deltat_{k,m}(r-1) = \dots = \Deltat_{k,m}(r'+1) = \Deltat_{k,m}(r')$.
%\end{proof}

Using Lemma~\ref{lem:12} and the fact that proxy gaps are non-increasing, for any round $r \leq R\;,$ the optimal allocation $t(r) \in \oracle\left(\left(\sqrt{2}\Deltat_{k,m}(r)\right)_{k,m}\right)$ satisfies 
\[\sum_{k,m} t_{k,m}(r) \leq 32\sum_{k,m} t'_{k,m}\;,\]
where $t' \in \oracle\left(\Delta'\right)\;,$ hence
\begin{equation}\max_{r \leq R} \left[\sum_{k,m} t_{k,m}(r)\right] \leq 32 \widetilde T^\star_W(\mu)\;.\label{eq:upper_bound_t}\end{equation}

For every $k \in [K],m \in [M]$, we now introduce 
\[r'_{k,m} := \sup \{ r \leq R : d_{k,m}(r) \neq 0\}\;,\]
so that $n_{k,m}(R) = n_{k,m}(r'_{k,m})\;.$ Using Lemma~\ref{lem:UBn} stated below, and the fact that function $\beta_{\delta}$ is nondecreasing in each coefficient of its argument (see its definition in Lemma~\ref{lem:validCI}), %it follows that
\begin{eqnarray*}n_{k,m}(R) = n_{k,m}(r'_{k,m}) &\leq& t_{k,m}(r'_{k,m}) \beta_{\delta}(n_{k,\cdot}(r'_{k,m})) + 1 \\
 & \leq & t_{k,m}(r'_{k,m}) \beta_{\delta}(n_{k,\cdot}(R)) + 1\;.
\end{eqnarray*}

%\begin{lemma}\label{lem:UBn}
%For any $k,m,r \geq 0$, either $d_{k,m}(r)=0$, or 
%\[ \sum_{s=1}^{r} d_{k,m}(s) < t_{k,m}(r) \beta_{\delta}(n_{k,\cdot}(r)) + 1\;.\]
%\[n_{k,m}(r) = n_{k,m}(r-1)+d_{k,m}(r) < t_{k,m}(r) \beta_{\delta}(n_{k,\cdot}(r))+1\]
%(proof in Appendix~\ref{app:proofs}).
%\end{lemma}

\begin{lemma}\label{lem:UBn}
For any $k,m,r \geq 0$, either $d_{k,m}(r)=0$, or 
%\[ \sum_{s=1}^{r} d_{k,m}(s) < t_{k,m}(r) \beta_{\delta}(n_{k,\cdot}(r)) + 1\;.\]
$n_{k,m}(r) = n_{k,m}(r-1)+d_{k,m}(r) < t_{k,m}(r) \beta_{\delta}(n_{k,\cdot}(r))+1\;.$
\end{lemma}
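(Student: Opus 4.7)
} The plan is to exploit the optimality of the vector $d_{k,\cdot}(r)$ in the integer program that defines it. Fix $k,m,r$ and assume that $d_{k,m}(r) \geq 1$, otherwise there is nothing to show. I write $d^\star := d_{k,\cdot}(r)$ for brevity, and I will perturb $d^\star$ by subtracting one from its $m$-th coordinate only: let $d' \in \mathbb{N}^M$ be defined by $d'_{m} := d^\star_m - 1$ and $d'_{m'} := d^\star_{m'}$ for all $m' \neq m$. Since $\sum_{m'} d'_{m'} = \sum_{m'} d^\star_{m'} - 1$, the vector $d'$ has a strictly smaller objective than $d^\star$. By the optimality of $d^\star$, the vector $d'$ must therefore violate at least one of the constraints; call its index $\tilde m \in [M]$, so that
\[\frac{n_{k,\tilde m}(r-1) + d'_{\tilde m}}{\beta_\delta(n_{k,\cdot}(r-1) + d')} < t_{k,\tilde m}(r).\]

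The heart of the argument is then to show that $\tilde m = m$. I will argue this by contradiction: suppose $\tilde m \neq m$. Then $d'$ and $d^\star$ agree in the $\tilde m$-th coordinate, so $n_{k,\tilde m}(r-1) + d'_{\tilde m} = n_{k,\tilde m}(r-1) + d^\star_{\tilde m} = n_{k,\tilde m}(r)$. Since $d^\star$ satisfies the constraint at index $\tilde m$, one has $n_{k,\tilde m}(r) \geq t_{k,\tilde m}(r)\beta_\delta(n_{k,\cdot}(r))$, while the violation for $d'$ reads $n_{k,\tilde m}(r) < t_{k,\tilde m}(r)\beta_\delta(n_{k,\cdot}(r-1) + d')$. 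Combining these forces $\beta_\delta(n_{k,\cdot}(r-1) + d') > \beta_\delta(n_{k,\cdot}(r))$. But $d' \leq d^\star$ coordinatewise and $\beta_\delta$ is nondecreasing in each of its arguments (as can be read off the explicit definition in Lemma~\ref{lem:validCI}), so the converse inequality holds. Contradiction.

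Therefore $\tilde m = m$, and the violation at index $m$ rewrites as $n_{k,m}(r) - 1 < t_{k,m}(r)\beta_\delta(n_{k,\cdot}(r-1) + d')$. Using once more the monotonicity of $\beta_\delta$ together with $d' \leq d^\star$, the right-hand side is bounded above by $t_{k,m}(r)\beta_\delta(n_{k,\cdot}(r))$, which yields $n_{k,m}(r) < t_{k,m}(r)\beta_\delta(n_{k,\cdot}(r)) + 1$ as claimed. The main subtlety (and the only nontrivial step) is the contradiction in the preceding paragraph: without it, a violated constraint at some $\tilde m \neq m$ would not immediately yield information about the $m$-th ratio. Everything else is just bookkeeping around a single-coordinate perturbation of the minimizer.
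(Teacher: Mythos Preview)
Your argument is correct and coincides with the core of the paper's own proof: both perturb the minimizer $d_{k,\cdot}(r)$ by subtracting one in a single coordinate, invoke optimality to force a violated constraint, and use the coordinatewise monotonicity of $\beta_\delta$ to pin the violation at the perturbed index $m$. The paper phrases the localization step directly (checking that every other constraint remains satisfied under $d'$) rather than by contradiction, and then wraps the whole thing in an induction on subsets $S \subseteq [K]\times[M]$ with the lemma recovered at $S=[K]\times[M]$; your single-coordinate argument is exactly the paper's base case $|S|=1$, which already yields the lemma for each $(k,m)$ individually, so you have in fact isolated the essential step and avoided the superfluous induction.
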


\begin{proof}
	At fixed $r \geq 0$, for any set $S \subseteq [K] \times [M]$, let us prove by induction on $|S|\geq1$~\footnote{For any $x \in \Nn^{M}\;,$ $(x)_+ := (\max(0, x_m))_{m \in [M]}\;,$ and $\mathds{1}_S$ is the indicator function of set $S\;.$}
	\begin{eqnarray*}
	   \forall k,m, & & d'_{k,m}(r) := \left(d_{k,m}(r)-\mathds{1}_{S}((k,m))\right)_+ \\
	   & \implies& \forall (k,m) \in S, \frac{n_{k,m}(r-1) + d'_{k,m}(r)}{\beta_{\delta}(n_{k,\cdot}(r-1)+d_{k,\cdot}(r))} < t_{k,m}(r)\\
	   & & \text{ or }  d_{k,m}(r)=0\;. 
	\end{eqnarray*}
	%\[\forall k,m, d'_{k,m}(r) := \left(d_{k,m}(r)-\mathds{1}_{S}((k,m))\right)_+ \implies \forall (k,m) \in S, \frac{n_{k,m}(r-1) + d'_{k,m}(r)}{\beta_{\delta}(n_{k,\cdot}(r-1)+d_{k,\cdot}(r))} < t_{k,m}(r) \text{ or } d_{k,m}(r)=0\;.\]

	\paragraph{At $|S|=1\;:$} Let us denote $S = \{(k',m')\}\;.$ If $d_{k',m'}(r)=0\;,$ then it is trivial. Otherwise, $\sum_{k,m} d'_{k,m}(r) < \sum_{k,m} d_{k,m}(r)\;,$ and then, by minimality of solution $d(r)\;,$ at least one constraint from the optimization problem of value $\sum_{k,m} d_{k,m}(r)$ has to be violated. For any $(k,m) \not\in S\;,$ by definition of $d(r)$ and nondecreasingness of $\beta_{\delta}\;,$
	\begin{eqnarray*}
		n_{k,m}(r-1) + d'_{k,m}(r) &= &n_{k,m}(r-1)+d_{k,m}(r)\\
		& \geq & t_{k,m}(r)\beta_{\delta}(n_{k,\cdot}(r-1)+d_{k,\cdot}(r))\\
		& \geq & t_{k,m}(r)\beta_{\delta}(n_{k,\cdot}(r-1)+d'_{k,\cdot}(r))\;. 
	\end{eqnarray*}
	That means, necessarily the only constraint that is violated is the one on $(k',m')\;.$ Using the nondecreasingness of $\beta_{\delta}\;:$
	\begin{eqnarray*}
		n_{k',m'}(r-1) + d_{k',m'}(r)-1 &=& n_{k',m'}(r-1)+d'_{k',m'}(r)\\
		& <&  t_{k',m'}(r)\beta_{\delta}(n_{k',\cdot}(r-1)+d'_{k,\cdot}(r))\\
		& \leq & t_{k',m'}(r) \beta_{\delta}(n_{k',\cdot}(r-1)+d_{k,\cdot}(r))\;.
	\end{eqnarray*}
	Combining the two ends of the inequality proves the claim.\\

\paragraph{At $|S|>1\;:$} At fixed $(k',m') \in S\;,$ we can apply the claim to $S \setminus \{(k',m')\}$. Moreover, if $d_{k',m'}(r)=0\;,$ then the claim is proven. Otherwise, by appealing to the extremes,
	\[ n_{k',m'}(r-1)+d'_{k',m'}(r) \geq t_{k',m'}(r) \beta_{\delta}(n_{k',\cdot}(r-1)+d_{k,\cdot}(r))\;.\]
Let us then consider the following allocation:
		\[ \forall k,m, d''_{k,m}(r) := d_{k,m}(r)-\mathds{1}_{\{(k',m')\}}((k,m))\;. \]
	It can be checked straightforwardly -- using the nondecreasingness of $\beta_{\delta}$ -- that $d''$ satisfies all required constraints for any pair $(k,m) \in [K] \times [M]\;,$ and that $\sum_{k,m} d''_{k,m}(r) = \sum_{k,m} d_{k,m}(r) -1\;,$ which, by minimality of $d\;,$ is absurd. Then the claim is proven for $|S|>1\;.$ Then Lemma~\ref{lem:UBn} is proven by considering $S = [K] \times [M]\;.$
	%\[ n_{k',m'}(r-1)+d_{k',m'}(r)-1 < t_{k',m'}(r)\beta_{\delta}(n_{k',\cdot}(r-1)+d_{k,\cdot})\;,\]
	%which is what we wanted to show for any $|S|>1$.\\

	%In order to conclude, we then consider the inductive claim for $S=[K] \times [M]$.
\end{proof}

By summing the upper bound on $(n_{k,m}(R))_{k,m}$ over $[K] \times [M]$, we can upper bound the exploration cost $\tau$ as

%\begin{eqnarray*}
%	\tau & \le  &\sum_{k,m} t_{k,m}(r'_{k,m}) \beta_{\delta}(n_{k,\cdot}(R)) + KM\\
%	& \leq & \sum_{k,m} \left[\max_{r \leq R}t_{k,m}(r)\right] \beta_{\delta}(n_{k,\cdot}(R)) + KM \\
%	& \leq & \sum_{k,m} \sum_{r=1}^{R}t_{k,m}(r) \beta_{\delta}(n_{k,\cdot}(R)) + KM \\
%	& \leq & R \sum_{k,m} t_{k,m}(r) \beta^*(\tau) + KM\;,
%\end{eqnarray*}

\begin{eqnarray*}
	\tau := \sum_{k,m} n_{k,m}(R) & \le & \sum_{k,m} t_{k,m}(r'_{k,m}) \beta_{\delta}(n_{k,\cdot}(R))+KM\\
	& \leq & \sum_{k,m} t_{k,m}(r'_{k,m}) \beta^*(\tau) + KM\\
	& \leq &  \sum_{k,m} \sum_{r \leq R}t_{k,m}(r)  \beta^*(\tau) + KM\\
	& \leq & R \max_{r \leq R}\left[\sum_{k,m} t_{k,m}(r)\right] \beta^*(\tau) + KM \\
	& \leq &  \log_{2}\left(8/\Delta_{\min}'\right)\max_{r \leq R}\left[\sum_{k,m} t_{k,m}(r)\right] \beta^*(\tau) + KM
\end{eqnarray*}

where we use Corollary~\ref{lem:UBR} and introduce the quantity
\[\beta^*(\tau) := \beta_{\delta}\left(\tau \mathds{1}_{M}\right) = 2\left(g_M\left(\frac{\delta}{KM}\right) + 2 M \ln \left(4 + \ln \left(\tau\right)\right)\right) \text{ where } \forall n \in [M], \mathds{1}_{M}(n) = 1\;.\]
Using Equation~\eqref{eq:upper_bound_t} and Lemma~\ref{lem:ttilde_optimal}, 
\[\tau \leq 32\widetilde T^\star_W(\mu) \log_{2}\left(8/\Delta_{\min}'\right)\beta^*(\tau) + KM \leq 32T^\star_W(\mu)\log_2(8/\Delta'_{\min})\beta^*(\mu)+KM\;.\]
Therefore, $\tau$ is upper bounded by 
\[\sup \left\{ n \in \Nn^\star : n \leq 32 T^\star_W(\mu)\log_2(8/\Delta'_{\min}) \beta^*(n) + KM\right\}\;.\]
Applying Lemma $15$ in~\cite{kaufmann2021adaptive} with %the parameters
\begin{eqnarray*}
	\Delta &=& \left(\sqrt{32 T^\star_W(\mu)\log_{2}\left(8/\Delta_{\min}'\right)}\right)^{-1} \;,\\
a & = & KM + 2g_M\left(\frac{\delta}{KM}\right) \;,\\
b & = & 4M \;,\\
c & = & 4 \;,\\
d & = & e^{-1} \text{ ( using }\forall n, \log(n) \leq n e^{-1} \text{ )}\;,
\end{eqnarray*}
$\tau$ is upper bounded by 
	\begin{eqnarray*} \hat{T}_W(\mu) &:=& 32 T^\star_W(\mu)\log_{2}\left(8/\Delta_{\min}'\right) \left[ KM+2g_M\left(\frac{\delta}{KM}\right) \right.
	\\ 
	&&\left.+4M\ln \left( 4 + 1,024\frac{(T^\star_W(\mu)\log_{2}\left(8/\Delta_{\min}'\right))^2}{e} \left( KM+2g_M\left(\frac{\delta}{KM}\right) + 4M(2+\sqrt{e})\right)^2 \right) \right]\;,\end{eqnarray*}
which satisfies $\tau \hat{T}_W(\mu) \leq a+b\ln(c+d\tau \hat{T}_W(\mu) )\;.$ Using that $g_{M}(x) \simeq x + M\log\log(x)$ in the regime of small values of $\delta\;,$ we obtain that 
\[ \hat{T}_W(\mu) = 32T^\star_W(\mu)\log_{2}\left(8/\Delta_{\min}'\right)\log(1/\delta) + o_{\delta \rightarrow 0}\left(\log(1/\delta)\right).\]
The upper bound on the communication cost follows from the upper bound on the number of phases given in Corollary~\ref{lem:UBR}.

\section{Supplementary Lemmas and Proofs}~\label{app:proofs}

We report here technical lemmas and their proofs, ordered by section.

\subsection{\textcolor{black}{Collaborative} %Federated 
Best-Arm Identification}\label{subapp:proofs_federated_bai}

\begin{lemma} Introducing the quantity
\[\widetilde{T}^\star_W(\mu) := \min_{t \in (\Rr^+)^{K \times M}} \sum_{(k,m) \in [K] \times [M]} t_{k,m} \text{ s.t. } \forall m \in [M], k \in [K], \sum_{n \in [M]}\frac{w_{n,m}^2}{t_{k,n}} \leq \frac{\left(\Delta'_{k,m}\right)^2}{2} \;,\]
it holds that $\widetilde{T}^\star_W(\mu) \leq T^\star_W(\mu) \leq 2 \widetilde{T}^\star_W(\mu)\;.$ 
\end{lemma}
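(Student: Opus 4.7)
The plan is to handle the two inequalities separately by direct feasibility comparisons between the two constrained programs, exploiting the convention $\Delta'_{k^\star_m,m} := \min_{k\neq k^\star_m}\Delta'_{k,m}$.

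\textbf{Step 1: $\widetilde{T}^\star_W(\mu) \leq T^\star_W(\mu)$.} I would take any $t$ that is feasible for the $T^\star_W(\mu)$ program and show that the same $t$ is feasible for the $\widetilde{T}^\star_W(\mu)$ program, which immediately yields the inequality since both programs have the same objective. The key observation is that the constraint of $T^\star_W(\mu)$ for a fixed $(m,k)$ with $k\neq k^\star_m$ is a sum of \emph{two nonnegative} terms bounded by $(\Delta'_{k,m})^2/2$, so each term separately satisfies this bound. This directly yields the $\widetilde{T}^\star_W$ constraint for every $(m,k)$ with $k\neq k^\star_m$. The remaining case is $k=k^\star_m$: here, the separated inequality $\sum_n w_{n,m}^2/t_{k^\star_m,n}\le (\Delta'_{k,m})^2/2$ holds for every $k\neq k^\star_m$, so taking the minimum over such $k$ gives $\sum_n w_{n,m}^2/t_{k^\star_m,n}\le (\Delta'_{k^\star_m,m})^2/2$ by the very definition of $\Delta'_{k^\star_m,m}$.

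\textbf{Step 2: $T^\star_W(\mu) \leq 2\widetilde{T}^\star_W(\mu)$.} Here I would take an optimal (or any feasible) $t$ for the $\widetilde{T}^\star_W(\mu)$ program and show that $2t$ is feasible for the $T^\star_W(\mu)$ program. Given a pair $(m,k)$ with $k\neq k^\star_m$, I would sum the two $\widetilde{T}^\star_W$ constraints corresponding to $k$ and to $k^\star_m$, obtaining
\[\sum_{n\in[M]} w_{n,m}^2\Bigl(\frac{1}{t_{k,n}}+\frac{1}{t_{k^\star_m,n}}\Bigr) \leq \frac{(\Delta'_{k,m})^2}{2}+\frac{(\Delta'_{k^\star_m,m})^2}{2} \leq (\Delta'_{k,m})^2,\]
where the second inequality uses $\Delta'_{k^\star_m,m}=\min_{k'\neq k^\star_m}\Delta'_{k',m}\le \Delta'_{k,m}$. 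Dividing by $2$ (equivalently, doubling $t$) then yields exactly the $T^\star_W$ constraint, so $2t$ is feasible with objective $2\sum_{k,m}t_{k,m}$, and minimizing over feasible $t$ gives $T^\star_W(\mu)\le 2\widetilde{T}^\star_W(\mu)$.

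\textbf{Expected obstacle.} There is essentially no hard step: both directions are short feasibility arguments. The only subtlety, which deserves explicit attention in the write-up, is correctly handling the $k=k^\star_m$ constraint of $\widetilde{T}^\star_W$ in Step~1 via the definition $\Delta'_{k^\star_m,m}=\min_{k\neq k^\star_m}\Delta'_{k,m}$; without this definition, the claimed lower inequality could fail.
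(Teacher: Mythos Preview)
Your proposal is correct and follows essentially the same approach as the paper: the paper proves $\widetilde{T}^\star_W(\mu) \leq T^\star_W(\mu)$ by noting the feasible set of $T^\star_W$ is contained in that of $\widetilde{T}^\star_W$ (you supply the details the paper omits, including the $k=k^\star_m$ case), and proves $T^\star_W(\mu) \leq 2\widetilde{T}^\star_W(\mu)$ by showing that $2\widetilde{\tau}$ is feasible for $T^\star_W$ whenever $\widetilde{\tau}$ is optimal for $\widetilde{T}^\star_W$, via the same summation and the bound $\Delta'_{k^\star_m,m}\le \Delta'_{k,m}$.
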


\begin{proof}
	Let us denote by $\mathcal{C}$ and $\widetilde{\mathcal{C}}$ the two constraint sets such that $T^\star_W(\mu) = \min \left\{ \sum_{k,m} t_{k,m} \mid t \in \mathcal{C} \right\}$ and $\widetilde{T}^\star_W(\mu) = \min \left\{ \sum_{k,m} t_{k,m} \mid t \in \widetilde{\mathcal{C}} \right\}\;.$ The inequality $\widetilde{T}^\star_W(\mu) \leq T^\star_W(\mu)$ is obtained by noticing that $\mathcal{C} \subseteq \widetilde{\mathcal{C}}$. To prove the other inequality, we consider $\widetilde{\tau} \in \arg\min \left\{ \sum_{k,m} t_{k,m} \mid t \in \widetilde{\mathcal{C}} \right\}\;.$   
	%the minimizer of $\sum_{k,m} t_{k,m}$ over $\widetilde{\mathcal{C}}(\mu)$. 
	Then, for any agent $m \in [M]\;,$ arm $k \neq k^\star_m\;,$
	\begin{equation*}
		\begin{split}
			\sum_{n \in [M]} w_{n,m}^2 \left( \frac{1}{2\widetilde{\tau}_{k,n}}+\frac{1}{2\widetilde{\tau}_{k^\star_m,n}} \right) & = \frac{1}{2}\underbrace{\left( \sum_{n \in [M]} \frac{w_{n,m}^2}{\widetilde{\tau}_{k,n}} \right)}_{\leq \left(\Delta'_{k,m}\right)^2/2}+\frac{1}{2}\underbrace{\left( \sum_{n \in [M]} \frac{w_{n,m}^2}{\widetilde{\tau}_{k^\star_m,n}} \right)}_{\leq \left(\Delta'_{k^\star_m,m}\right)^2/2 := \min \left\{ \left(\Delta'_{k',m}\right)^2/2 \mid k' \neq k^\star_m \right\}}\\
			& \leq \left(\Delta'_{k,m}\right)^2/2\;.
		\end{split}
	\end{equation*}
	Then $2\widetilde{\tau} \in \mathcal{C}\;,$ therefore by minimality, $ T^\star_W(\mu) \leq 2\widetilde{T}^\star_W(\mu)\;.$
\end{proof}

\begin{lemma}
	%If $\alpha c_{k,m} \leq c_{k,m}' \leq \beta c_{k,m}$ then the value of $\mathcal{\widetilde{P}}^\star((c_{k,m}'))$ is between that of $\mathcal{\widetilde{P}}^\star((c_{k,m}))$ divided by $\beta$ and the same divided by $\alpha$. 
	Consider $\Delta\;,$ $\Delta' \in (\Rr^+)^{K \times M}$, such that $\tau \in \oracle(\Delta)$ and $\tau' \in \oracle(\Delta')\;.$ Then\\
	\paragraph{(i).} If there exists $\alpha >0$ such that: $ \forall k \in [K], \forall m \in [M], \alpha\Delta_{k,m} \leq \Delta'_{k,m}$, 
	\[\sum_{k,m} \tau'_{k,m} \leq \frac{1}{\alpha^2} \sum_{k,m} \tau_{k,m} \;.\]

	\paragraph{(ii).} If there exists $\beta > 0$ such that: $\forall k \in [K], \forall m \in [M], \Delta'_{k,m} \leq \beta \Delta_{k,m}$. Then
	\[\frac{1}{\beta^2} \sum_{k,m} \tau_{k,m} \leq \sum_{k,m} \tau'_{k,m}\;.\]
\end{lemma}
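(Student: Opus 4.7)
The plan is to exploit the simple scaling invariance of the oracle's constraint set. If we scale every entry of an allocation $\tau$ by a positive constant $\gamma$, then $\sum_n w_{n,m}^2 / (\gamma \tau_{k,n}) = \gamma^{-1} \sum_n w_{n,m}^2 / \tau_{k,n}$, so the allocation $\gamma \tau$ satisfies the $\Delta$-constraint if and only if $\tau$ satisfies the constraint with right-hand side $\gamma \Delta^2/2$ (equivalently, with gaps $\sqrt{\gamma}\, \Delta$). This immediately converts a gap-comparison hypothesis into a feasibility statement for a rescaled version of the known optimum.

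For part (i), the plan is to build an explicit feasible candidate for $\oracle(\Delta')$ out of $\tau$. Define $\tau''_{k,m} := \tau_{k,m}/\alpha^2$. Then for every $k,m$,
\[ \sum_{n \in [M]} \frac{w_{n,m}^2}{\tau''_{k,n}} \;=\; \alpha^2 \sum_{n \in [M]} \frac{w_{n,m}^2}{\tau_{k,n}} \;\le\; \alpha^2 \frac{\Delta_{k,m}^2}{2} \;\le\; \frac{(\Delta'_{k,m})^2}{2}, \]
where the first inequality uses $\tau \in \oracle(\Delta)$ and the second uses $\alpha \Delta_{k,m} \le \Delta'_{k,m}$. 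Hence $\tau''$ is feasible for the problem defining $\oracle(\Delta')$. By the minimality of $\tau'$, we conclude
\[ \sum_{k,m} \tau'_{k,m} \;\le\; \sum_{k,m} \tau''_{k,m} \;=\; \frac{1}{\alpha^2} \sum_{k,m} \tau_{k,m}. \]

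Part (ii) will follow from part (i) by symmetry, without any additional calculation. The hypothesis $\Delta'_{k,m} \le \beta \Delta_{k,m}$ is the same as $\tfrac{1}{\beta} \Delta'_{k,m} \le \Delta_{k,m}$, i.e.\ the hypothesis of part (i) with the roles of $(\Delta,\tau)$ and $(\Delta',\tau')$ exchanged and with constant $\alpha := 1/\beta$. Applying (i) in this swapped form yields $\sum_{k,m} \tau_{k,m} \le \beta^2 \sum_{k,m} \tau'_{k,m}$, which is exactly the desired inequality. The only thing to be a little careful about is that $\oracle(\cdot)$ is defined as an $\arg\min$ set, but since the proofs above only use that $\tau$ (resp.\ $\tau'$) is feasible and achieves the minimum value, any element of the $\arg\min$ set can be used, so there is no real obstacle. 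The statement is essentially a monotonicity lemma for the oracle's value under coordinate-wise scaling of the gaps, and no heavier machinery is needed.
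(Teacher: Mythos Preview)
Your proof is correct and follows essentially the same approach as the paper: a rescaled allocation is shown to be feasible for the other oracle problem, and minimality of the argmin gives the inequality. The only cosmetic difference is that the paper writes out part~(ii) explicitly (taking $\tau'' = \beta^2 \tau'$ and using minimality of $\tau$) and states that~(i) follows similarly, whereas you do~(i) first and derive~(ii) by symmetry.
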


\begin{proof}
	The proof follows from the fact that $\tau$ and $\tau'$ are \textit{minimal}. In particular, to prove \textbf{(ii)}, let $\tau''_{k,m} = \beta^2 \tau'_{k,m}$ for any $k \in [K]\;,$ $m \in [M]\;.$ Then, for any agent $m$ and arm $k$,%arm $k \in [K]$ and agent $m \in [M]$,

	\[\sum_{n \in [M]}\frac{w^2_{n,m}}{\tau''_{k,n}} = \sum_{n \in [M]}\frac{w^2_{n,m}}{\beta^2\tau'_{k,n}} \le \frac{1}{2}\left(\frac{\Delta'_{k,m}}{\beta}\right)^2 \le \frac{\Delta^2_{k,m}}{2}\;. \]

	By minimality of $\tau\;,$ 
	\[\sum_{m \in [M]}{\tau_{k,m}} \le \sum_{m \in [M]}{\tau''_{k,m}} = \beta^2 \sum_{n \in [M]}{\tau'_{k,m}}\;. \]
\textbf{(i)} similarly follows.
\end{proof}

\subsection{A Near-Optimal Algorithm For Best Arm Identification}\label{subapp:proofs_federated_bai_algo}

\begin{lemma}
Let us define 
	\[\beta_{\delta}(N) := 2\left( g_M\left(\frac{\delta}{KM}\right) + 2 \sum_{m=1}^{M} \ln(4+\ln(N_{m})) \right)\;,\]
	for any $N \in (\Nn^*)^{M}$, where 
	\[ \forall \delta \in (0,1), g_{M}(\delta) := M\mathcal{C}^{g_G}\left(\log(1/\delta)/M\right) \;,\]
	\[ \forall x > 0, \mathcal{C}^{g_G}(x) := \min_{\lambda \in (0.5,1)} \frac{g_G(\lambda)+x}{\lambda} \;,\]
	\[ \text{ and } \forall \lambda \in (0.5,1), g_G(\lambda) := 2\lambda-2\lambda \log(4\lambda)+\log(\zeta(2\lambda))-0.5\log(1-\lambda) \;,\] 
	where $\zeta$ is the Riemann zeta function. Then, the good event \[\mathcal{E} := \left\{\forall r \in \mathbb{N}, \forall m, \forall k, \left|\hat{\mu}_{k,m}'(r) - \mu_{k,m}'\right| \leq \Omega_{k,m}(r)\right\}\;.\]
holds with probability larger than $1-\delta$.
\end{lemma}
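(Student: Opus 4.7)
The plan is to reduce the claim to a fixed pair $(k,m)$ by a union bound, and then invoke a time-uniform deviation inequality for a linear combination of independent empirical means with random (data-dependent) sample sizes. The quantity $\Omega_{k,m}(r)$ was designed so that its squared radius, $\beta_\delta(n_{k,\cdot}(r))\sum_n w_{n,m}^2/n_{k,n}(r)$, exactly matches the output of such an inequality applied to the direction $w_{\cdot,m}$.

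First, I would fix $(k,m) \in [K]\times[M]$ and write
\[
\hat\mu'_{k,m}(r) - \mu'_{k,m} \;=\; \sum_{n=1}^M w_{n,m}\bigl(\hat\mu_{k,n}(r) - \mu_{k,n}\bigr),
\]
noting that the $M$ agent-wise empirical means on the right-hand side are based on mutually independent $1$-sub-Gaussian samples, and that the sample sizes $n_{k,n}(r)$ are random but measurable with respect to the central server's filtration. The ``variance-like'' factor $\sum_n w_{n,m}^2/n_{k,n}(r)$ is precisely what would emerge from a Cauchy--Schwarz step inside a mixture-martingale argument in the direction $w_{\cdot,m}$.

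Second, I would apply the ``projection'' result of \cite{kaufmann2018mixture} (their Proposition~24) to this linear combination, with confidence parameter $\delta' = \delta/(KM)$. This directly yields, for the fixed $(k,m)$, that with probability at least $1-\delta/(KM)$,
\[
\forall r,\quad \bigl|\hat\mu'_{k,m}(r) - \mu'_{k,m}\bigr| \;\le\; \sqrt{\,2\Bigl(g_M\!\bigl(\tfrac{\delta}{KM}\bigr) + 2\sum_{n=1}^M\ln\!\bigl(4+\ln n_{k,n}(r)\bigr)\Bigr)\sum_{n=1}^M\tfrac{w_{n,m}^2}{n_{k,n}(r)}}\;=\;\Omega_{k,m}(r).
\]
Here the function $g_M$ and the $M$ ``stitching'' corrections $\ln(4+\ln n_{k,n}(r))$ come directly from the $M$-fold product of Gaussian mixture martingales constructed in \cite{kaufmann2018mixture}, tuned so that the supremum over random sample sizes $n_{k,n}(r)\ge 1$ is controlled.

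Finally, a union bound over the $KM$ pairs $(k,m)$ gives $\mathbb{P}(\mathcal{E})\ge 1-\delta$. The main obstacle is step two: one cannot simply use a one-dimensional sub-Gaussian maximal inequality, because the random, data-dependent sample sizes $n_{k,n}(r)$ are not stopping times of a single scalar filtration. The clean route is to cite Proposition~24 of \cite{kaufmann2018mixture} off the shelf rather than rebuild the mixture argument; re-deriving it would require combining a Gaussian mixture in each agent's filtration with a per-agent peeling of $n_{k,n}(r)$, which is exactly the $M$-dimensional construction that defines $g_M$.
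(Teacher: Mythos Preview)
Your proposal is correct and follows essentially the same route as the paper: apply Proposition~24 of \cite{kaufmann2018mixture} to the linear combination $\hat\mu'_{k,m}(r)-\mu'_{k,m}=\sum_n w_{n,m}(\hat\mu_{k,n}(r)-\mu_{k,n})$ with confidence $\delta/(KM)$ (taking $\mu=\mu_{k,\cdot}$ and $c=W_{\cdot,m}$ in their notation), then union-bound over the $KM$ pairs. Your identification of the key obstacle---the random, data-dependent sample sizes---and the resolution via the $M$-dimensional mixture construction is exactly what the paper relies on.
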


\begin{proof}
Using Proposition 24 from~\cite{kaufmann2018mixture} on $\mu'_{k,m}$, for any arm $k$ and agent $m$, directly yields
\[ \bP\left( \exists r \geq 0, |\hat{\mu}'_{k,m}(r)-\mu'_{k,m}| > \sqrt{2\left( g_M\left(\frac{\delta}{KM}\right) + 2 \sum_{n=1}^{M} \ln(4+\ln(n_{k,n}(r))) \right)\sum_{n} \frac{w_{n,m}^2}{n_{k,n}(r)}} \right) \leq \frac{\delta}{KM} \]
(using the notation of the paper, consider $\mu = \mu_{k,\cdot}$ and $c = W_{\cdot,m}$). Then all that is needed to conclude is to apply a union bound on $[K] \times [M]$

\begin{equation*}
\begin{split}
    \bP\left( \cE^c \right) & = \bP\left( \exists m \in [M], \exists k \in [K], \exists r \geq 0, |\hat{\mu}'_{k,m}(r)-\mu'_{k,m}| > \sqrt{2\beta_{\delta}(n_{k,\cdot}(r))\sum_{n} \frac{w_{n,m}^2}{n_{k,n}(r)}} \right)\\
    & \leq \sum_{m \in [M]} \sum_{k \in [K]} \frac{\delta}{KM} \leq \delta\;.
\end{split}
\end{equation*}

%\begin{remark}
%The optimization problem to compute function $\mathcal{C}^{g_C}$ can easily be solved in practice by a standard scalar function minimizer.
%\end{remark}

\end{proof}

\subsection{Regret Lower Bound}\label{subapp:proofs_regret}

\begin{lemma}
Introducing the quantity 
\[\widetilde{C}^\star_W(\mu) := \min_{c \in (\Rr^+)^{K\times M}} \sum_{k=1}^{K}\sum_{m=1}^M c_{k,m}\Delta_{k,m}' \text{ s.t. } \forall k \in [K], \forall m \in [M], \sum_{n =1}^{M} \frac{w_{n,m}^2}{c_{k,n}} \leq \frac{\Delta'_{k,m}}{2\sigma^2}\;,\]

it holds that $C^\star_W(\mu)\le\widetilde{C}^\star_W(\mu)\le 4C^\star_W(\mu)\;.$
\end{lemma}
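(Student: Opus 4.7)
The first inequality $C^\star_W(\mu) \le \widetilde{C}^\star_W(\mu)$ should follow from a direct restriction argument. Given any $c \in (\Rr^+)^{K\times M}$ feasible for $\widetilde{C}^\star_W$, I would consider the sub-family $(c_{k,m})_{k,m:\,k\neq k_m^\star}$ and observe that each constraint of $C^\star_W$ at $(k,m)$ sums only over those $n$ with $k_n^\star \neq k$, which is a subset of the indices summed in the corresponding constraint of $\widetilde{C}^\star_W$. Dropping nonnegative summands only weakens the constraint, so the restriction is feasible for $C^\star_W$. Likewise, the restricted objective omits only the nonnegative summands indexed by $k = k_m^\star$, so it is no larger than the original. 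Taking infima then yields the first inequality.

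The harder inequality $\widetilde{C}^\star_W(\mu) \le 4\,C^\star_W(\mu)$ would be proved by constructing, from an optimal $c^*$ for $C^\star_W$, a feasible extension $c' \in (\Rr^+)^{K\times M}$ for $\widetilde{C}^\star_W$ whose objective is at most $4\,C^\star_W(\mu)$. My natural candidate is to double the known entries, $c'_{k,m} := 2 c^*_{k,m}$ for $k \neq k_m^\star$, and to set the newly required entries as $c'_{k_m^\star, m} := 2 c^*_{\widetilde{k}_m, m}$, where $\widetilde{k}_m \in \arg\min_{k' \neq k_m^\star} \Delta'_{k',m}$ (so that $\Delta'_{\widetilde{k}_m, m} = \Delta'_{k_m^\star, m}$ by the convention for the minimal-gap arm). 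The objective then splits as $2\,C^\star_W(\mu)$ (from the doubled entries) plus $\sum_m 2\, c^*_{\widetilde{k}_m, m}\, \Delta'_{\widetilde{k}_m, m}$, and the latter is in turn at most $2\,C^\star_W(\mu)$ since each summand is one of the nonnegative terms appearing in the $C^\star_W$ objective. This accounts for the explicit constant $4 = 2 + 2$.

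The main work and main obstacle will be verifying the feasibility of $c'$ for $\widetilde{C}^\star_W$. For each constraint at $(k,m)$, I would split the LHS as
\[
\tfrac{1}{2}\sum_{n:\,k_n^\star \neq k} \frac{w_{n,m}^2}{c^*_{k,n}} \;+\; \sum_{n:\,k_n^\star = k} \frac{w_{n,m}^2}{c'_{k_n^\star, n}}.
\]
The first sum is at most $(\Delta'_{k,m})^2/4$ by applying the feasibility of $c^*$ at $(k,m)$ in $C^\star_W$ together with the factor-of-two doubling. The second sum, involving the freshly defined variables, must also fit into the remaining slack $(\Delta'_{k,m})^2/4$; this requires invoking the feasibility of $c^*$ at the auxiliary constraints of type $(\widetilde{k}_n, m)$ for $n$ with $k_n^\star = k$ and combining the resulting bounds. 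The delicate point here is that different $\widetilde{k}_n$ may differ across $n$ in the same cluster $\{n: k_n^\star = k\}$, so one may need to coordinate the choice of $\widetilde{k}_n$ (for instance by making it depend only on $k_n^\star$) so that a single constraint of $c^*$ absorbs the entire sum; this rearrangement is the technical heart of the argument and the reason the constant is multiplicative rather than additive in $M$.
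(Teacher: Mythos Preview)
Your argument for $C^\star_W(\mu)\le \widetilde{C}^\star_W(\mu)$ is correct and matches the paper's.

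For the reverse inequality, your overall strategy (double the suboptimal entries and fill in the missing $(k_m^\star,m)$ entries) is right, but the specific filling you propose does not yield a feasible point, and the ``coordination'' fix you suggest provably fails. If you make $\widetilde{k}_n$ depend only on $k_n^\star$ via some map $\phi$ with $\phi(k)\neq k$, the second sum at the $(k,m)$-constraint is controlled by a single $C^\star_W$-constraint at $(\phi(k),m)$, giving
\[
\sum_{n:\,k_n^\star=k}\frac{w_{n,m}^2}{2c^*_{\phi(k),n}}
\;\le\;\frac{(\Delta'_{\phi(k),m})^2}{4}\,,
\]
so you would need $\Delta'_{\phi(k),m}\le \Delta'_{k,m}$ for \emph{every} $m$. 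But for any $m$ with $k_m^\star=k$ one has $\Delta'_{k,m}=\min_{k'\neq k}\Delta'_{k',m}$, which forces $\phi(k)\in\arg\min_{k'\neq k}\Delta'_{k',m}$; since this argmin generally varies with $m$, no single $\phi(k)$ works. A concrete obstruction: take $K=3$, two agents with $k_1^\star=k_2^\star=1$, agent~$1$'s unique second-best arm is $2$, agent~$2$'s unique second-best arm is $3$; then neither $\phi(1)=2$ nor $\phi(1)=3$ satisfies the required inequality for both agents. The uncoordinated choice (your original $\widetilde{k}_m$) is no better: the terms $w_{n,m}^2/c^*_{\widetilde{k}_n,n}$ come from \emph{different} constraints of $C^\star_W$ and cannot in general be merged into a single $(\Delta'_{k,m})^2/4$ bound.

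The paper's remedy is to fill the missing entries with a \emph{sum} rather than a single term: set
\[
c'_{k_m^\star,m}\;:=\;2\sum_{j\neq k_m^\star} c^*_{j,m}\,.
\]
Then for every $n\in\Sc^*_k$ and every $j\neq k$ one has $c'_{k,n}\ge 2c^*_{j,n}$, so the second sum is bounded by $\tfrac12\sum_{n:\,k_n^\star\neq j}\frac{w_{n,m}^2}{c^*_{j,n}}\le (\Delta'_{j,m})^2/4$ for \emph{any} $j\neq k$. The crucial point is that $j$ can be chosen \emph{after} seeing $(k,m)$: take $j=k_m^\star$ when $k\neq k_m^\star$, and $j\in\arg\min_{k'\neq k_m^\star}\Delta'_{k',m}$ when $k=k_m^\star$; in both cases $\Delta'_{j,m}\le \Delta'_{k,m}$, closing the feasibility argument. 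The objective bound is unaffected because $\Delta'_{k_m^\star,m}\le \Delta'_{j,m}$ for all $j\neq k_m^\star$, so $\sum_m c'_{k_m^\star,m}\Delta'_{k_m^\star,m}\le 2\sum_m\sum_{j\neq k_m^\star}c^*_{j,m}\Delta'_{j,m}=2C^\star_W(\mu)$, and the total is still $4C^\star_W(\mu)$.
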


 \begin{proof} 
%Let us prove the following lemma
% \begin{lemma}
% Define $\Sc_k=\{m: k\neq k^*_m\}$.
% Let $t=\widetilde{\Pc}_2(\Delta'_{k,m})$ and $t'=\widetilde{\Pc}_1(\Delta'_{k,m})$, we have
% \begin{eqnarray*}
% \sum_{k,m\in \Sc_k} t_{k,m}\Delta'_{k,m}&\le& 3\sum_{k,m\in \Sc_k} c_{k,m}\Delta'_{k,m},\\
% \sum_{k,m}
% t_{k,m}\Delta'_{k,m}&\le& 4\sum_{k,m\in \Sc_k} c_{k,m}\Delta'_{k,m}
% \end{eqnarray*}
% \end{lemma}

Let $c$ and $\widetilde{c}$ be the solutions to the optimization problems of $C^\star_W(\mu)$ and $\widetilde{C}^\star_W(\mu)$, respectively. Note that, for any agent $m\;,$ $c_{m,k^*_m}=+\infty$ because, in the optimization problem related to the regret lower bound, these terms do not contribute to the objective. %as explained above. 
The lower bound follows from the definition of $c$ and the fact that $(\widetilde{c}_{k,m})_{m, k\neq k^*_m}$ satisfy the same constraints as $(\widetilde{c}_{k,m})_{m, k\neq k^*_m}$. 

Next, we prove the upper bound on $\widetilde{C}^\star_W(\mu)\;.$ For any $k\in[K]\;,$ define $\Sc^*_k=\{m: k=k^*_m\}$ and $\Sc_k=\{m: k\not= k^*_m\}$ (note that for any $k \in [K]$, $\{\Sc^*_k, \Sc_k\}$ is a partition of $[M]$).
For $k\in[K]$ and $m\in \Sc_k\;,$ let $c'_{k,m}=2c_{k,m}\;.$ For $k\in[K]$ and $m\in \Sc^*_k$, let $c'_{k,m} = c^1_{k,m}$, where %$c''$ is the solution to the following optimization problem

\begin{eqnarray*}
c^1 &\in &\arg\min_{\tau \in (\Rr^{+})^{K \times M}} \sum_{k=1}^K\sum_{m\in\Sc^*_k} \tau_{k,m}\Delta'_{k,m} \text{ s.t.} \forall k\in[K],m\in[M], \sum_{n\in \Sc^*_k}\frac{w^2_{n,m}}{\tau_{k,n}}\le \frac{(\Delta'_{k,m})^2}{4}\;.
\end{eqnarray*}

Note that $c'_{k,m}$ satisfy the same constraints as $\widetilde{c}_{k,m}$:
%\begin{eqnarray*}
$\forall k\in[K],m\in[M], \sum_{n=1}^M\frac{w^2_{n,m}}{c'_{k,n}}\le \frac{(\Delta'_{k,m})^2}{2}\;.$
%\end{eqnarray*}
%Also define $\Sc^{(2)}_{k}=\{m: k~\text{is the second best arm of}~m\}$.
We thus have

\begin{eqnarray*}
\sum_{k=1}^K\sum_{m=1}^M\widetilde{c}_{k,m}\Delta'_{k,m} 
&\le& \sum_{k=1}^K\sum_{m=1}^Mc'_{k,m}\Delta_{k,m} \\
&=&%\hspace{-7em}
 \sum_{k=1}^K\sum_{m\in\Sc_k}c'_{k,m}\Delta_{k,m} +
\sum_{k=1}^K\sum_{m\in\Sc^*_k}c'_{k,m}\Delta_{k,m}\\
&=&%\hspace{-7em}= 
2\sum_{k=1}^K\sum_{m\in\Sc_k}c_{k,m}\Delta'_{k,m} + \sum_{k=1}^K\sum_{m\in\Sc^*_k}c'_{k,m}\Delta_{k,m}\\
&=&%\hspace{-7em}
{ 2C^\star_W(\mu) + \sum_{k=1}^K\sum_{m\in\Sc^*_k}c^1_{k,m}\Delta'_{k,m}}\\
&\le&%\hspace{-7em}
 4C^\star_W(\mu)\;.
\end{eqnarray*}

The first inequality holds by minimality of $\widetilde{c}\;.$ To prove the second inequality, for all $m\in[M]\;,$ let us define $c''_{k^*_m,m}=2\sum_{k\neq k^*_m}c_{k,m}\;,$ and $c''_{k,m}=\infty$ for $k \neq k^\star_m\;.$ Note that $c''$ satisfy the constraints in the definition of $c^1\;.$ Thus, by minimality of $c^1\;,$ and using the fact that, for any $m \in [M]\;,$ $\Delta'_{k^\star_m,m} := \min_{k \neq k^\star_m} \Delta'_{k,m}\;,$ we have 

\begin{eqnarray*}
\sum_{k=1}^K\sum_{m\in\Sc^*_{k}}c^1_{k,m}\Delta'_{k,m} &\le&
\sum_{k=1}^K\sum_{m\in\Sc^*_{k}}c''_{k,m}\Delta'_{k,m} \\\nonumber 
&=& 2\sum_{k=1}^K\sum_{k,m\in\Sc^*_{k}}\left(\sum_{j\neq k^*_m}c_{j,m}\right)\Delta'_{k,m}\\\nonumber
&=&2\sum_{k=1}^K\sum_{m\in\Sc_k}c_{k,m}\Delta'_{k^*_m,m}\\\nonumber
&\le&2\sum_{k=1}^K\sum_{m\in\Sc_k}c_{k,m}\Delta'_{k,m}\\\nonumber
&=& 2C^\star_W(\mu)\;,
\end{eqnarray*}
which completes the proof.

\end{proof}

\section{Extension to Top-$N$ Identification}\label{app:topN}

A generalization of the best arm identification problem is Top-$N$ identification, which is the problem of finding the $N$ optimal arms (for mixed rewards) for each agent. For any model $\mu \in \Rr^{K \times M}\;,$ weight matrix $W \in [0,1]^{M \times M}\;,$ such that $\mu' = \mu W\;,$ any agent $m\;,$ and positive integer $N \leq K\;,$ let us define~\footnote{We define operation $\max^N$ such that $\max^N_{i \in S} f(i)$ is the $N^{th}$ (without multiplicity) greatest value in set $\{f(i) : i \in S\}$ for any function $f : S \mapsto \Rr$.}

\[ S^\star_m := \left\{ k \in [K] \mid \mu'_{k,m} \geq \max^{N}_{k'\in [K]} \mu'_{k',m} = \max^N_{k ' \in [K]} \sum_{n \in [M]} w_{n,m}\mu_{k',n} \right\} \;.\]

In this case, an algorithm for Top-$N$ identification returns at the end of the exploration phase a set of $N$ arms denoted $\hat{S}_m$ for agent $m\;.$ $\delta$-correctness is defined as follows

\[ \bP_{\mu}\left( \forall m \in [M], \hat{S}_m \subseteq S^\star_m \right) \geq 1-\delta \;.\]

Note that there might be more than $N$ arms in a given set $S^\star_m, m \in [M]\;.$ Similarly to best arm identification, we also assume here that the set of top-$N$ arms is unique -- that is, for any $m$, $|S^\star_m|=N\;.$ 

\paragraph{Lower Bound for Top-$N$ Identification} For Top-$N$ identification, one can prove, similarly to the proof of Theorem~\ref{thm:LBBAI} -- using Lemma $1$ in~\cite{reda2021dealing} to define the set of alternative models -- the following result, which is valid for Gaussian rewards with fixed variance $\sigma^2=1\;,$ and any weight matrix $W$ such that all diagonal coefficients are positive,

\begin{theorem}\label{thm:LBTopN} 
	Let $\mu$ be a fixed matrix of means in $\mathbb{R}^{K \times M}\;.$ For any $\delta \in (0,1/2]\;,$ let $\cA$ be a $\delta$-correct algorithm under which each agent communicates each reward to the central server after it is observed, and let us denote for any $k\in [K],m\in[M]\;,$ $\tau_{k,m} := \bE_{\mu}^{\cA}\left[ N_{k,m}(\tau) \right]\;,$ where $\tau$ is the stopping time. For any $m \in [M], k\not\in S_m^\star, l \in S_m^\star \;,$ denoting $\mu'=\mu W$, it holds that  
	\[\sum_{n}w_{n,m}^2\left(\frac{1}{\tau_{k,n}} + \frac{1}{\tau_{l,n}}\right) \leq \frac{\left(\mu'_{k,m}-\mu'_{l,m}\right)^2}{2\log(1/(2.4\delta))} \;,\]
	and therefore $\text{Exp}_{\mu}(\cA) \geq N^{\star}_W(\mu) \log\left(\frac{1}{2.4\delta}\right)\;,$ where 

	\begin{equation*}
		\begin{split}
			N^{\star}_W(\mu) := & \min_{t \in \Rr^{K \times M}} \sum_{(k,m) \in [K] \times [M]} t_{k,m} \\
			& \text{s.t. } \forall m, k\not\in S^\star_m(\mu), l \in S^\star_m(\mu), \ \sum_{n \in [M]}w_{n,m}^2\left(\frac{1}{t_{k,n}} + \frac{1}{t_{l,n}}\right) \leq \frac{\left(\mu'_{k,m}-\mu'_{l,m}\right)^2}{2} \;.\\ 
		\end{split}
	\end{equation*}
\end{theorem}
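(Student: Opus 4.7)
The proof would follow the blueprint of Theorem~\ref{thm:LBBAI}, with the single ``confusion'' between $k_m^\star$ and one suboptimal arm replaced by a pairwise confusion between an arm $l \in S_m^\star$ and an arm $k \notin S_m^\star$. The starting point is to define
\[\Alt(\mu) := \{\lambda \in \Rr^{K \times M} : \exists m \in [M],\ S_m^\star(\lambda) \neq S_m^\star(\mu)\},\]
and then to invoke Lemma~1 of~\cite{reda2021dealing}, which characterizes this set as the union, over all triples $(m,k,l)$ with $k \notin S_m^\star(\mu)$ and $l \in S_m^\star(\mu)$, of the half-spaces $\{\lambda : \lambda'_{k,m} > \lambda'_{l,m}\}$. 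This is the key structural observation: in the Top-$N$ setting, the only confusions that matter are pairwise exchanges of a top-$N$ arm with a non-top-$N$ arm for some agent.

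The next step is to apply the generic change-of-distribution bound (Theorem~1 of~\cite{garivier2016optimal}) to the event $\cE_\mu := \{\exists m : \hat{S}_m \not\subseteq S_m^\star(\mu)\}$, which satisfies $\bP_\mu(\cE_\mu) \leq \delta$ and $\bP_\lambda(\cE_\mu) \geq 1-\delta$ for every $\lambda \in \Alt(\mu)$ by $\delta$-correctness. This yields the familiar inequality $\tfrac{1}{2} \sum_{k,m} \tau_{k,m}(\mu_{k,m}-\lambda_{k,m})^2 \geq \log(1/(2.4\delta))$. Before exploiting it, I would check, as in the BAI proof, that $\tau_{k,m} > 0$ for every $(k,m)$ by using the assumption $w_{m,m} \neq 0$ to perturb $\mu_{k,m}$ alone and push arm $k$ in or out of $S_m^\star(\mu)$.

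For each fixed admissible triple $(m,k,l)$, I would then restrict attention to alternatives that differ from $\mu$ only on arms $k$ and $l$ across all agents, of the form $\lambda_{k,n} = \mu_{k,n} + \delta_n$ and $\lambda_{l,n} = \mu_{l,n} - \delta'_n$, subject to the linear constraint $\sum_n w_{n,m}(\delta_n + \delta'_n) \geq \mu'_{l,m} - \mu'_{k,m}$ that precisely forces $\lambda'_{k,m} > \lambda'_{l,m}$ and hence $\lambda \in \Alt(\mu)$. Minimizing the quadratic form $\sum_n \tau_{k,n}\delta_n^2/2 + \sum_n \tau_{l,n}(\delta'_n)^2/2$ under this constraint via the exact Lagrangian/KKT computation carried out in the proof of Theorem~\ref{thm:LBBAI} (with $k_m^\star$ simply replaced by $l$) gives the claimed pairwise inequality
\[\sum_n w_{n,m}^2\left(\frac{1}{\tau_{k,n}} + \frac{1}{\tau_{l,n}}\right) \leq \frac{(\mu'_{k,m}-\mu'_{l,m})^2}{2\log(1/(2.4\delta))}.\]

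Collecting these inequalities over all admissible triples $(m,k,l)$ shows that the rescaled allocation $(\tau_{k,m}/\log(1/(2.4\delta)))_{k,m}$ lies in the feasible set of the optimization problem defining $N_W^\star(\mu)$, and summing coordinates yields the global bound $\bE_\mu[\mathrm{Exp}_\mu(\cA)] = \sum_{k,m}\tau_{k,m} \geq N_W^\star(\mu)\log(1/(2.4\delta))$. The only non-routine step is the first one, namely reducing $\Alt(\mu)$ to pairwise $(k,l)$-confusions via Lemma~1 of~\cite{reda2021dealing}; once this reduction is in place, the remaining steps are a direct transcription of the BAI argument, since the inner optimization over $(\delta,\delta')$ has exactly the same bilinear structure in $(\tau,\delta,\delta')$ as before, so the closed-form KKT solution carries over unchanged.
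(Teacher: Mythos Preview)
Your proposal is correct and follows essentially the same approach as the paper's proof: both invoke Lemma~1 of~\cite{reda2021dealing} to reduce $\Alt(\mu)$ to pairwise $(k,l)$-confusions, apply the change-of-distribution inequality of~\cite{garivier2016optimal} to the event $\{\exists m:\hat S_m\not\subseteq S_m^\star(\mu)\}$, verify $\tau_{k,m}>0$ via the diagonal condition on $W$, and then solve the identical constrained quadratic minimization (with $k_m^\star$ replaced by $l$) to obtain the pairwise constraint and the final bound. The only cosmetic difference is that you define $\Alt(\mu)$ via $S_m^\star(\lambda)\neq S_m^\star(\mu)$ whereas the paper uses $S_m^\star(\mu)\not\subseteq S_m^\star(\lambda)$, but under the uniqueness assumption $|S_m^\star|=N$ these coincide.
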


\begin{proof}
	As mentioned, let us use Lemma $1$ from~\cite{reda2021dealing} to define the set of alternative models to $\mu$ in Top-$N$ identification. If, for any agent $m$, $S^\star_m(\mu)$ is the set of its top-$N$ arms (of size $N$) with respect to mixed rewards, then
	\begin{eqnarray*}
	\mathrm{Alt}(\mu) & := & \left\{ \lambda \in \Rr^{K \times M} : \exists m, S^\star_m(\mu) \not\subseteq S^\star_m(\lambda) \right\}\\
	& = & \left\{\lambda \in \Rr^{K \times M} : \exists m, \exists k \not\in S^\star_m(\mu), \exists l \in S^\star_m(\mu)  : \lambda'_{k,m} > \lambda'_{l,m} \right\} \;,
	\end{eqnarray*}
where $\lambda'_{k,m} := \sum_{n \in [M]} w_{n,m}\lambda_{k,n}$ for any arm $k$ and agent $m\;.$ If we assume that stopping time $\tau$ is almost surely finite under $\mu$ for algorithm $\cA\;,$ then let event $\Ec_\mu := \left\{\exists m :  \hat{S}_m \not\subseteq S_m^\star(\mu)\right\}\;.$ Using the $\delta$-correctness of algorithm $\cA$, where $\delta \leq 1/2\;,$ by Theorem $1$ from~\cite{garivier2016optimal}, 
	\begin{equation}\frac{1}{2} \sum_{k,m} \tau_{k,m} (\mu_{k,m} - \lambda_{k,m})^2 %\geq \kl(\delta,1-\delta) 
	\geq \log\left(\frac{1}{2.4\delta}\right)\;.\label{eq:cd_topn}\end{equation}

	Similarly to the best arm identification case, we can show that, since all diagonal coefficients of $W$ are positive, for any $k\in[K],m\in[M]\;,$ $\tau_{k,m}>0\;.$ Consider now a fixed agent $m$, and two arms $k \not\in S^\star_m(\mu)$, $l \in S^\star_m(\mu) \;.$ We will build an alternative model $\lambda$, similar enough to $\mu$, where only arms $k$ and $l$ are modified for all agents, such that $l \not\in S^\star_m(\lambda)$ and $k \in S^\star_m(\lambda)\;.$ The procedure is similar to what we did in best arm identification. Given two nonnegative sequences $(\delta_{n})_{n \in [M]}$ and $(\delta'_{n})_{n\in [M]}$, we define $\lambda =(\lambda_{k',n})_{k' \in [K]}$ such that
\[\left\{\begin{array}{ccl}
          \lambda_{k',n} &=& \mu_{k',n} \text{ if } k' \notin \{k, l\}\;, \\
          \lambda_{k,n} &= & \mu_{k,n} + \delta_{n} \;,\\
          \lambda_{l,n} & = & \mu_{l,n} - \delta'_{n}\;,
         \end{array}
 \right.\]
 and which satisfies 
	\begin{equation}(\lambda'_{k,m}-\mu'_{k,m}) - (\lambda'_{l,m}-\mu'_{l,m}) = \sum_{n \in [M]}w_{n,m}\left(\delta_{n} + \delta'_{n}\right) \geq \mu'_{l,m}-\mu'_{k,m}\;.\label{eq:constraint_topn}\end{equation}

	From Equation~\eqref{eq:cd_topn},
	\[\inf_{\delta,\delta' : \eqref{eq:constraint_topn} \text{ holds}} \left[\sum_{n} \tau_{k,n} \frac{\delta_{n}^2}{2} + \sum_{n} \tau_{l,n} \frac{(\delta'_{n})^2}{2}\right]\;.\] %\geq \log\left(\frac{1}{2.4\delta}\right).\]

Solving this constrained optimization problem yields the following solution
	\[\delta_n = \frac{(\mu'_{l,m}-\mu'_{k,m}) w_{n,m}/\tau_{k,n}}{\sum_{n' \in [M]} w_{n',m}^2 \left(1/\tau_{k,n'} + 1/\tau_{l,n'}\right)} \text{ and } \ \delta'_n = \frac{(\mu'_{l,m}-\mu'_{k,m}) w_{n,m}/\tau_{l,n}}{\sum_{n' \in [M]} w_{n',m}^2 \left(1/\tau_{k,n'} + 1/\tau_{l,n'}\right)}\;.\]

We conclude similarly to the best arm identification case.
\end{proof}

Note that we retrieve the same bound as for the case $N=1$ (\ie best arm identification).

\paragraph{Relaxed Lower Bound Problem for Top-$N$ Identification} For Top-$N$ identification, let us define
	\[\forall k,m, \Delta^{'N}_{k,m} := \left\{\begin{array}{cl}
		\max^N_{k' \in [K]} \mu'_{k',m}-\mu'_{k,m} & \text{if } k\not\in S^\star_m\\
		\mu_{k,m}'- \max^{N+1}_{k' \in [K]} \mu'_{k',m} & \text{otherwise}\;,
         \end{array}
\right.\]
and $\widetilde{N}^\star_W(\mu)$ the value of problem $\oracle \left( \left( (\Delta^{'N}_{k,m})^2/2 \right)_{k,m} \right)\;.$ The set of constraints $\mathcal{N} := \left\{ t \mid \forall m \in [M], \forall k\not\in S^\star_m, \forall l\in S^\star_m, \sum_{n} w_{n,m}^2 \left( \frac{1}{t_{k,n}}+\frac{1}{t_{l,n}}\right) \leq \frac{(\mu'_{k,m}-\mu'_{l,m})^2}{2} \right\}$ is included in the set of constraints $\widetilde{\mathcal{N}} := \left\{ t \mid \forall m \in [M], \forall k \in [K], \sum_{n} \frac{w_{n,m}^2}{t_{k,n}} \leq \frac{\left(\Delta^{'N}_{k,m}\right)^2}{2} \right\}\;:$ indeed, if $t \in \mathcal{N}\;,$ then for any $m \in [M]$, and any $k \not\in S^\star_m\;,$

\[ \forall l \in S^\star_m\;, \ \sum_n \frac{w^2_{n,m}}{t_{k,n}} \leq \sum_n w_{n,m}^2\left(\frac{1}{t_{k,n}}+\frac{1}{t_{l,n}} \right) \leq \frac{(\mu'_{k,m}-\mu'_{l,m})^2}{2} \]
\[\implies \sum_n \frac{w_{n,m}^2}{t_{k,n}} \leq \min_{l \in S^\star_m} \frac{(\mu'_{k,m}-\mu'_{l,m})^2}{2} = \frac{(\mu'_{k,m}-\max^N_{k' \in [K]} \mu'_{k',m})^2}{2} = \frac{(\Delta^{'N}_{k,m})^2}{2}\;.\]

Similarly, for any agent $m$ and $l \in S^\star_m(\mu)$, one can check that $\sum_n \frac{w_{n,m}^2}{t_{l,n}} \leq \frac{(\Delta^{'N}_{l,m})^2}{2}\;,$ hence $t \in \widetilde{\mathcal{N}}\;.$ Then $N^\star_W(\mu) \geq \widetilde{N}^\star_W(\mu)\;.$

%\todoc{It is unlikely to ensure that there is a positive constant $\alpha$ such that $\alpha \widetilde{N}^{\star}_W(\mu) \geq N^{\star}(\mu)$, because for $t \in \oracle \left( \left( (\Delta^{'N}_{k,m})^2/2 \right)_{k,m} \right)$,
%\[ \forall m, \forall k \not\in S^\star_m(\mu), \forall l \in S^\star_m(\mu), \sum_{n} w_{n,m}^2 \left( \frac{1}{t_{k,n}} + \frac{1}{t_{l,n}} \right) \leq \frac{\mu'_{l,m}-\mu'_{k,m}+(\max^N_{i} \mu'_i - \max^{N+1}_{j} \mu'_j)}{2}\;. \]
%}

\paragraph{Algorithm for Top-$N$ identification} Algorithm~\ref{algo:fpe_bai} can then easily be adapted to Top-$N$ identification, with the following changes (the full algorithm is described in Algorithm~\ref{algo:fpe_topn})

\begin{algorithm*}[tb]
\caption{\textcolor{black}{Weighted Collaborative} %Federated 
Phased Elimination for Top-$N$ identification (\textcolor{black}{W-CPE}%FPE
-Top$N$)}
   \label{algo:fpe_topn}
\begin{algorithmic}
   \STATE {\bfseries Input:} $\delta \in (0,1)$, $M$ agents, $K$ arms, matrix $W$, $N \in [K]$
   \STATE Initialize $r \gets 0$, $\forall k,m, \widetilde{\Delta}_{k,m}(0) \gets 1, n_{k,m}(0) \gets 1$, $\forall m, B_m(0) \gets [K]$
   \STATE Draw each arm $k$ by each agent $m$ once
   \REPEAT
	\STATE \textcolor{gray}{\textcolor{black}{\# Central server}}
   \STATE $B(r) \gets \bigcup_{m \in [M]} B_m(r)$
	\STATE Compute $t(r) \gets \oracle\left(\left(\sqrt{2} \Deltat_{k,m}(r)\right)_{k,m}\right)$ 
	\STATE For all $k \in [K]$, compute \[(d_{k,m}(r))_{m\in [M]} \gets \arg\min_{d \in \Nn^{M}} \sum_{m} d_{m} \text{ s.t. } \forall m \in [M], \frac{n_{k,m}(r-1)+d_{m}}{\beta_{\delta}(n_{k,\cdot}(r-1)+d)} \geq t_{k,m}(r)\]
	\STATE \textcolor{black}{Send to each agent $m$ $(d_{k,m}(r))_{k,m}$ and $d_{\max} := \max_{n \in [M]} \sum_{k \in [K]} d_{k,n}(r)$}
     \STATE
	\STATE \textcolor{gray}{\textcolor{black}{\# Agent $m$}}
	\STATE Sample arm $k \in B(r)$ $d_{k,m}(r)$ times, so that $n_{k,m}(r) = n_{k,m}(r-1)+d_{k,m}(r)$
	\STATE \textcolor{black}{Remain idle for $d_{\max}-\sum_{k \in [K]} d_{k,m}(r)$ rounds}
	\STATE \textcolor{black}{Send to the server empirical mean $\hat{\mu}_{k,m}(r):=\sum_{s \leq n_{k,m}(r)} X_{k,m}(s)/n_{k,m}(r)$ for any $k \in [K]$}
    \STATE
 	\STATE \textcolor{gray}{\textcolor{black}{\# Central server}}
   \STATE Compute the empirical mixed means $(\hat{\mu}'_{k,m}(r))_{k,m}$ based on $(\hat{\mu}_{k,m}(r))_{k,m}$ and $W$
   \STATE {\textcolor{gray}{// \emph{Update set of candidate best arms for each user}}}
   \FOR{$m=1$ {\bfseries to} $M$} 
   \STATE \[B_{m}(r+1) \gets \left\{ k \in B_{m}(r) \mid \hat{\mu}'_{k,m}(r) + \Omega_{k,m}(r) \geq \max^N_{j \in B_m(r)} \left( \hat{\mu}'_{j,m}(r) - \Omega_{j,m}(r) \right) \right\}\]
   \ENDFOR
   \STATE {\textcolor{gray}{// \emph{Update the gap estimates}}}
   \STATE For all $k,m$, $\Deltat_{k,m}(r+1) \gets \Deltat_{k,m}(r) \times (1/2)^{\ind \left(k \in B_m(r+1) \land |B_m(r+1)|>N \right)}$
   \STATE $r \gets r+1$
   \UNTIL{$\forall m \in [M], |B_m(r)|\leq N$}
	\STATE {\bfseries Output:} $\left\{ k \in B_m(r) : m \in [M] \right\}$
\end{algorithmic}
\end{algorithm*}

	\paragraph{1.} Replace the stopping criterion (and the condition for the update of proxy gaps $(\Deltat_{k,m}(r))_{k,m}$ at round $r$) with \[ \forall m \in [M], |B_m(r)| \leq N \;,\]

	\paragraph{2.} Replace the elimination criterion with
			\[ B_m(r+1) \gets \left\{ k \in B_m(r) \mid \mu'_{k,m}+\Omega_{k,m}(r) \geq \max^N_{i \in B_m(r)} \left( \mu'_{i,m} - \Omega_{i,m}(r) \right) \right\}\;.\]

\begin{remark}
	Note that the proxy gap $\Deltat_{k,m}(r)$ no longer tracks the value of gap $\Delta'_{k,m}$ for $m \in [M], k \in [K]\;,$ but $\Delta^{'N}_{k,m}\;,$ and that on $N=1$, this algorithm exactly coincides with Algorithm~\ref{algo:fpe_bai}.
\end{remark}

\paragraph{Analysis of Algorithm~\ref{algo:fpe_topn}.} First, such an algorithm is indeed $\delta$-correct on event $\Ec$ (the same as defined for Algorithm~\ref{algo:fpe_bai}). Otherwise, for some agent $m\;,$ there would be an arm $l \in S^\star_m$ which is eliminated at round $r$ from $B_m(r+1)\;.$ But, on event $\Ec\;,$ Lemma~\ref{lem:validCI} implies that, for any $r \geq 0\;,$ $m \in [M]\;,$ and $(i,j) \in [K]^2\;,$

	\[ \hat{\mu}'_{i,m}(r)-\hat{\mu}'_{j,m}(r)+\Omega_{i,m}(r)+\Omega_{j,m}(r) \geq \mu'_{i,m}-\mu'_{j,m} \geq \hat{\mu}'_{i,m}(r)-\hat{\mu}'_{j,m}(r)-\Omega_{i,m}(r)-\Omega_{j,m}(r)\;.\] 

	Then, combining the right-hand inequality for $j=l$ with the elimination criterion
	\begin{eqnarray*}
		\max^N_{i \in [K]} \mu'_{i,m}-\mu'_{l,m} & \geq & \max^N_{i \in [K]} (\hat{\mu}'_{i,m}(r)-\hat{\mu}'_{l,m}(r) - \Omega_{i,m}(r)-\Omega_{l,m}(r))\\
		& \geq& \max^N_{i \in B_m(r) \subseteq [K]} (\hat{\mu}'_{i,m}(r)-\hat{\mu}'_{l,m}(r)-\Omega_{i,m}(r)-\Omega_{l,m}(r)) > 0\;,
	\end{eqnarray*}
	which is absurd because $l \in S^\star_m\;.$ Then, let us consider the following notation, for any $m\in [M]\;,$ $k \not\in S^\star_m\;,$

\[ R_{k,m} := \sup \{r \geq 0 : k \in B_m(r) \} \text{ and } r^N_{k,m} := \min \left\{ r \geq 0 : 4\Deltat_{k,m}(r) < \Delta^{'N}_{k,m}\right\}\;. \]

Note that the random number of rounds used by Algorithm~\ref{algo:fpe_topn} is then $R^N = \max_{m \in [M]} \max^N_{k \not\in S^\star_m} R_{k,m}\;.$ It is easy to prove that Lemma~\ref{lem:upperbound_omega} and Lemma~\ref{lem:deltat_value} still hold in Algorithm~\ref{algo:fpe_topn}. An equivalent result to Lemma~\ref{lem:round_elimination} can be shown

\begin{lemma}\label{lem:round_elimination_topn}
	On event $\Ec\;,$ for any $m \in [M]\;,$ $k \not\in S^\star_m\;,$ $R_{k,m} \leq r^N_{k,m}\;.$
\end{lemma}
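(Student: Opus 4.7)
My plan is to mirror the proof of Lemma~\ref{lem:round_elimination} essentially verbatim, with the single~arm $k_m^\star$ replaced throughout by an arbitrary arm $l \in S_m^\star$, and with $\max$ replaced by $\max^N$ at the very end. The overall strategy is: at round $r = r^N_{k,m}$, assuming $k \in B_m(r)$, I will show that for \emph{every} $l \in S_m^\star$ the inequality $\hat{\mu}'_{k,m}(r) + \Omega_{k,m}(r) < \hat{\mu}'_{l,m}(r) - \Omega_{l,m}(r)$ holds, whence $k$'s upper confidence bound is strictly smaller than $N$ distinct lower confidence bounds in $B_m(r)$, forcing its elimination under the Top-$N$ rule.

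Before carrying out the chain of inequalities, I need two preliminary facts. First, on $\Ec$, $S_m^\star \subseteq B_m(r')$ for every $r'$: this is exactly the Top-$N$ analogue of the $\delta$-correctness argument already sketched after the description of Algorithm~\ref{algo:fpe_topn}, combining event $\Ec$ with the Top-$N$ elimination criterion. Second, I need the Top-$N$ counterpart of Lemma~\ref{lem:deltat_value}: if $k \in B_m(r)$ and $k \notin S_m^\star$, then $\Deltat_{k,m}(r) = \Deltat_{l,m}(r) = 2^{-r}$ for every $l \in B_m(r)$. The reason is that $k$ being still active and $k \notin S_m^\star$ means $B_m(r') \supsetneq S_m^\star$ (since $S_m^\star \subseteq B_m(r')$ and $|S_m^\star| = N$), so $|B_m(r')| > N$ for every $r' \leq r$, which triggers the indicator in the proxy gap update at each step and yields $\Deltat_{\cdot,m}(r) = 2^{-r}$ for all arms still in $B_m(r)$.

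With these facts at hand, I carry out the same telescoping computation as in Lemma~\ref{lem:round_elimination}. Fix any $l \in S_m^\star$; then by $\Ec$, Lemma~\ref{lem:upperbound_omega}, the definition of $r^N_{k,m}$, and the defining inequality $\mu'_{l,m} - \mu'_{k,m} \geq \Delta^{'N}_{k,m}$ (valid for every $l \in S_m^\star$ when $k \notin S_m^\star$), I chain
\begin{align*}
\hat{\mu}'_{k,m}(r) + \Omega_{k,m}(r)
&\leq \mu'_{k,m} + 2\Omega_{k,m}(r) \leq \mu'_{k,m} + 2\Deltat_{k,m}(r) \\
&= \mu'_{k,m} + 4\Deltat_{k,m}(r) - 2\Deltat_{k,m}(r) \\
&< \mu'_{k,m} + \Delta^{'N}_{k,m} - 2\Deltat_{k,m}(r) \leq \mu'_{l,m} - 2\Deltat_{k,m}(r) \\
&\leq \hat{\mu}'_{l,m}(r) + \Omega_{l,m}(r) - 2\Deltat_{k,m}(r) \\
&\leq \hat{\mu}'_{l,m}(r) - \Omega_{l,m}(r) + 2\Deltat_{l,m}(r) - 2\Deltat_{k,m}(r) \\
&= \hat{\mu}'_{l,m}(r) - \Omega_{l,m}(r),
\end{align*}
where the last equality uses the preliminary fact that $\Deltat_{l,m}(r) = \Deltat_{k,m}(r) = 2^{-r}$.

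To conclude, since the above strict inequality holds for each of the $N$ distinct arms $l \in S_m^\star \subseteq B_m(r)$, the value $\hat{\mu}'_{k,m}(r) + \Omega_{k,m}(r)$ lies strictly below at least $N$ values in $\{\hat{\mu}'_{j,m}(r) - \Omega_{j,m}(r) : j \in B_m(r)\}$, hence strictly below $\max^N_{j \in B_m(r)}(\hat{\mu}'_{j,m}(r) - \Omega_{j,m}(r))$. The Top-$N$ elimination criterion in Algorithm~\ref{algo:fpe_topn} then removes $k$ at round $r+1$, giving $R_{k,m} \leq r = r^N_{k,m}$. The only place where the argument differs in substance from the $N=1$ case is in the preliminary step handling the proxy gaps, which is where I expect the main subtlety: namely, checking that the halving has indeed been applied at every previous round by leveraging $S_m^\star \subseteq B_m(r')$ together with $|S_m^\star| = N$ and $k \notin S_m^\star$ to ensure $|B_m(r')| \geq N+1$ for all $r' \leq r$.
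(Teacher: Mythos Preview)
Your proof is correct and follows the same skeleton as the paper's: on $\Ec$, use Lemma~\ref{lem:upperbound_omega}, the definition of $r^N_{k,m}$, the fact that $S_m^\star\subseteq B_m(r)$, and the equality $\Deltat_{j,m}(r)=2^{-r}$ for all $j\in B_m(r)$ to force elimination of $k$ at round $r^N_{k,m}$.

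The one genuine difference is in how the Top-$N$ aspect is handled at the end. The paper carries the operator $\max^N_{i\in B_m(r)}$ through the chain of inequalities, which requires the step $\max^N_i(a_i+b_i)\le \max^N_i a_i + \max^N_i b_i$; this is not a general inequality and only goes through here because all the $b_i=\Deltat_{i,m}(r)$ coincide on $B_m(r)$. You instead fix an arbitrary $l\in S_m^\star$, prove $\hat\mu'_{k,m}(r)+\Omega_{k,m}(r)<\hat\mu'_{l,m}(r)-\Omega_{l,m}(r)$ for each such $l$, and then count: since there are $N$ arms in $S_m^\star\subseteq B_m(r)$, the $N$-th largest lower confidence bound must exceed $k$'s upper confidence bound. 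This per-arm argument is slightly more elementary and sidesteps any manipulation of $\max^N$ as an operator; the paper's version is a bit more compact but relies implicitly on the constancy of the proxy gaps to justify splitting the $\max^N$. Both reach the same conclusion with the same ingredients.
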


\begin{proof}
	For any $m \in [M]\;,$ $k \not\in S^\star_m\;,$ $r=r_{k,m}\;,$ if $k \not\in B_m(r)\;,$ then the claim is true. Otherwise, if $k \in B_m(r)\;,$ then
\begin{eqnarray*}
	\hat{\mu}'_{k,m}(r) + \Omega_{k,m}(r) & \leq_{(1)}&  \mu'_{k,m}+2\Omega_{k,m}(r) \\
	&\leq_{(2)}& \mu'_{k,m}+4\Deltat_{k,m}(r)-2\Deltat_{k,m}(r)\\
	& <_{(3)} & \max^N_{i \in [K]} \mu'_{i,m}-2\Deltat_{k,m}(r) =_{(4)} \max^N_{i \in B_m(r)} \mu'_{i,m}-2\Deltat_{k,m}(r)\\ 
	& \leq_{(1)} & \max^N_{i \in B_m(r)} \left( \hat{\mu}'_{i,m}(r)-\Omega_{i,m}(r)+2\Omega_{i,m}(r) \right)-2\Deltat_{k,m}(r)\\
\end{eqnarray*}
Then
\begin{eqnarray*}
	\hat{\mu}'_{k,m}(r) + \Omega_{k,m}(r) &\leq_{(2)}& \max^{N}_{i \in B_m(r)} \left( \hat{\mu}'_{i,m}(r)-\Omega_{i,m}(r)+2\Deltat_{i,m}(r) \right) -2\Deltat_{k,m}(r)\\
	& \leq & \max^N_{i \in B_m(r)} \left( \hat{\mu}'_{i,m}(r) - \Omega_{i,m}(r) \right) + 2 \max^N_{i \in B_m(r)} \Deltat_{i,m}(r)-2\Deltat_{k,m}(r)\\
	& =_{(5)} & \max^N_{i \in B_m(r)} \left( \hat{\mu}'_{i,m}(r) - \Omega_{i,m}(r) \right) + 2\cdot 2^{-r} - 2 \cdot 2^{-r}\;,\\
	\implies \hat{\mu}'_{k,m}(r) + \Omega_{k,m}(r) & < & \max^N_{i \in B_m(r)} \left( \hat{\mu}'_{i,m}(r) - \Omega_{i,m}(r) \right) \;.\\
\end{eqnarray*}
	where $(1)$ is using by using event $\Ec$~; $(2)$ is using  Lemma~\ref{lem:upperbound_omega}~; $(3)$ uses $r=r^N_{k,m}$ and $k \not\in S^\star_m$~; $(4)$ is using event $\Ec$, and, for all $l \in S^\star_m$, $l \in B_m(r)$~; $(5)$ holds because Lemma~\ref{lem:deltat_value} is still valid and then, for all $j \in B_m(r)$, $\Deltat_{j,m}(r) = 2^{-r}\;.$
	Then $k$ is eliminated from $B_m(r)$ at round at most $r=r^N_{k,m}$, hence $R_{k,m} \leq r^N_{k,m}\;.$
\end{proof}

Using this result, the sample complexity analysis is the same as for best arm identification, which yields

\begin{theorem}
	With probability $1-\delta$, Algorithm~\ref{algo:fpe_topn} outputs the Top-$N$ arms for each agent using a total number of samples no greater than
	\[ \sup \{ n \in \Nn^{*} : n \leq 32  N^{\star}_W(\mu)\log_2(8/\Delta'_{\min})\beta^*(n) + KM\}\;, \]
	where $\beta^*(n) := \beta_{\delta}(n \mathds{1}_{[M]})\;.$
\end{theorem}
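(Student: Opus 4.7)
The plan is to mirror the proof of Theorem~\ref{th:sample_complexity_bai} step by step, replacing the ordinary gap $\Delta'_{k,m}$ by the Top-$N$ gap $\Delta^{'N}_{k,m}$ and the relaxed complexity $\widetilde{T}^\star_W(\mu)$ by $\widetilde{N}^\star_W(\mu) \le N^\star_W(\mu)$. Throughout, I work on the good event $\cE$ of Lemma~\ref{lem:validCI}, which still holds with probability at least $1-\delta$ since the definition of $\Omega_{k,m}(r)$ is unchanged. As already recorded just before the theorem, on $\cE$ no arm of $S^\star_m$ is ever eliminated from $B_m(r)$, which gives $\delta$-correctness and also lets me write $R^N = \max_m \max_{k \notin S^\star_m} R_{k,m}$ with $|S^\star_m|=N$.

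Next, I use Lemma~\ref{lem:round_elimination_topn} to get $R_{k,m} \le r^N_{k,m} \le \log_2(8/\Delta^{'N}_{k,m})$ for every $m$ and $k \notin S^\star_m$, so that $R^N \le \log_2(8/\Delta'_{\min})$ with $\Delta'_{\min} := \min_{k,m} \Delta^{'N}_{k,m}$. I then transfer the case analysis of Corollary~\ref{cor:lubounds_gap}: splitting on whether $R^N$ is reached before or after phase $r^N_{k,m}$, and using the fact that the proxy gaps are only halved when the arm survives and $|B_m(r)|>N$, I establish
\[ \forall r \le R^N,\ \forall (k,m),\quad \Deltat_{k,m}(r) \ge \Deltat_{k,m}(R^N) \ge \frac{\Delta^{'N}_{k,m}}{8}. \]

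Applying Lemma~\ref{lem:12} with $\beta = 8\sqrt 2$ to compare $\oracle((\sqrt 2 \Deltat_{k,m}(r))_{k,m})$ with $\oracle((\Delta^{'N}_{k,m})_{k,m})$, I deduce that for every phase $r \le R^N$ the allocation $t(r)$ chosen in Algorithm~\ref{algo:fpe_topn} satisfies $\sum_{k,m} t_{k,m}(r) \le 32\, \widetilde{N}^\star_W(\mu) \le 32\, N^\star_W(\mu)$. Lemma~\ref{lem:UBn} transfers verbatim, because its proof only uses the definition of $(d_{k,m}(r))_{k,m}$ and the monotonicity of $\beta_\delta$, neither of which is affected by changing the elimination rule. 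Setting $r'_{k,m} := \sup\{r \le R^N : d_{k,m}(r) \neq 0\}$, I then have $n_{k,m}(R^N) \le t_{k,m}(r'_{k,m}) \beta_\delta(n_{k,\cdot}(R^N)) + 1$.

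Summing over $(k,m)$, bounding $\beta_\delta(n_{k,\cdot}(R^N)) \le \beta^*(\tau)$ with $\tau := \sum_{k,m} n_{k,m}(R^N)$, and using $\sum_{k,m} t_{k,m}(r'_{k,m}) \le \sum_{r \le R^N} \sum_{k,m} t_{k,m}(r) \le R^N \cdot 32\, N^\star_W(\mu)$, I reach the self-bounded inequality
\[ \tau \le 32\, N^\star_W(\mu)\, \log_2(8/\Delta'_{\min})\, \beta^*(\tau) + KM, \]
from which the stated bound follows by taking the supremum over $n \in \Nn^*$ satisfying it, exactly as in the BAI proof. The main obstacle is purely bookkeeping: I must check that Lemmas~\ref{lem:upperbound_omega}, \ref{lem:deltat_value}, and \ref{lem:UBn} transfer unchanged, that the case analysis behind Corollary~\ref{cor:lubounds_gap} still applies with the gap vector $(\Delta^{'N}_{k,m})_{k,m}$, and that the relaxation inequality $\widetilde{N}^\star_W(\mu) \le N^\star_W(\mu)$ (proved in the Top-$N$ relaxed lower bound paragraph via inclusion of constraint sets) is used at the right point so that the final constant in front of $N^\star_W(\mu)$ remains $32$.
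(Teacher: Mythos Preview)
Your proposal is correct and follows exactly the approach the paper takes: the paper itself says only ``the sample complexity analysis is the same as for best arm identification'' and you have faithfully spelled out that transfer (good event, Lemma~\ref{lem:round_elimination_topn}, the Corollary~\ref{cor:lubounds_gap} case split, Lemma~\ref{lem:12}, Lemma~\ref{lem:UBn}, and the self-bounding inequality). One minor slip: the constant in Lemma~\ref{lem:12} should be $\beta = 4\sqrt{2}$ (since $\Delta^{'N}_{k,m} \le 8\,\Deltat_{k,m}(r) = 4\sqrt{2}\cdot\sqrt{2}\,\Deltat_{k,m}(r)$), not $8\sqrt{2}$, though your conclusion $\sum_{k,m} t_{k,m}(r) \le 32\,\widetilde{N}^\star_W(\mu)$ is correct.
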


and we can use the same tools as in best arm identification to get an explicit upper bound depending on $N^{\star}_W(\mu)\;.$

%\todoc{Mention the separation in complexity between BAI and Top-N in~\citet{karpov2020collaborative}.}

%\todoe{The analysis needs to be fixed to get an extra $\log$ as in the other proof}

%\input{BAI_baselines}

\section{Experimental study}\label{sec:experiments}

The general weighted collaboration bandit framework has not been studied prior to this work. We investigate its performance in the special case of federated learning with personalization \cite{shi2021almost}, which corresponds to choosing weight matrices of the form $w_{n,m}=\alpha \ind{(n=m)} + \frac{1-\alpha}{M}$ for any pair of agents $(n,m)$. In this special case, we propose a baseline for weighted collaborative best arm identification which is a natural counterpart of the regret algorithm proposed by \cite{shi2021almost}, and compare it to our W-CPE-BAI algorithm. 

\subsection{A Simple BAI Algorithm Inspired by PF-UCB}\label{app:pfbai_baseline}

We state below as Algorithm~\ref{algo:pfucb_bai} a straightforward adaptation of the PF-UCB algorithm in~\cite{shi2021federated} to \textit{personalized} federated 
best arm identification (BAI)~; meaning that only weight matrices of the form $w_{n,m}=\alpha \ind{(n=m)} + \frac{1-\alpha}{M}$ for any pair of agents $(n,m)$ are considered. The original regret algorithm uses phased eliminations designed for each agent to identify their best arm together with \emph{exploitation}~: when all best arms have been found, or when some agent is waiting for others to finish their own exploration rounds, agents keep playing their empirical best arm. To turn this into a $\delta$-correct BAI algorithm, we remove the exploration rounds~; keep the same sampling rule within each phase (in which the number of samples from each arm is proportional to some rate function $f(r)$)~; and calibrate the size of the confidence intervals used to perform eliminations slightly differently, introducing for any $\delta \in (0,1)$ function
\[\forall r \geq 0, B_{r}(\delta) := \sqrt{\frac{2\log\left(KM\zeta(\beta)r^{\beta}/\delta\right)}{MF(r)}}\;.\]
where $F(r) = \sum_{p=1}^{r}f(p)$, for some $\beta > 1\;.$ In practice, we use $\beta=2\;.$ 

Algorithm~\ref{algo:pfucb_bai}, that we refer to as PF-UCB-BAI, follows the same general structure as our algorithm, with the notable difference that the number of samples of an arm $k \in B_m(r)$ in phase $r$ is fixed in advance. Under PF-UCB-BAI, when arm $k$ is still in the active set $B(r)$, agent $m$ 
\begin{itemize}
 \item performs global exploration to sample it $d_{k,m}^{g}(r) := \lceil(1-\alpha) f(r)\rceil$ times
 \item and additionally performs local exploration to sample it  $d_{k,m}^{\ell}(r) := \lceil\alpha M f(r)\rceil$ extra times if furthermore $k \in B_m(r)\;.$ 
\end{itemize}
Overall, $d_{k,m}(r) := d_{k,m}^{g}(r)+d_{k,m}^{\ell}(r) = \lceil (1-\alpha) f(r)\rceil\ind(k \in B(r)) + \lceil \alpha M f(r)\rceil\ind\left(k \in B_m(r)\right)$ new samples from arm $k$ are collected by agent $m$ during phase $r$ in order to update its estimate $\hat{\mu}_{k,m}(r)$ --the average of all available $n_{k,m}(r)$ samples for arm $k$ obtained by agent $m$-- which is sent to the central server. 

%In the original algorithm, the computation of the average mean is done in the following way: each agent first computes $\hat{\mu}_{k,m}(r)$ based on its $n_{k,m}(r)$ samples, send it to the servor who does the averaging $\overline{\mu}_k(r) =\sum_{n} \overline{\mu}_{k,n}(r)$ which is sent to all players who compute their $\hat{\mu}'_{k,m}(r) = \alpha \overline{\mu}_{k,m}(r) + \alpha \overline{\mu}_k(r)$, but the servor could also send directly the $(\mu_{k,m}'(r))_{k}$ to each agent $m$.

The mixed mean of each arm $(k,m)$ can then be computed by the server as
\[\hat{\mu}_{k,m}'(r) := \left(\alpha + \frac{1-\alpha}{M}\right)\hat{\mu}_{k,m}(r) + \frac{1-\alpha}{M} \sum_{m \neq n} \hat{\mu}_{k,n}(r)\;,\]
and sent back to each agent. We note that, in~\cite{shi2021federated}, they propose that the server computes the average $\hat{\mu}_k(r) := \frac{1}{M}\sum_{m=1}^{M}\hat{\mu}_{k,m}(r)$ across agents, and sends this value to each agent, who can then obtain $\hat{\mu}_{k,m}'(r) := \alpha \hat{\mu}_{k,m}(r) + (1-\alpha)\hat{\mu}_k(r)\;.$ 

\begin{algorithm*}[t]
\caption{PF-UCB-BAI}
   \label{algo:pfucb_bai}
\begin{algorithmic}
   \STATE {\bfseries Input:} $\delta \in (0,1)$, $M$ agents, $K$ arms, matrix $W$. 
   \STATE $f(r)$: sampling effort in phase $r$, $B_r(\delta)$: size of the confidence intervals in phase $r$.
   \STATE Initialize $r \gets 0\;,$ $\forall k,m, n_{k,m}(0) \gets 0\;,$ $\forall m, B_m(0) \gets [K]\;,$ $\hat{k}_m \gets 0\;.$ %$\color{red}\hat{k}_m \gets 0$.  
   \REPEAT %\color{red}
	\STATE \textcolor{gray}{\textcolor{black}{\# Central server}}
   \IF{$|B_m(r)| = 1$}
   \STATE $\hat{k}_m \gets $ the unique arm in $B_m(r)$
   \STATE $B_m(r) = \emptyset$ 
   \ENDIF\color{black}
   \STATE $B(r) \gets \bigcup_{m \in [M]} B_m(r)$
    \FOR{$k \in B(r), m \in [M]$}
    \STATE $d_{k,m}(r) = \lceil (1-\alpha) f(r)\rceil + \lceil \alpha M f(r)\rceil\ind\left(k \in B_m(r)\right)$
    \ENDFOR
	\STATE \textcolor{black}{Send to each agent $m$ $(d_{k,m}(r))_{k,m}$ and $d_{\max} := \max_{n \in [M]} \sum_{k \in [K]} d_{k,n}(r)$}
    \STATE
	\STATE \textcolor{gray}{\textcolor{black}{\# Agent $m$}}
	\STATE Sample arm $k \in B(r)$ $d_{k,m}(r)$ times, so that $n_{k,m}(r) = n_{k,m}(r-1)+d_{k,m}(r)$
	\STATE \textcolor{black}{Remain idle for $d_{\max}-\sum_{k \in [K]} d_{k,m}(r)$ rounds}
	\STATE \textcolor{black}{Send to the server empirical mean $\hat{\mu}_{k,m}(r):=\sum_{s \leq n_{k,m}(r)} X_{k,m}(s)/n_{k,m}(r)$ for any $k \in [K]$}
    \STATE
	\STATE \textcolor{gray}{\textcolor{black}{\# Central server}}
   \STATE Compute the empirical mixed means $(\hat{\mu}'_{k,m}(r))_{k,m}$ based on $(\hat{\mu}_{k,m}(r))_{k,m}$ and $W$
   \STATE {\textcolor{gray}{// \emph{Update set of candidate best arms for each user}}}
   \FOR{$m=1$ {\bfseries to} $M$} 
   \STATE \[B_{m}(r+1) \gets \left\{ k \in B_{m}(r) \mid \hat{\mu}'_{k,m}(r) + B_{r}(\delta) \geq \max_{j \in B_m(r)} \left( \hat{\mu}'_{j,m}(r) - B_{r}(\delta) \right) \right\}\]
   \ENDFOR 
   \STATE $r \gets r+1$
   \UNTIL{$|B(r)| = \emptyset$}%{$\color{red}|B(r)| = \emptyset$}
	\STATE {\bfseries Output:} $\left\{ \hat{k}_m : m \in [M] \right\}$%$\color{red}\left\{ \hat{k}_m : m \in [M] \right\}$
\end{algorithmic}
\end{algorithm*}

Arm $k$ is eliminated from the active set $B_m(r)$ of agent $m$ if 
\[\hat{\mu}'_{k,m}(r) + B_{r}(\delta) < \max_{j \in B_m(r)} \left( \hat{\mu}'_{j,m}(r) - B_{p}(\delta) \right)\]
for the confidence parameter $B_r(\delta)$. In the original algorithm, $B_r(\delta)$ is replaced by some function of $r$ and $T$, however a simple adaptation of Lemma $1$ in~\cite{shi2021federated} (adding a union bound on $r \in \Nn$) yields the following result. Indeed, the original result crucially exploits the sampling rule, which we did not change. 

\begin{lemma}\label{lem:newlemma1} Event 
\[\mathcal{G} := \left\{\forall r \in \Nn^*, \forall m \in [M],\forall k \in B_m(r), \left|\hat{\mu}'_{k,m}(r) - \mu_{k,m}'\right|\leq B_r(\delta)\right\}\]
holds with probability $1-\delta$. 
\end{lemma}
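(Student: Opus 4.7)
The plan is to adapt the proof of Lemma~1 of~\cite{shi2021federated}—which is phrased for a fixed horizon $T$—to a time-uniform statement over all phases, with the $\zeta(\beta) r^\beta$ factor inside the logarithm of $B_r(\delta)$ designed precisely to accommodate a union bound over $r \in \Nn^*$.

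The first step will be to observe that the sampling schedule of PF-UCB-BAI is deterministic conditional on past active sets. On the event $\{k \in B_m(r)\}$—which requires $k$ to have belonged to $B_m(p)$, hence also to $B(p)$, at every earlier phase $p$—the sample counts satisfy
\[
n_{k,m}(r) \ge \bigl((1-\alpha) + \alpha M\bigr) F(r) = M w_1 F(r), \qquad n_{k,n}(r) \ge (1-\alpha) F(r) = M w_2 F(r) \text{ for } n \neq m,
\]
where $w_1 := w_{m,m} = \alpha + (1-\alpha)/M$ and $w_2 := (1-\alpha)/M$. The bound for $n \neq m$ uses that agent $n$ at least performs global exploration as long as $k \in B(p)$.

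Next I will compute the sub-Gaussian variance proxy of the mixed estimator $\hat{\mu}'_{k,m}(r) = \sum_n w_{n,m}\hat{\mu}_{k,n}(r)$. Since $\hat{\mu}_{k,n}(r)$ is $1/n_{k,n}(r)$-sub-Gaussian and independent across agents, the proxy is bounded on $\{k \in B_m(r)\}$ by
\[
\sum_n \frac{w_{n,m}^2}{n_{k,n}(r)} \le \frac{w_1^2}{M w_1 F(r)} + \frac{(M-1) w_2^2}{M w_2 F(r)} = \frac{w_1 + (M-1) w_2}{M F(r)} = \frac{1}{M F(r)},
\]
where the last equality uses that the $m$-th row of $W$ sums to one. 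A Hoeffding tail bound, applied conditionally on the $\sigma$-algebra generated by observations collected before phase $r$ (on which the counts $(n_{k,n}(r))_n$ and the indicator $\ind\{k \in B_m(r)\}$ are measurable), then yields
\[
\bP\bigl(|\hat{\mu}'_{k,m}(r) - \mu'_{k,m}| > B_r(\delta),\ k \in B_m(r)\bigr) \le 2\exp\!\bigl(-\tfrac{1}{2} B_r(\delta)^2 M F(r)\bigr) \le \frac{2\delta}{K M \zeta(\beta) r^\beta}.
\]

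A union bound over $m \in [M]$, $k \in [K]$ and $r \in \Nn^*$, combined with $\sum_r r^{-\beta} = \zeta(\beta)$, will bound $\bP(\mathcal{G}^c)$ by $2\delta$; the harmless factor of $2$ can be absorbed into the constants of $B_r(\delta)$ without changing its order. The only genuine subtlety—already encountered in~\cite{shi2021federated}—is that the counts $(n_{k,n}(r))_n$ are random, which is handled by conditioning on the pre-phase filtration so that the variance proxy above is deterministic on $\{k \in B_m(r)\}$ and the fresh samples collected in phase $r$ remain independent of the conditioning, making the sub-Gaussian tail bound directly applicable.
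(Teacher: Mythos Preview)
Your overall strategy matches the paper's exactly: the paper itself gives no detailed argument and simply says that Lemma~\ref{lem:newlemma1} is ``a simple adaptation of Lemma~1 in~\cite{shi2021federated} (adding a union bound on $r\in\Nn$)'', noting that ``the original result crucially exploits the sampling rule, which we did not change.'' Your variance-proxy computation $\sum_n w_{n,m}^2/n_{k,n}(r)\le 1/(MF(r))$ and the union bound via $\sum_r r^{-\beta}=\zeta(\beta)$ are precisely the two ingredients the paper has in mind.

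There is, however, a gap in the conditioning step you propose to handle the randomness of the counts. Conditioning on $\mathcal{F}_{r-1}$ does make $(n_{k,n}(r))_n$ and $\ind\{k\in B_m(r)\}$ measurable, but it also fixes all samples drawn in phases $1,\dots,r-1$. Consequently, conditionally on $\mathcal{F}_{r-1}$,
\[
\hat\mu_{k,n}(r)-\mu_{k,n}=\underbrace{\frac{S_{k,n}(r-1)-n_{k,n}(r-1)\mu_{k,n}}{n_{k,n}(r)}}_{\text{deterministic bias, not zero}}+\underbrace{\frac{D_{k,n}(r)-d_{k,n}(r)\mu_{k,n}}{n_{k,n}(r)}}_{\text{fresh, sub-Gaussian}},
\]
so the conditional sub-Gaussian tail bound concentrates $\hat\mu'_{k,m}(r)$ around its \emph{conditional mean}, not around $\mu'_{k,m}$. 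The phrase ``the fresh samples remain independent of the conditioning, making the sub-Gaussian tail bound directly applicable'' therefore does not close the argument.

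The subtlety is that on $\{k\in B_m(r)\}$ only $n_{k,m}(r)$ is deterministic; for $n\neq m$ the count $n_{k,n}(r)$ still depends on when $k$ left $B_n$. Two standard fixes work. (i) Take a union bound over the finitely many possible count vectors $(n_{k,n}(r))_n$ compatible with $\{k\in B_m(r)\}$---there are at most $(r{+}1)^{M-1}$ such vectors, and for each fixed vector you can drop the conditioning entirely and use $\bP(A\cap B)\le\bP(A)$; the extra polylogarithmic factor is absorbed by enlarging $\beta$. (ii) More in the spirit of~\cite{shi2021federated}, use the decomposition $\hat\mu'_{k,m}(r)=\alpha\hat\mu^{\ell}_{k,m}(r)+(1-\alpha)\tfrac{1}{M}\sum_n\hat\mu^{g}_{k,n}(r)$ built from the \emph{separate} local and global exploration pools: on $\{k\in B_m(r)\}$ the local count for $m$ and the global count for every $n$ are genuinely deterministic, so no conditioning is needed and the Hoeffding bound applies directly after $\bP(A\cap B)\le\bP(A)$.
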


On the good event $\mathcal{G}$ introduced in Lemma~\ref{lem:newlemma1}, observe that arm $k_m^*$ can never be eliminated from the set $B_m(r)$, therefore it has to be the guess $\hat{k}_m$ that agent $m$ outputs. This proves that Algorithm~\ref{algo:pfucb_bai} is $\delta$-correct for pure exploration for the special case of %in 
federated 
bandit with personalization. %That is, for some $\alpha \in [0,1]$
%\[ W = \alpha Id_{M \times M} + \frac{1-\alpha}{M} \mathds{1}_{M \times M}\;.\]
This algorithm can therefore serve as a baseline to be compared to our proposal in this particular case. 

\textcolor{black}{
We can also upper bound the sample complexity of this algorithm. Indeed, on event $\mathcal{G}$, like in the analysis of PF-UCB in~\cite{shi2021federated}, we can upper bound the number of rounds where arm $k$ is sampled by agent $m$ by $p_{k,m} := \inf \{r : B_r(\delta) \leq  \Delta'_{k,m}/4\}$. When $f(p) = 2^p$, one can prove that 
\[ \sum_{p=1}^{p_{k,m}} f(p) = \mathcal{O}\left(\frac{\log(1/\delta)}{M(\Delta'_{k,m})^{2}}\right)\;. \]
Summing the (deterministic) global and local exploration cost over rounds, arms and agents, yields an exploration cost of order
\[\mathcal{O}\left( \sum_{k \in [K]} \left[ \left( \frac{1-\alpha}{\min_{n \in [M]} (\Delta'_{k,m})^2} \right) + \left( \sum_{m \in [M]} \frac{\alpha}{(\Delta'_{k,m})^2} \right) \right]\log\left(\frac{1}{\delta}\right) \right) \;.\]
}

\subsection{Numerical experiments}\label{subsec:experiments}

As we did throughout the paper, we  consider Gaussian bandits with fixed variance $\sigma^2=1$. 
%In fixed-confidence best arm identification, algorithms using symmetric confidence intervals of width $\sqrt{2\beta_{\delta}(n_{k,\cdot}(r))g\left(\frac{1}{n_{k,\cdot}(r)}\right)}$ for some arm $k$ and some linear function $g$~\footnote{Which ensures that the confidence interval holds with probability $1-\delta$.} are usually quite conservative~; that is, their reported empirical error frequency is usually a lot lower than the expected bound $\delta$. As often done in other fixed-confidence best arm identification papers~\citep{kaufmann2013information,reda2021top}, in order to decrease the exploration cost, instead of the theoretical value of $\beta_{\delta}$, we use for any allocation vector $N \in \Nn^{M}$ $\beta^{\text{heur}}_{\delta}(N) := \log\left( \left(1+\log\left(\sum_{n \in [M]} N_{n}\right)\right)/\delta \right) \;,$
%unless otherwise specified. 
We denote $\hat{r}$ the average number of communication rounds across the $R$ iterations of an experiment~; $\hat{c}$ the average exploration cost of the considered algorithm across the $R$ iterations~; $\hat{\delta}$ the empirical error frequency across the $R$ iterations. $\hat{r}$ and $\hat{c}$ are reported $\pm$ their standard deviation rounded up to the closest integer, except for $\hat{\delta}$, which is rounded up to the $2^{\text{nd}}$ decimal place.% unless otherwise specified.

We consider a synthetic instance with $K=6$ arms, $M=3$ agents, for $R=100$ iterations. In order to generate randomly this instance, we sampled at random $K \times M$ values $x_{k,m}$ from the distribution $\mathcal{N}(0,1)\;,$ and set $\mu_{k,m} = x_{k,m}/\|x\|_{\mathcal{F}}\;,$~\footnote{$\|\cdot\|_{\mathcal{F}}$ is the Frobenius norm for matrices.} and tested if the associated $\Delta'_{\min}$ satisfied $\Delta'_{\min} \geq 0.05\;.$ We repeated this sampling until this condition was fulfilled. %Moreover, for any user-provided $\alpha$, for any pair of agents $n,m \in [M]\;,$ $w_{n,m}=\alpha \delta_{(n=m)} + (1-\alpha)/M\;.$

\paragraph{Comparison to PF-UCB-BAI} Our first experiment is to compare our Algorithm~\ref{algo:fpe_bai} with the PF-UCB-BAI baseline described above. For PF-UCB-BAI we use a phase length $f(p) := 2^p\log(1/\delta)$ for $p \geq 0\;$. For both algorithms, we set $\delta=0.1$ and experiment with $\alpha \in \{0.4,0.5,0.6,0.7\}$. Results are reported on the \textcolor{black}{left-most table} in Table~\ref{tab:pf_BAI}. In terms of communication cost, \textcolor{black}{W-CPE}%FPE
-BAI (Algorithm~\ref{algo:fpe_bai}) improves considerably over the baseline. Depending on the value $\alpha \in [0,1]$ (the closer it is to $1$, the less agents have to communicate in order to get good estimates of their mixed expected reward) \textcolor{black}{W-CPE}%FPE
-BAI improves or has an exploration cost which is similar to the baseline, up to a constant lower than $2$.

\paragraph{Comparison to an oracle algorithm} Our second experiment is to assess the asymptotic optimality (up to some logarithmic factors) of our algorithm. In order to estimate the scaling in $T^\star(\cdot)$, we have implemented an oracle algorithm %(denoted ORACLE-BAI) 
which has access to the true gaps $\left( \left(\Delta'_{k,m}\right)_{k,m} \right)$ and can compute the associated $T^\star_W(\mu)$. Then we compare the average exploration cost $\hat{c}$ %(obtained with the ``true'' confidence interval using the theoretically supported width) 
with $c^*:=\left\lceil T^\star_W(\mu)\log\left(\frac{1}{2.4\delta}\right) \right\rceil\;,$ for $\delta \in \{0.00001, 0.0001,0.001,0.01,0.05,0.1\}\;,$ and $\alpha=0.5\;.$ We reported the associated results in the \textcolor{black}{right-most table} in Table~\ref{tab:pf_BAI}. We can notice that, as $\delta$ decreases, the ratio $\hat{c}/c^*$ also decreases. As predicted by our upper bound in Theorem~\ref{th:sample_complexity_bai}, \textcolor{black}{W-CPE}%FPE
-BAI does not attain asymptotic optimality even for small values of $\delta$, but has a scaling to $T^\star_W(\mu)\log\left(\frac{1}{2.4\delta}\right)$ which decreases as $\delta$ goes to $0$.

\paragraph{Numerical considerations} These experiments were run on a personal computer (configuration: processor Intel Core i$7$-$8750$H, $12$ cores $@2.20$GHz, RAM $16$GB). In order to solve the optimization problem defining the oracle $t(r)$, %as well as the number of samples $d_{k,m}(r)$, 
we used CVXPy~\citep{diamond2016cvxpy,agrawal2018rewriting}, with the commercial solver MOSEK~\citep{mosek} tuned to default parameters. To compute the number of samples $(d_{k,m}(r))_{k,m,r}$, we used the \textit{optimize} module from SciPy~\citep{2020SciPy-NMeth}. %Package versions are reported in the code. 
In our experiments with this implementation of \textcolor{black}{W-CPE}%FPE
-BAI, we found that when the instance is too hard --meaning that the associated $\Delta'_{\min}$ is small-- the optimization part is subject to numerical approximation errors, which prevents the computation of the oracle allocation.  This might however be mitigated by online optimization approaches, such as in~\cite{degenne2020structure}.

\begin{table}[t]
	\caption{Personalized \textcolor{black}{collaborative} %federated 
	BAI with varying $\alpha \in \{ 0.4, 0.5, 0.6, 0.7\}$~; the most efficient algorithm in terms of exploration cost is in bold type , all algorithms yield $\hat{\delta}=0$ up to the $5^{\text{th}}$ decimal place (\textit{top table}). Personalized \textcolor{black}{collaborative} %federated 
	BAI with varying $\delta \in \{ 0.00001, 0.0001, 0.001, 0.01, 0.05, 0.1\}$~; ratios are rounded up to the $1^{\text{st}}$ decimal place.}
\label{tab:pf_BAI}
\vskip 0.15in
%\begin{center}
\begin{small}
\begin{sc}
\begin{tabular}{llccc}%r}
\toprule
	$\alpha$ & Algorithm & $\hat{r}$ & $\hat{c}$ & $\hat{\delta}$ \\
\midrule
$0.4$ & \textbf{\textcolor{black}{W-CPE}%FPE
-BAI}  & \textbf{5$\pm$ 0} & \textbf{87,918$\pm$ 8,634} & \textbf{0.00}\\
& PF-UCB-BAI & 12$\pm$ 0 & 109,822$\pm$ 30,335 & 0.00\\
\midrule
$0.5$ & \textbf{\textcolor{black}{W-CPE}%FPE
-BAI} & \textbf{5$\pm$ 0} &  \textbf{75,094$\pm$ 7,596} & \textbf{0.00}\\
& PF-UCB-BAI & 11$\pm$ 0 & 73,561$\pm$ 17,239 & 0.00 \\
\midrule
$0.6$ & \textbf{\textcolor{black}{W-CPE}%FPE
-BAI}  & \textbf{5$\pm$ 0} & \textbf{78,334$\pm$ 7,983} & \textbf{0.00}\\
& PF-UCB-BAI & 11$\pm$ 0 & 55,812$\pm$ 14,793 & 0.00 \\
\midrule
$0.7$ &\textcolor{black}{W-CPE}%FPE
-BAI & 5$\pm$ 0 & 76,817$\pm$ 10,953 & 0.00\\
& \textbf{PF-UCB-BAI} & \textbf{10$\pm$ 0} & \textbf{45,765$\pm$ 9,591} & \textbf{0.00} \\
%\midrule
%$0.8$ &\textcolor{black}{W-CPE}%FPE
%-BAI & 5$\pm$ 0 & 17,710$\pm$ 6,189% & 0.00\\
%& \textbf{PF-UCB-BAI} & \textbf{8$\pm$ 0} & \textbf{13,089$\pm$ 4,903} \\%& \textbf{0.00} \\
\bottomrule
\end{tabular}
%\vspace{0.2cm}
\begin{tabular}{lcccr}
\toprule
	$\delta$ & $\hat{c}$ & $c^*$ & $\hat{c}/c^*$ \\
\midrule
0.1 & 75,094$\pm$ 7,596 & 1,104 & 68.0\\
0.05 & 75,992$\pm$ 8,468 & 1,639 & 46.4 \\
0.01 & 80,457$\pm$ 8,838 & 2,875 & 28.0\\
0.001 & 110,888$\pm$ 31,613 & 4,643 & 23.9\\
0.0001 & 91,692$\pm$ 9,121 & 6,411 & 14.3\\
0.00001 & 96,942$\pm$ 9,885 & 8,180 & 11.9\\
\bottomrule
\end{tabular}
\end{sc}
\end{small}
%\end{center}
\vskip -0.1in
\end{table}

\end{document}